\documentclass{article}

% if you need to pass options to natbib, use, e.g.:
\PassOptionsToPackage{numbers, compress}{natbib}
% before loading neurips_2022

% ready for submission
\usepackage[final]{neurips_2022}

% to compile a preprint version, e.g., for submission to arXiv, add add the
% [preprint] option:
%     \usepackage[preprint]{neurips_2022}

% to compile a camera-ready version, add the [final] option, e.g.:
%     \usepackage[final]{neurips_2022}

% to avoid loading the natbib package, add option nonatbib:
%    \usepackage[nonatbib]{neurips_2022}

\usepackage[utf8]{inputenc} % allow utf-8 input
\usepackage[T1]{fontenc}    % use 8-bit T1 fonts
\usepackage{hyperref}       % hyperlinks
\usepackage{url}            % simple URL typesetting
\usepackage{booktabs}       % professional-quality tables
\usepackage{amsfonts}       % blackboard math symbols
\usepackage{nicefrac}       % compact symbols for 1/2, etc.
\usepackage{microtype}      % microtypography
\usepackage[table]{xcolor}         % colors
\usepackage{amssymb,amsmath,amsthm,amsopn}
\usepackage{ifxetex,ifluatex}
\usepackage{longtable,booktabs}
\usepackage{graphicx}
\usepackage{tabularx}
\usepackage{cleveref}
\usepackage{nicefrac}
\usepackage{mathtools}
\usepackage{dsfont}
\usepackage{multirow}
\usepackage{ifthen}
\usepackage{soul}
\usepackage[table]{xcolor}
\usepackage{multirow}
\usepackage{makecell}
\usepackage{caption}

\usepackage{caption}
\usepackage{subcaption}
\usepackage{wrapfig}

\newtheorem{theorem}{Theorem}
\newtheorem{proposition}[theorem]{Proposition}
\newtheorem{definition}[theorem]{Definition}
\newtheorem{lemma}[theorem]{Lemma}
\newtheorem{corollary}[theorem]{Corollary}
\newtheorem{remark}[theorem]{Remark}

\definecolor{Gray}{gray}{0.9}
\newcommand{\first}[1]{\textbf{\textcolor{red}{#1}}}
\newcommand{\second}[1]{\textbf{\textcolor{violet}{#1}}}
\newcommand{\third}[1]{\textbf{\textcolor{black}{#1}}}

\usepackage[inline]{enumitem}

\usepackage[labeled]{multibib}
\newcites{A}{References of Appendix}

\newcommand{\card}[1]{
\ifthenelse{\equal{#1}{}}{Empty.}{Nonempty.}
}

% (Easter Egg) https://open.spotify.com/track/1wGoqD0vrf7njGvxm8CEf5?si=8bd56e10005f403f

\newcommand{\WL}[1]{\ifthenelse{\equal{#1}{}}{\texttt{WL}}{\texttt{#1-WL}}}
\newcommand{\FWL}[1]{\texttt{#1-FWL}}
\newcommand{\IGN}[1]{\ifthenelse{\equal{#1}{}}{\texttt{IGN}}{\texttt{#1-IGN}}}
\newcommand{\IGNs}[1]{\ifthenelse{\equal{#1}{}}{\texttt{IGN}s}{\texttt{#1-IGN}s}}
\newcommand\SUN{\texttt{SUN}}
\newcommand{\ReIGN}[1]{\texttt{ReIGN(#1)}}

\newcommand{\onehot}[1]{\mathds{1}_{#1}}

\newcommand{\orbit}[1]{$\mathbf{o_{#1}}$}
\newcommand{\morbit}[1]{\mathbf{o}_{#1}}

\newcommand{\agg}{\mathop{\square}}

\newcommand{\cp}[4]{\boldsymbol{\kappa}^{#1:#2}_{#3:#4}\ }
\newcommand{\hemcp}[5]{\boldsymbol{\kappa}^{#1:#2}_{\nicefrac{#3:#4}{(#5)}}\ }
\newcommand{\selcp}[2]{\boldsymbol{\kappa}^{#1}_{#2}\ }

\newcommand{\mlp}[1]{\boldsymbol{\varphi}
    ^{\ifthenelse{\equal{#1}{}}{}{\boldsymbol{(#1)}}}\ }
\newcommand{\mlpcp}[5]{\boldsymbol{\varphi}
    ^{\ifthenelse{\equal{#5}{}}{}{\boldsymbol{(#5)}} #1:#2}
    _{#3:#4}\ }
\newcommand{\mlphemcp}[6]{\boldsymbol{\varphi}
    ^{\ifthenelse{\equal{#5}{}}{}{\boldsymbol{(#6)}} #1:#2}
    _{\nicefrac{#3:#4}{(#5)}}\ }
\newcommand{\selmlpcp}[3]{\boldsymbol{\varphi}
    ^{\ifthenelse{\equal{#3}{}}{}{\boldsymbol{(#3)}} #1}
    _{#2}\ }
    
\newcommand{\glue}[2]{[#1 \enspace #2]\ }
\newcommand{\sumglue}[2]{[#1 \thinspace | | \thinspace #2]\ }

\newcommand{\pool}[2]{\boldsymbol{\pi}^{#1}_{#2}\ }

\newcommand{\broad}[2]{\boldsymbol{\beta}^{#1}_{#2}\ }

\newcommand{\msgfn}[4]{\boldsymbol{\mu}^{#1}_{#2} \big ( 
    \ifthenelse{\equal{#3}{}}{}{[#3]} #4_{#1} \big )}

\newcommand{\revision}[1]{{\color{black}#1}}
\newcommand{\camera}[1]{{\color{black}#1}}
\newtheorem{conjecture}{Conjecture}

\title{Understanding and Extending Subgraph GNNs by Rethinking Their Symmetries}

% The \author macro works with any number of authors. There are two commands
% used to separate the names and addresses of multiple authors: \And and \AND.
%
% Using \And between authors leaves it to LaTeX to determine where to break the
% lines. Using \AND forces a line break at that point. So, if LaTeX puts 3 of 4
% authors names on the first line, and the last on the second line, try using
% \AND instead of \And before the third author name.

\author{%
Fabrizio Frasca\thanks{Equal contribution. Author ordering determined by coin flip.} \\
Imperial College London \& Twitter \\
\texttt{ffrasca@twitter.com}
\And
Beatrice Bevilacqua$^{*}$ \\
Purdue University \\
\texttt{bbevilac@purdue.edu}
\And
Michael M. Bronstein \\
University of Oxford \& Twitter \\
\texttt{mbronstein@twitter.com}
\And
Haggai Maron \\
NVIDIA Research \\
\texttt{hmaron@nvidia.com}
  % examples of more authors
  % \And
  % Coauthor \\
  % Affiliation \\
  % Address \\
  % \texttt{email} \\
  % \AND
  % Coauthor \\
  % Affiliation \\
  % Address \\
  % \texttt{email} \\
  % \And
  % Coauthor \\
  % Affiliation \\
  % Address \\
  % \texttt{email} \\
  % \And
  % Coauthor \\
  % Affiliation \\
  % Address \\
  % \texttt{email} \\
}

\begin{document}

\maketitle

\begin{abstract}
Subgraph GNNs are a recent class of expressive Graph Neural Networks (GNNs) which model graphs as collections of subgraphs. So far, the design space of possible Subgraph GNN architectures as well as their basic theoretical properties are still largely unexplored. In this paper, we study the most prominent form of subgraph methods, which employs node-based subgraph selection policies such as ego-networks or node marking and deletion. We address two central questions: (1) \emph{What is the upper-bound of the expressive power of these methods?} and (2) \emph{What is the family of equivariant message passing layers on these sets of subgraphs?}. 
Our first step in answering these questions is a novel symmetry analysis which shows that modelling the symmetries of node-based subgraph collections requires a significantly smaller symmetry group than the one adopted in previous works. This analysis is then used to establish a link between Subgraph GNNs and Invariant Graph Networks (IGNs). 
We answer the questions above by first bounding the expressive power of subgraph methods by 3-WL, and then proposing a general family of message-passing layers for subgraph methods that generalises all previous node-based Subgraph GNNs. Finally, we design a novel Subgraph GNN dubbed \SUN{}, which theoretically unifies previous architectures while providing better empirical performance on multiple benchmarks.
\end{abstract}

\section{Introduction}
% motivation

Message Passing Neural Networks (MPNNs) are arguably the most commonly used version of Graph Neural Networks (GNNs). 
The limited expressive power of MPNNs~\citep{morris2019weisfeiler,xu2019how} has led to a plethora of works aimed at designing expressive GNNs while maintaining the simplicity and scalability of MPNNs~\citep{bouritsas2022improving,morris2021weisfeiler,sato2020survey,Li2022}.
Several recent studies have proposed a new class of such architectures~\citep{cotta2021reconstruction,zhang2021nested, bevilacqua2022equivariant,zhao2022from,papp2021dropgnn,papp2022theoretical}, dubbed \emph{Subgraph GNNs}, which apply MPNNs to collections (`bags') of subgraphs extracted from the original input graph and then aggregate the resulting representations. Subgraphs are selected according to a predefined policy; in the most popular ones, each subgraph is tied to a specific node in the original graph, for example by deleting it or extracting its local ego-network. Subgraph GNNs have demonstrated outstanding empirical performance, with state-of-the-art results on popular benchmarks like the ZINC molecular property prediction~\citep{zhao2022from,bevilacqua2022equivariant}.

% challenge and previous work
While offering great promise, it is fair to say that we still lack a full understanding of Subgraph GNNs. Firstly, on the theoretical side, it is known that subgraph methods are strictly stronger than the Weisfeiler-Leman (WL) test~\citep{weisfeiler1968reduction,morris2021weisfeiler}, but an upper-bound on their expressive power is generally unknown. Secondly, on a more practical level, Subgraph GNN architectures differ considerably in the way information is aggregated and shared across the subgraphs, and an understanding of the possible aggregation and sharing rules is missing. Both aspects are important: an understanding of the former can highlight the limitations of emerging architectures, a study of the latter paves the way for improved Subgraph GNNs.

% Goal of the paper  and main tool
\textbf{Main contributions.}
The goal of this paper is to provide a deeper understanding of node-based Subgraph GNNs in light of the two aforementioned aspects. The main theoretical tool underpinning our contributions is a novel analysis of the symmetry group that acts on the sets of subgraphs. While several previous approaches~\citep{papp2021dropgnn, cotta2021reconstruction, bevilacqua2022equivariant} have (often implicitly) assumed that a subgraph architecture should be equivariant to independent node and subgraph permutations, we leverage the fact that node-based policies induce an inherent bijection between the subgraphs and the nodes. This observation allows us to align the two groups and model the symmetry with a single (smaller) permutation group that acts on nodes and subgraphs \emph{jointly}. Other works~\citep{zhao2022from,you2021identity,zhang2021nested} have (again, implicitly) recognised such node-subgraph correspondence but without studying the implications on the symmetry group, and resorting, as a result, to a partial and heuristic choice of equivariant operations.

The use of this stricter symmetry group raises a fruitful connection with $k$-order Invariant Graph Networks (\IGNs{k})~\citep{maron2018invariant,maron2019provably}, a well studied family of architectures for processing graphs and hypergraphs designed to be equivariant to the same symmetry group. This connection allows us to transfer and reinterpret previous results on \IGNs{} to our Subgraph GNN setup.
%
% bounding the expressive power
As our first contribution we show that the expressive power of Subgraph GNNs with node-based policies is bounded by that of the \WL{3} test. This is shown by proving that all previous Subgraph GNNs can be implemented by a \IGN{3} and by leveraging the fact that the expressive power of these models \revision{is} bounded by  \WL{3}~\citep{geerts2020expressive,azizian2020characterizing}.

% proposing the layer space, new models and empirical validation
Our second contribution is the proposal of a general layer formulation for Subgraph GNNs, based on the observation that these methods maintain an $n \times n$ representation of $n$ subgraphs with $n$ nodes,  following the same symmetry structure of \IGNs{2} (same permutation applied to both rows and columns of this representation). We propose a novel extension of \IGNs{2} capturing both local (message-passing-like) and global operations. This extension easily recovers previous methods \revision{facilitating} their comparison. \revision{Also}, we present a number of new operations that previous methods did not implement. We build upon these observations \revision{to} devise a new Subgraph GNN dubbed \SUN{}, (\emph{Subgraph Union Network}). We prove that \SUN{} generalises all previous node-based Subgraph GNNs and we \revision{empirically compare it} to these methods, showing it can outperform them.

\section{Previous and related work}
\textbf{Expressive power of GNNs.} 
The expressive power of GNNs is a central research focus since it was realised that message-passing type GNNs are constrained by the expressivity of the WL isomorphism test~\citep{morris2019weisfeiler,xu2019how}. Other than the aforementioned subgraph-based methods, numerous approaches for more powerful GNNs have been proposed, including positional and structural encodings~\citep{abboud2020surprising,puny2020global,bouritsas2022improving,dwivedi2021graph,kreuzer2021rethinking,lim2022sign}, higher-order message-passing schemes~\citep{morris2019weisfeiler,morris2020weisfeiler,bodnar2021weisfeilerA,bodnar2021weisfeilerB}, equivariant models~\citep{hy2019covariant,maron2018invariant,maron2019provably,vignac2020building,de2020natural, thiede2021autobahn,morris2022speqnets}. We refer readers to the recent survey by \citet{morris2021weisfeiler} for additional details. Finally we note that, in a related and concurrent work,~\citet{Qian2022osan} propose a theoretical framework to study the expressive power of subgraph-based GNNs by relating them to the \WL{k} hierarchy, and explore how to sample subgraphs in a data-driven fashion.

\textbf{Invariant graph networks.} 
\IGNs{} were recently introduced in a series of works by \citet{maron2018invariant, maron2019provably, maron2019universality} as an alternative to MPNNs for processing graph and hyper-graph data. For $k\geq2$, \IGNs{k} \revision{represent hyper-graphs with hyper-edges up to size $k$ with $k$-order tensor $\mathcal{Y} \in \mathbb{R}^{n^k}$, where each entry holds information about a specific hyper-edge. On these they} apply linear $S_n$-equivariant layers interspersed with pointwise nonlinearities. 
\revision{These models have been thoroughly studied in terms of: (i) their expressive power; (ii) the space of their equivariant linear layers. As for (i), \IGNs{} were shown to have exactly the same graph separation power as the \WL{k} graph isomorphism test~\citep{maron2019provably,azizian2020characterizing,geerts2020expressive} and, for sufficiently large $k$, to have a universal approximation property w.r.t. $S_n$-invariant and equivariant functions~\citep{maron2019universality,keriven2019universal,ravanbakhsh2020universal}. Concerning (ii), the work in \citep{maron2018invariant} completely characterised the space of linear layers equivariant to $S_n$ from $\mathbb{R}^{n^k}$ to $\mathbb{R}^{n^{k'}}$: the authors derived a basis of $\text{bell}(k+k')$ linear operators consisting of indicator tensors of equality patterns over the multi-index set $\{1,\dots,n\}^{k+k'} = [n]^{k+k'}$.
\citet{Albooyeh2019incidence} showed these layers can be (re-)written as sums of pooling-broadcasting operations between elements of $\mathcal{Y}$ indexed by the orbits \footnote{\revision{For group $G$ acting on set $X$, the orbits of the action of $G$ on $X$ are defined as $\{G \cdot x \thinspace | \thinspace x \in X \}$. These \emph{partition} $X$ into subsets whose elements can (only) reach all other elements in the subset via the group action.}} of the action of $S_n$ on $[n]^{k}$ and $[n]^{k'}$. Take, e.g., $k=k'=2$. In this case there are only two orbits: $\{i,i\},~i\in [n]$ corresponding to on-diagonal terms, and $\{i,j\},~i\neq j\in [n]$, off-diagonal terms. According to~\citet{Albooyeh2019incidence} any equivariant linear layer $L:\mathbb{R}^{n^2} \rightarrow \mathbb{R}^{n^2}$ can be represented as a composition of pooling and broadcasting operations on the elements indexed by these orbits. One example is the linear map that sums the on-diagonal elements and broadcasts the result to the off-diagonal ones: $L(\mathcal{Y})_{ij} = \sum_k \mathcal{Y}_{kk}$ for $i \neq j$, $0$ otherwise. See \Cref{app:upperbound}, for additional details. These results particularly important as they underpin most of our theoretical derivations.}
\revision{Lastly, a more comprehensive coverage of \IGNs{} can be found in \citep{morris2021weisfeiler}.}

\camera{
\textbf{Subgraph GNNs.} Despite motivated by diverse premises, a collection of concurrent methods share the overarching design whereby graphs are modelled through the application of a GNN to their subgraphs. \citet{bevilacqua2022equivariant} first explicitly formulated the concept of bags of subgraphs generated by a predefined policy and studied layers to process them in an equivariant manner: the same GNN can encode each subgraph independently (DS-GNN), or information can be shared between these computations in view of the alignment of nodes across the bag~\citep{maron2020learning} (DSS-GNN). Building upon the Reconstruction Conjecture~\citep{kelly1957a,ulam1960a}, Reconstruction GNNs~\citep{cotta2021reconstruction} obtain node-deleted subgraphs, process them with a GNN and then aggregate the resulting representations by means of a set model. Nested GNNs~\citep{zhang2021nested} and GNN-As-Kernel models (GNN-AK)~\citep{zhao2022from} shift their computation from rooted subtrees to rooted subgraphs, effectively representing nodes by means of GNNs applied to their enclosing ego-networks. Similarly to DSS-GNNs~\citep{bevilacqua2022equivariant}, GNN-AK models may feature information sharing modules aggregating node representations across subgraphs. ID-GNNs~\citep{you2021identity} also process ego-network subgraphs, but their roots are `marked' so to specifically alter the exchange of messages involving them. Intuitively, the use of subgraphs implicitly breaks those local symmetries which determine the notorious expressiveness bottleneck of MPNNs. We note that other works can be interpreted as Subgraph GNNs, including those by~\citet{papp2021dropgnn,papp2022theoretical}.
}

\section{Node-based Subgraph GNNs}\label{sec:subgraph_gnn} % NB: 'Node-based' is in camera-revision
% In this section, we formulate Subgraph GNNs and survey their recent instantiations. 

\textbf{Notation.} Let $G = (A, X)$ be a member of the family $\mathcal{G}$ of \emph{node-attributed}, undirected, finite, simple graphs\footnote{We do not consider  edge features, although an extension to such a setting would be possible.}. The {\em adjacency matrix} $A \in \mathbb{R}^{n \times n}$ represents $G$'s edge set $E$ over its set of $n$ nodes $V$. The {\em feature matrix} $X \in \mathbb{R}^{n \times d}$ gathers the node features; we denote by $x_j \in \mathbb{R}^{d \times 1}$ the features of node $j$ corresponding to the $j$-th row of $X$. $B_G$ is used to denote a multiset (bag) of $m$  subgraphs of $G$. Adjacency and feature matrices for subgraphs in $B_G$ are arranged in tensors $\mathcal{A} \in \mathbb{R}^{m \times n \times n}$ and $ \mathcal{X} \in \mathbb{R}^{m \times n \times d}$. Superscript $^{i,(t)}$ refers to representations on subgraph $i$ at the $t$-th layer of a stacking, as in $x_{j}^{i, (t)}$. Finally, we denote $[n] = \{ 1, \mathellipsis, n\}$. All proofs are deferred to \Cref{app:upperbound,app:space}.

\textbf{Formalising Subgraph GNNs.} Subgraph GNNs compute a representation of $G \in \mathcal{G}$ as 
\begin{equation}\label{eq:subgraph_gnn}
    (A, X) \mapsto \big ( \mu \circ \rho \circ \mathcal{S} \circ \pi \big ) ( A, X ).
\end{equation}
Here, $\pi: G \mapsto \{G^1,...,G^m\} = \{(A^1, X^1),...,(A^m, X^m)\} = B^{(0)}_G$ is a {\em selection policy} generating a bag of subgraphs from $G$; $\mathcal{S} = L_T \circ \mathellipsis \circ L_1: B^{(0)}_G \mapsto B^{(T)}_G$ is a {\em stacking} of $T$ (node- and subgraph-) permutation equivariant layers; $\rho: (G, B^{(T)}_G) \mapsto x_G$ is a permutation invariant {\em pooling function}, $\mu$ is an MLP. \camera{The layers in $\mathcal{S}$ comprise a \emph{base-encoder} in the form of a GNN applied to subgraphs; throughout this paper, we assume it to be a \WL{1} maximally expressive MPNN such as the one in \citet{morris2019weisfeiler}.} Subgraph GNNs differ in the implementation of $\pi$, $\mathcal{S}$ and, in some cases, $\rho$. For example, in \revision{(n-1)}-Reconstruction GNNs~\citep{cotta2021reconstruction}, $\pi$ selects node-deleted subgraphs and $\mathcal{S}$ applies a Siamese MPNN to each subgraph independently. To exemplify the variability in $\mathcal{S}$, \camera{DSS-GNN}~\citep{bevilacqua2022equivariant} extends this method with cross-subgraph node and connectivity aggregation. \camera{More details are on how currently known Subgraph GNNs are captured by \Cref{eq:subgraph_gnn} can be found in \Cref{app:subgraphs}.}

\textbf{Node-based selection policies.} In this work, we focus on a specific family of \emph{node-based} subgraph selection policies, wherein every subgraph is associated with a unique node in the graph. 
Formally, we call a subgraph selection policy \emph{node-based} if it is of the form $\pi(G) = \{ f(G,v) \}_{v \in V}$, for some \emph{selection function} $f(G,v)$ that takes a graph $G$ and a node $v$ as inputs and outputs a subgraph $G^v$. In the following, we refer to $v$ as the \emph{root} of subgraph $G^v$. We require $f$ to be a bijection and we note that such policies produce $m=n$ different subgraphs. Amongst the most common examples are {\em node-deletion} (ND), {\em node-marking} (NM), and {\em ego-networks} (EGO) policies. For input graph $G$, $f_{\mathrm{ND}}(G,v)$ removes node $v$ and the associated connectivity; $f_{\mathrm{NM}}(G,v)$ adds a special `mark' attribute to $v$'s features (with no connectivity alterations), and  $f_{\mathrm{EGO}(h)}(G,v)$ returns the subgraph induced by the $h$-hop-neighbourhood around the root $v$. EGO policies can be `marked': $f_{\mathrm{EGO+}(h)}(G,v)$ extracts the $h$-hop ego-net around $v$ \emph{and} marks this node as done by $f_{\mathrm{NM}}$. For convenience, we denote the class of such node-based selection policies by $\Pi$:
\begin{definition}[Known node-based selection policies $\Pi$]\label{def:policies}
    Let $\Sigma$ be the set of all node-based subgraph selection policies operating on $\mathcal{G}$. Class $\Pi \subset \Sigma$ collects the node-based policies node-deletion (ND), node-marking (NM), ego-nets (EGO) and marked ego-nets (EGO+) of any depth: $\Pi = \{ \pi_{\mathrm{ND}}, \pi_{\mathrm{NM}}, \pi_{\mathrm{EGO}(h)}, \pi_{\mathrm{EGO+}(h)} \thinspace | \thinspace h > 0 \}$.
\end{definition}

\camera{\textbf{Node-based Subgraph GNNs} are those Subgraph GNNs which, implicitly or explicitly, process bags generated by node-based policies. We group known formulations in the following family:}
\begin{definition}[Known \camera{node-based} Subgraph GNNs $\Upsilon$]\label{def:subgraph_gnns}
    Let $\Xi$ be the set of all node-based Subgraph GNNs. Class $\Upsilon \subset \Xi$ collects known Subgraph GNNs when equipped with \WL{1} base-encoders: $\Upsilon = \{$\emph{\revision{(n-1)}-Reconstr.GNN}, \emph{GNN-AK}, \emph{GNN-AK-ctx}, \emph{NGNN}, \emph{ID-GNN}, \camera{\emph{DS-GNN}$_\Pi$, \emph{DSS-GNN}$_\Pi \}$. \emph{DS-GNN}$_\Pi$, \emph{DSS-GNN}$_\Pi$ refer to \emph{DS-} and \emph{DSS-GNN} models equipped with any $\pi \in \Pi$}.%;  GNN-AK, GNN-AK-ctx, NGNN are equipped with $\pi_{EGO}$; ID-GNN with policy $\pi_{EGO+}$,  \revision{(n-1)}-Reconstr.GNN with policy $\pi_{ND}$.
\end{definition}

Importantly, all these methods apply MPNNs to subgraphs of the original graph, but differ in the way information is shared between subgraphs/nodes. In all cases, their expressive power is strictly larger than \WL{1}, but an \emph{upper}-bound is currently unknown.

\section{Symmetries of node-based subgraph selection policies}

\begin{figure}
    \centering
    \includegraphics[width=\linewidth]{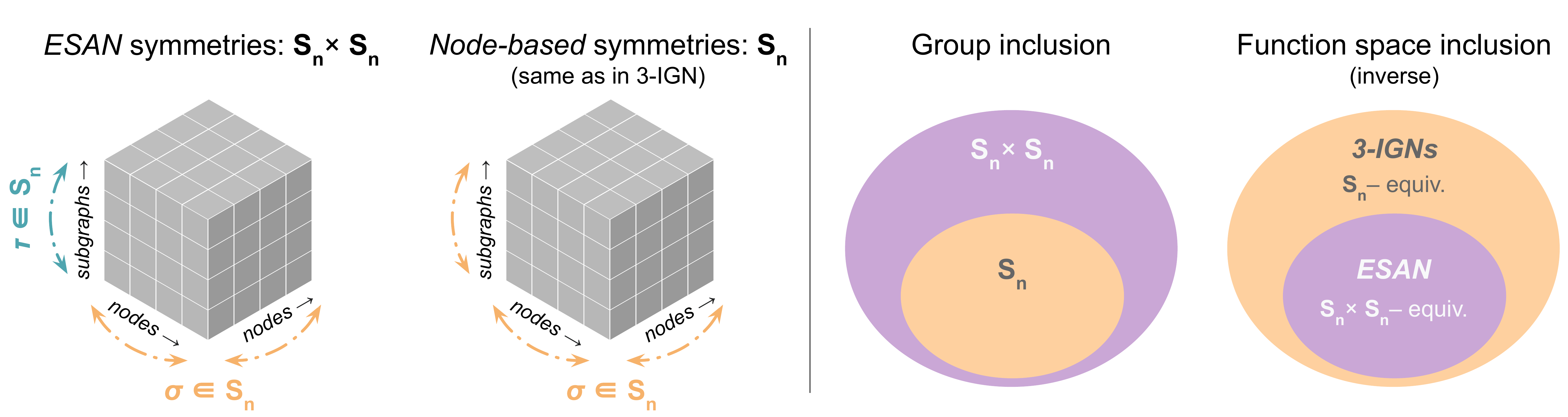}
    \caption{Symmetries of bags of subgraphs (left) and corresponding function space diagrams (right). In ESAN~\citep{bevilacqua2022equivariant} symmetries are modelled as a direct product of node and subgraph permutation groups; however, node-based policies enable the use of one single permutation group, the same as in \texttt{3-IGN}s. \texttt{3-IGN}s are less constrained, thus more expressive than ESAN and other Subgraph GNNs. See diagram on the right and formal statement in Section \ref{sec:upperbound}.}%\vspace{-5mm}
    \label{fig:symmetries}
\end{figure}

In an effort to characterise the representational power of node-based Subgraph GNNs, we first study the symmetry group of the objects they process: %A standard GNN takes as input a tensor $(A, X)$ %where $A \in \mathbb{R}^{n \times n}$ is the graph's adjacency matrix and $X \in \mathbb{R}^{n\times d}$ is the node feature matrix. 
%In contrast, 
%Subgraph GNNs operate on 
`bags of subgraphs' represented as tensors $(\mathcal{A}, \mathcal{X}) \in \mathbb{R}^{m \times n \times n} \times \mathbb{R}^{m \times n \times d}$,  assuming $n$ nodes across $m$ subgraphs. 
%Here, $\mathcal{A} \in \mathbb{R}^{n\times n \times m}$ represents a set of $m$ adjacency matrices, and $\mathcal{X}\in \mathbb{R}^{n\times d \times m}$ represents a set of $m$ node feature matrices. 
%
% To model the symmetries of $(\mathcal{A},\mathcal{X})$, 
Previous approaches~\citep{cotta2021reconstruction,bevilacqua2022equivariant,papp2021dropgnn} used two permutation groups: one copy of the symmetric group $S_n$ models {\em node permutations}, while another copy $S_m$ models {\em subgraph permutations} in the bag. These two were combined by a group product\footnote{\citet{bevilacqua2022equivariant} use a direct-product,  assuming nodes in subgraphs are consistently ordered. \citet{cotta2021reconstruction} use the larger wreath-product assuming node ordering in the subgraph is unknown.} acting \emph{independently} on the nodes and subgraphs in $(\mathcal{A},\mathcal{X})$. For example, \citet{bevilacqua2022equivariant} model the symmetry as:
 \begin{equation}\label{eq:old_symmetry}
  ((\textcolor{teal}{\tau},\textcolor{orange}{\sigma}) \cdot \mathcal{A})_{ijk} = \mathcal{A}_{\textcolor{teal}{\tau}^{-1}(i)\textcolor{orange}{\sigma}^{-1}(j)\textcolor{orange}{\sigma}^{-1}(k)}, \enspace \thinspace ((\textcolor{teal}{\tau},\textcolor{orange}{\sigma}) \cdot \mathcal{X})_{ijl}=\mathcal{X}_{\textcolor{teal}{\tau}^{-1}(i)\textcolor{orange}{\sigma}^{-1}(j)l}, \enspace \quad (\textcolor{teal}{\tau},\textcolor{orange}{\sigma})\in S_m\times S_n
\end{equation}

Our contributions stem from the following crucial observation: %We make a crucial observation: 
When using node-based policies, the subgraphs in $(\mathcal{A},\mathcal{X})$ can be {\em ordered consistently with the nodes} by leveraging the bijection $f: v \mapsto G_v$ characterising this policy class. In other words, $f$ suggests a node-subgraph alignment inducing a new structure on $(\mathcal{A},\mathcal{X})$, whereby the subgraph order is not independent of that of nodes anymore. Importantly, this new structure is preserved \emph{only} by those permutations operating identically on both nodes and subgraphs. Following this observation, the symmetry of a node-based bag of subgraphs is modelled more accurately using only {\em one single permutation group} $S_n$ jointly acting on both nodes and subgraphs:
\begin{equation}\label{eq:symmetry}
 (\textcolor{orange}{\sigma} \cdot \mathcal{A})_{ijk}=\mathcal{A}_{\textcolor{orange}{\sigma}^{-1}(i)\textcolor{orange}{\sigma}^{-1}(j)\textcolor{orange}{\sigma}^{-1}(k)}, \enspace \thinspace (\textcolor{orange}{\sigma} \cdot \mathcal{X})_{ijl}=\mathcal{X}_{\textcolor{orange}{\sigma}^{-1}(i)\textcolor{orange}{\sigma}^{-1}(j)l}, \enspace \quad \textcolor{orange}{\sigma} \in S_n
\end{equation}
It should be noted that $S_n$ is {\em  significantly smaller} than  $S_n \times S_n$\footnote{More formally, $S_n$'s orbits on the indices in \eqref{eq:symmetry} refine the orbits of the product group in \eqref{eq:old_symmetry}.}. Informally, the latter group contains many permutations which are not in the former: those acting differently on nodes and subgraphs and, thus, not preserving the new structure of $(\mathcal{A},\mathcal{X})$. Since they are restricted by a smaller set of equivariance constraints, we expect GNNs designed to be equivariant to $S_n$ to be be more expressive than those equivariant to the larger groups considered by previous works~\citep{maron2019universality} \revision{(see Figure~\ref{fig:symmetries})}.

The insight we obtain from Equation \eqref{eq:symmetry} is profound: it reveals that the symmetry structure of $\mathcal{A}$ exactly matches the symmetries of third-order tensors used by \IGNs{3}, and similarly, that the symmetry structure for $\mathcal{X}$ matches the symmetries of second-order tensors used by \IGNs{2}. 
In the following, we will make use of this insight and the fact that \IGNs{} are well-studied objects to prove an upper-bound on the expressive power of Subgraph GNNs and to design principled extensions to these models.
\begin{figure}[t]
    \centering%\vspace{-1mm}
    \includegraphics[width=0.75\linewidth]{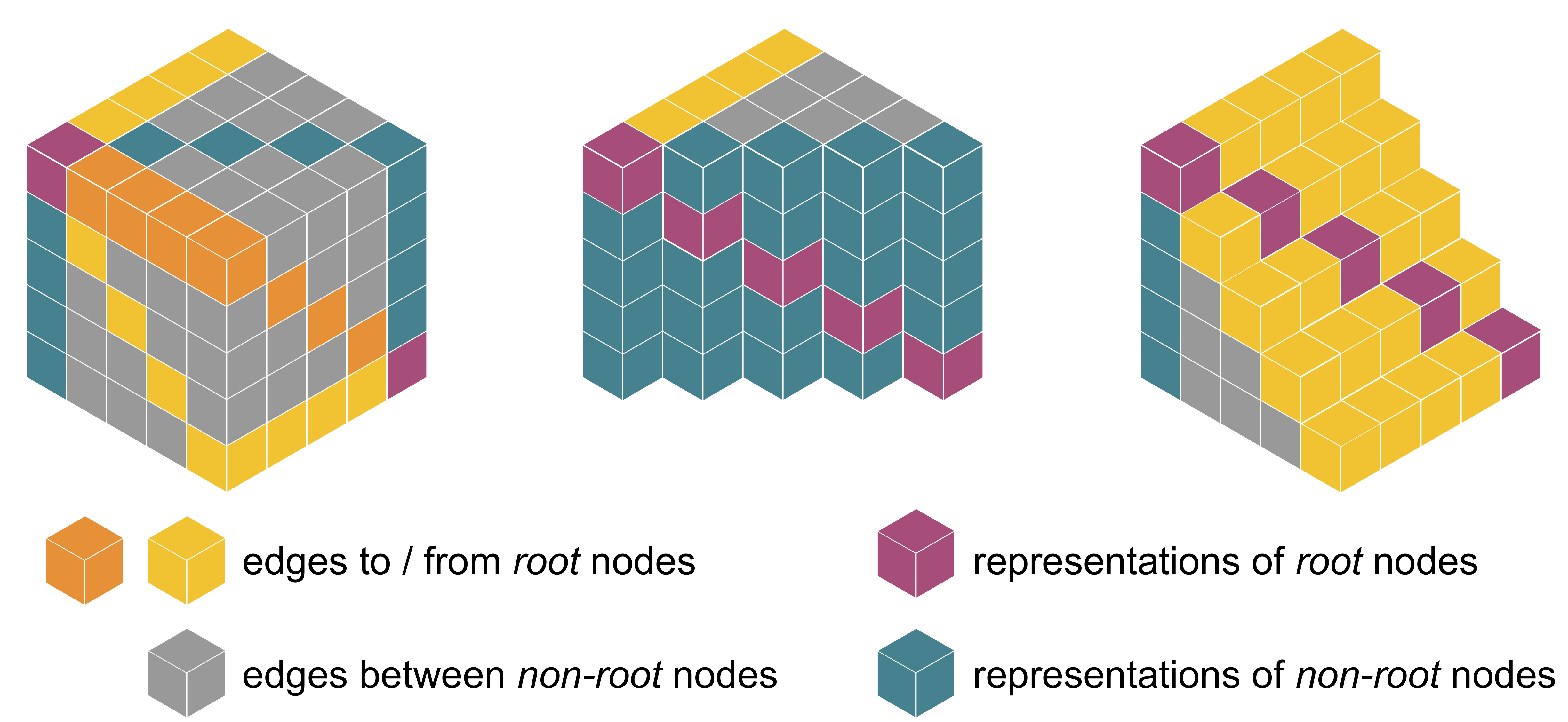}%\vspace{-2mm}
    \caption{\camera{Depiction of cubed tensor $\mathcal{Y}$, its orbit-induced partitioning and the related semantics when $\mathcal{Y}$ is interpreted as a bag of node-based subgraphs, $n = 5$. Elements in the same partition are depicted with the same colour. Left: the whole tensor. Middle and right: sections; elements in purple and green constitute sub-tensor $\mathcal{X}$, the remaining ones sub-tensor $\mathcal{A}$.}}%\vspace{-5mm}
    \label{fig:orbit_dec_sem}
\end{figure}
\revision{We remark that bags of node-based subgraphs can also be represented as tensors $\mathcal{Y} \in \mathbb{R}^{n^3 \times d}$, the same objects on which \IGNs{3} operate. Here, $\mathcal{X}$ is embedded in the main diagonal plane of $\mathcal{Y}$, $\mathcal{A}$ in its remaining entries. Within this context, it is informative to study the semantics of the $5$ orbits induced by the action of $S_n$ on $\mathcal{Y}$'s multi-index set $[n]^3$: each of these uniquely identify root nodes, non-root nodes, edges to and from root nodes as well as edges between non-root nodes (\camera{see \Cref{fig:orbit_dec_sem} and} additional details in \Cref{app:3IGN_data_struct}). We build upon this observation, along with the layer construction by~\citet{Albooyeh2019incidence}, to prove many of the results presented in the following.}\label{sec:symmetries}

\section{A representational bound for Subgraph GNNs}\label{sec:upperbound}

In this section we prove that the expressive power of known node-based Subgraph GNNs is bounded by \WL{3} by showing that they can be implemented by \IGNs{3}, which have the same expressive power as \WL{3}.
Underpinning the possibility of \IGNs{} to upper-bound a certain Subgraph GNN $\mathcal{N}$ in its expressive power is the ability of \IGNs{} to (i) implement $\mathcal{N}$'s subgraph selection policy ($\pi$) and (ii) implement $\mathcal{N}$'s (generalised) message-passing and pooling equations ($\mu \circ \rho \circ \mathcal{S}$). This would ensure that whenever $\mathcal{N}$ assigns distinct representations to two non-isomorphic graphs, an \IGN{} implementing $\mathcal{N}$ would do the same.
We start by introducing a \revision{recurring, useful concept.}
\begin{definition}[``implements'']\label{def:implements}
    Let $f: D_f \rightarrow C_f$, $g: D_g \rightarrow C_g$ be two functions and such that $D_g \subseteq D_f, C_g \subseteq C_f$. We say $f$ \emph{implements} $g$ (and write $f \cong g$) when $\forall x \in D_g, f(x) = g(x)$.
\end{definition}

Our first result shows that \IGNs{3} can implement the selection policies in class $\Pi$ (Definition~\ref{def:policies}), which, to the best of our knowledge, represent \emph{all known} node-based policies utilised by previously proposed Subgraph GNNs.
\begin{lemma}[\camera{\IGNs{3} implement known node-based selection policies}]\label{lemma:3IGN_implements_policies}
    For any $\pi \in \Pi$ there exists a stacking of \IGN{3} layers $\mathcal{M}_{\pi}$ s.t. $\mathcal{M}_{\pi} \cong \pi$.%$\forall G \in \mathcal{G}, \mathcal{M}_{\pi}(G) = \pi(G)$.
\end{lemma}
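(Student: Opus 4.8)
The plan is to build, for each policy $\pi \in \Pi$ (\Cref{def:policies}), an explicit stacking $\mathcal{M}_\pi$ of \IGN{3} layers whose output on the graph tensor equals the bag tensor $\mathcal{Y}$ produced by $\pi$. First I would fix representations: the input graph $(A,X)$ is embedded into a third-order tensor in the natural $S_n$-equivariant way, and the target $\mathcal{Y} \in \mathbb{R}^{n^3 \times d}$ is the bag encoding of \Cref{sec:symmetries}, so that $\mathcal{X}$ sits on the diagonal plane and $\mathcal{A}$ elsewhere, with the five orbits carrying the semantics of roots, non-roots, root-incident edges and non-root edges (\Cref{fig:orbit_dec_sem}). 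The toolkit I would use throughout is the decomposition of equivariant linear maps into \emph{pooling} and \emph{broadcasting} operations across these orbits~\citep{Albooyeh2019incidence}, together with per-orbit constant biases and the pointwise MLPs of each \IGN{3} layer.

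For the two ``copy-and-edit'' policies the construction is purely linear. For node-marking I would broadcast $A$ along the subgraph axis to set $\mathcal{A}_{ijk} = A_{jk}$, broadcast $X$ to set $\mathcal{X}_{ij} = x_j$ on both feature orbits, and add a constant mark bias supported only on the root orbit $(i,i,i)$. For node-deletion I would additionally apply a single equivariant linear map that keeps the non-root-edge orbit (all indices distinct) and the non-root feature orbit $(i,j,j)$ while zeroing the two root-incident edge orbits $(i,i,k)$, $(i,j,i)$ and the root-feature orbit $(i,i,i)$, which exactly removes node $i$ and its connectivity from subgraph $i$. Both cases are immediate from the orbit semantics and require no nonlinearity.

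The substantive case is EGO($h$) (and EGO+($h$), which only adds the root mark as a bias on top). Here subgraph $i$ must retain precisely the nodes within $h$ hops of the root, so the construction must compute, from the variable input $A$, the reachability mask $R_{ij} = \mathds{1}[d_G(i,j) \le h]$ and then gate the broadcast adjacency and features by it: $\mathcal{A}_{ijk} = A_{jk}\,R_{ij}\,R_{ik}$ and $\mathcal{X}_{ij} = x_j\,R_{ij}$. I would obtain $R$ by computing the truncated walk matrix $\sum_{l=0}^{h} A^{l}$ and thresholding it entrywise with an MLP; the matrix powers are produced iteratively, each multiplication $B \mapsto BA$ realised by broadcasting the two factors so that entry $(i,k,j)$ sees both $B_{ik}$ and $A_{kj}$, multiplying them with an MLP, and pooling over the shared index $k$. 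The masking products are handled the same way, broadcasting the three factors into entry $(i,j,k)$ and letting the MLP compute the product, so EGO($h$) is realised by $O(h)$ such blocks.

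The crux --- and the step I expect to be the main obstacle --- is exactly this realisation of products of two \emph{variable} sub-tensors within a model whose layers are linear maps interleaved with \emph{pointwise} nonlinearities, since no single linear layer can multiply two data-dependent entries. The resolution I would rely on is that broadcasting can always bring the two factors into a common entry/channel, after which the entrywise MLP recovers their product (e.g.\ via the polarization identity $ab=\tfrac12[(a+b)^2-a^2-b^2]$, or directly as a gate when one factor is the binary mask $R$); a final equivariant pooling then contracts the shared index. Because the admissible inputs are finite graphs with bounded features, these MLPs compute the required products and thresholds exactly on the relevant domain, so the resulting stacking satisfies $\mathcal{M}_\pi \cong \pi$ in the sense of \Cref{def:implements}. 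The remaining routine work is to check that each broadcasting/pooling step is a legal \IGN{3} operation and that the orbit bookkeeping matches the target $\mathcal{Y}$ for every $\pi \in \Pi$.
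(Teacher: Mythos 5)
Your overall construction follows the same skeleton as the paper's proof: broadcast $(A,X)$ into the cubed tensor along the subgraph axis, exploit the five-orbit semantics, implement node-marking as a bias on the root orbit, node-deletion as a zeroing map on the root-incident orbits, and EGO($h$) as an $h$-step broadcast--pointwise--pool recursion computing a reachability mask (the paper's version is an explicit BFS across $X_{iij}, X_{ijj}, X_{ijk}$). The genuine gap is precisely in the step you flag as the crux, and your proposed resolution does not work. A ReLU MLP computes a continuous piecewise-linear function, so it can \emph{never} implement $t \mapsto t^2$ exactly on any interval (polarization fails), and it cannot exactly implement the gate $(a,b)\mapsto ab$, $a\in\{0,1\}$, when $b$ ranges over an unbounded set: a CPWL function has a finite global Lipschitz constant $L$, while $|f(1,b)-f(0,b)|=b$ would force $L\ge b$ for all $b$. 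Your fallback, ``finite graphs with bounded features,'' is not available here: by \Cref{def:implements} the lemma requires $\mathcal{M}_\pi(x)=\pi(x)$ for \emph{every} $x\in\mathcal{G}$, a family with unbounded $n$ (so the entries of $\sum_{l\le h}A^l$ are unbounded integers, making the intermediate products $B_{ik}A_{kj}$ non-gateable) and arbitrary real node features (so $\mathcal{X}_{ij}=x_j R_{ij}$ gates unbounded reals). The memorisation device the paper uses is deliberately confined to \Cref{lemma:3IGN_implements_SubgraphNetworks}, where only the two fixed input graphs must be handled; \Cref{lemma:3IGN_implements_policies} is proved exactly, uniformly over all of $\mathcal{G}$.

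The paper escapes by never multiplying anything that is not binary. It maintains a $\{0,1\}$-valued reachability channel, propagates it one hop at a time with the exact ReLU AND gadget $y=\sigma(2a+2b-3)$ (valid on $\{0,1\}^2$), re-binarises the pooled integer counts after \emph{every} hop with the exact clipping network $y=1-\sigma(1-x)$ (so no walk counts ever accumulate), and extracts the egonet by AND-ing the binary mask with the binary connectivity --- your triple product $A_{jk}R_{ij}R_{ik}$ is handled the same way, as two sequential Boolean ANDs. Crucially, the paper never forms $x_j R_{ij}$: unreached nodes keep their features, and the membership mask is instead stored in a second channel of $X_{iij}$ as part of the output convention for the bag, deferring feature sparsification to the later, per-graph-pair memorisation arguments. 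Your proof is therefore repaired by (i) clipping to $\{0,1\}$ after each hop rather than thresholding $\sum_{l}A^l$ once at the end, and (ii) adopting the mask-channel convention in place of exact multiplicative feature gating. A minor convention difference, harmless either way: for node-deletion you also zero the root-feature orbit $X_{iii}$, whereas the paper zeroes only $X_{iij}$ and $X_{iji}$.
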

Intuitively, \IGNs{3} start from a $\mathbb{R}^{n^2}$ representation of $G$ and, first, move to a $\mathbb{R}^{n^3}$ tensor `copying' this latter along its first (subgraph) dimension. This is realised via an appropriate broadcast operation. Then, they proceed by adding a `mark' to the features of some nodes and/or by nullifying elements corresponding to some edges. \camera{We refer readers to \Cref{fig:orbit_dec_sem} and \Cref{app:bag_interpretation} for additional details on how nodes in each subgraph are represented in \IGNs{3}}.
Next, we show \IGNs{3} can implement \revision{layers of any model $\in \Upsilon$}. 
\begin{lemma}[\IGNs{3} implement Subgraph GNN layers]\label{lemma:3IGN_implements_SubgraphNetworks}
    Let $G_1, G_2$ be two graphs in $\mathcal{G}$ and $\mathcal{N}$ a model in family $\Upsilon$ equipped with \citet{morris2019weisfeiler} message-passing base-encoders. Let $B^{(t)}_1, B^{(t)}_2$ be bags of subgraphs in the input of some intermediate layer $L$ in $\mathcal{N}$. Then there exists a stacking of \IGN{3} layers $\mathcal{M}_L$ for which $\mathcal{M}_{L}(B^{(t)}_i) = B^{(t+1)}_i = L(B^{(t)}_i)$ for $i=1,2$.
\end{lemma}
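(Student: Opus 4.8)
The plan is to prove this constructively, one Subgraph GNN in $\Upsilon$ at a time, by exhibiting for each the stacking $\mathcal{M}_L$ of \IGN{3} layers that reproduces a single generalised message-passing layer $L$. The key enabling fact, established in \Cref{sec:symmetries}, is that a bag of node-based subgraphs embeds into a third-order tensor $\mathcal{Y} \in \mathbb{R}^{n^3 \times d}$ whose $S_n$-orbits on $[n]^3$ exactly match the orbit structure that \IGNs{3} are equivariant to, so that any \IGN{3} layer may be written (following~\citet{Albooyeh2019incidence}) as a sum of pooling-broadcasting operations indexed by these $5$ orbits, each composed with a pointwise MLP. My task therefore reduces to expressing each constituent operation of a Subgraph GNN layer -- the per-subgraph message passing, and the various cross-subgraph information-sharing modules -- as such a composition acting on $\mathcal{Y}$.

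First I would handle the base-encoder, the single \WL{1} message-passing step of~\citet{morris2019weisfeiler} applied independently within each subgraph. Identifying $\mathcal{X}$ with the diagonal plane $\mathcal{Y}_{iij}$ and $\mathcal{A}$ with the off-diagonal entries, an intra-subgraph aggregation $x_j^{i} \mapsto \varphi\big(x_j^{i}, \sum_{k} \mathcal{A}^{i}_{jk} x_k^{i}\big)$ is a contraction along the within-subgraph node dimension weighted by the subgraph's own adjacency, followed by an MLP -- exactly the kind of equivariant bilinear-then-pointwise operation available in the \IGN{3} basis (one can realise the neighbour sum as an orbit-restricted pooling of $\mathcal{Y}$ entries sharing the subgraph index, gated by adjacency, and the self-term plus MLP as the pointwise update). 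Then I would treat the information-sharing layers case by case: the subgraph-pooling/broadcasting used by DSS-GNN and GNN-AK-ctx (summing a node's representation over all subgraphs and broadcasting back) is a pooling along the subgraph orbit coordinate followed by a broadcast, both of which are primitive operators in the \citet{Albooyeh2019incidence} decomposition; node-marking, ID-GNN's root-marking, and node-deletion are recovered exactly as in \Cref{lemma:3IGN_implements_policies}, since marking a root is a pointwise read of the diagonal-plane orbit and deletion is a masking on the edge orbits. For each model I would verify that composing finitely many such orbit operations, interleaved with MLPs, yields precisely the stated update $B^{(t)} \mapsto B^{(t+1)}$, invoking the universality of MLPs to match the (finite-dimensional, continuous) update functions.

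The main obstacle I anticipate is bookkeeping the embedding faithfully: I must be careful that the \IGN{3} layer, which is genuinely $S_n$-equivariant over all of $[n]^3$, does not leak information across the embedding -- e.g. that pooling a node feature over subgraphs does not accidentally mix in adjacency entries living in a different orbit, and that the diagonal plane reserved for $\mathcal{X}$ is neither corrupted nor read where an operation should see only $\mathcal{A}$. This requires checking that each targeted operation can be isolated to the correct orbit(s) and that the complementary entries can be held fixed (realised by zero coefficients on the irrelevant basis operators, or by a final masking step that restores the diagonal/off-diagonal split). A secondary, more mechanical difficulty is that a single Subgraph GNN layer combining several aggregations (as in DSS-GNN, which does intra-subgraph, cross-subgraph node, and cross-subgraph connectivity aggregation simultaneously) must be emulated by a \emph{stacking} of several \IGN{3} layers rather than one, so I would argue that the intermediate representations can carry the partial results in extra feature channels -- the feature dimension $d$ is free to grow -- and that the final \IGN{3} layer in $\mathcal{M}_L$ combines these channels through its pointwise MLP to produce $B^{(t+1)}$ exactly. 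Once the per-operation constructions and the channel-bookkeeping are in place, the claim $\mathcal{M}_L(B_i^{(t)}) = L(B_i^{(t)})$ for $i=1,2$ follows by construction, and since the argument is pointwise over inputs it holds for the two specific graphs $G_1, G_2$ in the statement.
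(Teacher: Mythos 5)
Your overall architecture --- broadcast node features onto the connectivity orbits, act pointwise, pool globally, carry partial results in extra feature channels, and proceed case-by-case over the models in $\Upsilon$ --- matches the paper's strategy. But there is a genuine gap at the single most delicate step: implementing the \emph{local} (adjacency-restricted) neighbour sum. You assert that $\sum_k \mathcal{A}^i_{jk}\, x^i_k$ is ``exactly the kind of equivariant bilinear-then-pointwise operation available in the \IGN{3} basis,'' with the sum ``gated by adjacency.'' No such bilinear primitive exists: the \citet{maron2018invariant} basis, equivalently the pooling-broadcasting decomposition of \citet{Albooyeh2019incidence}, consists of \emph{linear} equivariant maps only, and multiplying the adjacency channel against a feature channel is not linear in $\mathcal{Y}$. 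Your fallback --- ``invoking the universality of MLPs'' --- does not close the gap either, because universality yields approximation on compact sets, whereas the lemma demands \emph{exact} equality $\mathcal{M}_L(B^{(t)}_i) = L(B^{(t)}_i)$, and a finite ReLU network cannot exactly realise the product (or the gating function $(a,x)\mapsto a\cdot x$ with $a \in \{0,1\}$) over a continuum of feature values.

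The paper resolves this with a memorisation argument, and this is precisely why the lemma is stated for a \emph{fixed pair} of graphs $G_1, G_2$ rather than uniformly over $\mathcal{G}$ --- a restriction your closing sentence treats as incidental (``since the argument is pointwise over inputs it holds for the two specific graphs''), when it is in fact load-bearing. Concretely: after broadcasting node representations onto the orbits $X_{iij}, X_{iji}, X_{ijk}$ so that each entry juxtaposes a connectivity bit and a candidate message, the paper collects the \emph{finite} set of such entries arising from the two input bags into datasets $D_{iij}, D_{iji}, D_{ijk}$, and invokes a memorisation result (an extension of Theorem~3.1 of \citet{yun2019small}, Proposition~\ref{prop:memorisation_ours}) to obtain a $3$-layer ReLU MLP that \emph{exactly} maps each occurring input to $\mathbf{0}_d$ when the connectivity bit is $0$ and to the message otherwise; global pooling then recovers sparse aggregation exactly on $G_1, G_2$. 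Without this step (or an equivalent exactness device), your construction of the base-encoder --- and hence every model case built on it --- does not go through. Your secondary concerns (orbit isolation, which the orbit-wise parameterisation of \citet{Albooyeh2019incidence} handles automatically, and channel bookkeeping across a stacking) are handled in the paper essentially as you describe, though note one further technicality you omit: the interleaved ReLUs between \IGN{3} layers must be neutralised via a doubled-channel sign-splitting trick to permit the purely linear copy/routing steps your plan assumes.
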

% we can rewrite the above more compactly by defining implementation on a subset of the target function domain.

Lemmas~\ref{lemma:3IGN_implements_policies} and~\ref{lemma:3IGN_implements_SubgraphNetworks} allow us to upper-bound the expressive power of all known instances of node-based Subgraph GNNs by that of \IGNs{3}:
\begin{theorem}[\camera{\IGNs{3} upper-bound node-based Subgraph GNNs}]\label{thm:3IGN_upperbounds_SubgraphNetworks}
   For any pair of non-isomorphic graphs $G_1, G_2$ in family $\mathcal{G}$ and Subgraph GNN model $\mathcal{N} \in \Upsilon$ equipped with \citet{morris2019weisfeiler} message-passing base-encoders, if there exists weights $\Theta$ such that $G_1$, $G_2$ are distinguished by instance $\mathcal{N}_\Theta$, then there exist weights $\Omega$ for a $3$-IGN instance $\mathcal{M}_{\Omega}$ such that $G_1, G_2$ are distinguished by $\mathcal{M}_{\Omega}$ as well.
\end{theorem}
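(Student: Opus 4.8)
The plan is to assemble a single stacking of \IGN{3} layers that reproduces the \emph{entire} forward computation of $\mathcal{N}_\Theta$ — selection policy, message-passing stack, pooling, and final MLP — on the two fixed graphs $G_1, G_2$, and then transfer distinguishing power by an injectivity argument. Concretely, I would first fix $G_1, G_2$ and $\mathcal{N}_\Theta \in \Upsilon$, run $\mathcal{N}_\Theta$ on each input, and record the intermediate bags $B^{(0)}_i = \pi(G_i)$ and $B^{(t)}_i = L_t(B^{(t-1)}_i)$ for $t = 1, \dots, T$ and $i = 1, 2$. Because only two inputs are involved, this produces a \emph{finite} list of intermediate representations that the implementing \IGN{3} will need to match.

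The construction then proceeds piecewise along this recorded trajectory. By \Cref{lemma:3IGN_implements_policies} there is an \IGN{3} block $\mathcal{M}_\pi$ with $\mathcal{M}_\pi(A_i, X_i) = B^{(0)}_i$ (embedded as the tensor $\mathcal{Y} \in \mathbb{R}^{n^3}$ of \Cref{sec:symmetries}) for $i = 1, 2$. By \Cref{lemma:3IGN_implements_SubgraphNetworks}, applied to the recorded pair $B^{(t-1)}_1, B^{(t-1)}_2$, for each $t$ there is an \IGN{3} block $\mathcal{M}_{L_t}$ with $\mathcal{M}_{L_t}(B^{(t-1)}_i) = B^{(t)}_i$. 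It remains to implement the readout: the invariant pooling $\rho$ is realised by a standard invariant \IGN{3} layer (pooling over the orbits of the index set), and the final MLP $\mu$ is applied pointwise to the resulting invariant vector, which an \IGN{} realises through its pointwise nonlinear maps; this yields blocks $\mathcal{M}_\rho$ and $\mathcal{M}_\mu$ matching $\mathcal{N}_\Theta$'s readout on the two recorded final bags. Defining $\mathcal{M}_\Omega = \mathcal{M}_\mu \circ \mathcal{M}_\rho \circ \mathcal{M}_{L_T} \circ \cdots \circ \mathcal{M}_{L_1} \circ \mathcal{M}_\pi$ and chaining the per-block equalities gives, by a short induction on $t$, $\mathcal{M}_\Omega(A_i, X_i) = \mathcal{N}_\Theta(G_i)$ for $i = 1, 2$. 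Since a composition of \IGN{3} layers is again an \IGN{3}, $\mathcal{M}_\Omega$ is a valid instance with weights $\Omega$.

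The conclusion is then immediate: as $G_1, G_2$ are distinguished by $\mathcal{N}_\Theta$ we have $\mathcal{N}_\Theta(G_1) \neq \mathcal{N}_\Theta(G_2)$, hence $\mathcal{M}_\Omega(A_1, X_1) \neq \mathcal{M}_\Omega(A_2, X_2)$, so $\mathcal{M}_\Omega$ distinguishes them as well.

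I expect the main subtlety to be the \emph{composability of the ``implements'' relation} in light of the input-dependence of \Cref{lemma:3IGN_implements_SubgraphNetworks}: the block $\mathcal{M}_{L_t}$ is only guaranteed to agree with $L_t$ on the two specific bags $B^{(t-1)}_i$ (indeed, mimicking the MPNN update MLPs exactly will in general hold only on a finite input set, which is precisely why the statement quantifies the weights $\Omega$ per pair rather than universally). The argument must therefore thread the \emph{exact} intermediate values through the composition — each block receives precisely the output the previous block was constructed to produce — so that Definition~\ref{def:implements} applies cleanly at every stage. A secondary point to verify is that $\rho$ and $\mu$, which fall outside the two lemmas, genuinely lie within the \IGN{3} repertoire; this is routine since invariant pooling and pointwise MLPs are primitive \IGN{} operations, but it should be stated explicitly to close the gap between $\mathcal{S}$ and the full pipeline of \eqref{eq:subgraph_gnn}.
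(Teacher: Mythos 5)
Your overall strategy coincides with the paper's proof: fix the pair $G_1,G_2$, implement the policy via \Cref{lemma:3IGN_implements_policies}, implement each layer along the recorded trajectory via \Cref{lemma:3IGN_implements_SubgraphNetworks}, compose, and chain exact equalities on the two inputs — and you correctly flag the key subtlety that each block only agrees with its target on the two recorded bags, so the composition must thread those exact intermediate values (the paper writes this as $\cong_{\{G_1,G_2\}}$). The genuine gap is in your readout step. You dismiss $\rho$ and $\mu$ as ``routine,'' realising $\rho$ by a single standard invariant \IGN{3} pooling layer and $\mu$ pointwise. But the theorem quantifies over all of $\Upsilon$, which includes NGNN, and NGNN's pooling block (\Cref{eq:NestedGNN}) is \emph{not} an invariant pooling: after a subgraph readout that is restricted to ego-network members $V^v$ — hence requiring masking by the reachability patterns, which the paper again handles with the memorisation MLPs of Proposition~\ref{prop:memorisation_ours} — it runs $L$ further message-passing layers $\gamma_\rho$ over the \emph{original} graph connectivity before the final sum. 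That is equivariant computation interposed between the subgraph stack and the invariant aggregation; the paper must broadcast the pooled representations back onto the orbit tensor $X_{ijj}$ and rerun the entire sparsification-and-aggregation programme from the proof of \Cref{lemma:3IGN_implements_SubgraphNetworks} to simulate it. A single linear invariant layer followed by a pointwise MLP cannot express this, so your proposed $\mathcal{M}_\rho$ fails for that member of the family.

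A smaller instance of the same issue arises for DS-/DSS-GNN, whose readout is a DeepSets map $y_G=\psi\big(\sum_k \phi(x^{k,(T)})\big)$ with $x^{k,(T)}=\sum_i x^{k,(T)}_i$: the inner $\phi$ (and, in the late invariant-aggregation variant, further set layers with global terms) sits \emph{between} the two summations, so it must be realised as a \texttt{3-}to\texttt{-1} equivariant pooling layer followed by a stack of \IGN{1} layers on the size-one orbit representation, before the final invariant block $h$ and MLP $m$ of the \IGN{3} architecture — not as a pointwise map applied after one invariant pooling. This is why the paper proves the readout case by case over $\Upsilon$ (DS/DSS, GNN-AK variants and ID-GNN, NGNN) rather than appealing to a generic primitive. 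The gap is repairable with tools you already have in hand — the NGNN case reuses exactly the memorisation and sparsified message-passing machinery of \Cref{lemma:3IGN_implements_SubgraphNetworks} — but as written your argument does not cover all models the theorem claims, and the readout case analysis must be added to close it.
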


% Consequences on expressive power: upperbound by 3-WL
Theorem~\ref{thm:3IGN_upperbounds_SubgraphNetworks} has profound consequences in the characterisation of the expressive power of node-based Subgraph GNNs, as we show in the following %corollary.
\begin{corollary}[\camera{\WL{3} upper-bounds node-based Subgraph GNNs}]\label{cor:upperbound}
    Let $G_1, G_2 \in \mathcal{G}$ be two non-isomorphic graphs  and $\mathcal{N}_{\Theta} \in \Upsilon$ one instance of model $\mathcal{N}$ with weights $\Theta$. If $\mathcal{N}_{\Theta}$ distinguishes $G_1, G_2$, then the \WL{3} algorithm does so as well.
\end{corollary}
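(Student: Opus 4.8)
The plan is to chain Theorem~\ref{thm:3IGN_upperbounds_SubgraphNetworks} with the established equivalence between the separation power of \IGNs{} and the \WL{} hierarchy. Concretely, suppose the instance $\mathcal{N}_\Theta \in \Upsilon$ distinguishes the non-isomorphic pair $G_1, G_2$, i.e. it assigns them distinct graph-level representations. First I would invoke Theorem~\ref{thm:3IGN_upperbounds_SubgraphNetworks} to obtain weights $\Omega$ for which the \IGN{3} instance $\mathcal{M}_\Omega$ also separates $G_1$ and $G_2$. This reduces the statement about an arbitrary known node-based Subgraph GNN to a statement about a single (invariant) \IGN{3}.

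The second step is to appeal to the known characterisation of the expressive power of \IGNs{}: for $k \geq 2$, the graph-separation power of \IGNs{k} is upper-bounded by (indeed, coincides with) that of the \WL{k} test~\citep{maron2019provably, azizian2020characterizing, geerts2020expressive}. Applying this with $k=3$, the existence of an \IGN{3} that distinguishes $G_1, G_2$ immediately forces \WL{3} to distinguish them as well. Composing the two steps yields the corollary.

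Since the argument is essentially bookkeeping, I do not anticipate a genuine technical obstacle; the only point requiring care is that the quantifiers line up. Theorem~\ref{thm:3IGN_upperbounds_SubgraphNetworks} is existential (some choice of $\Omega$ works), whereas the \IGN{3}-versus-\WL{3} bound is universal (it holds for every \IGN{3} instance). Because a universal bound applies in particular to the specific instance $\mathcal{M}_\Omega$ supplied by the theorem, the composition is valid. For completeness I would also note that the construction underlying Theorem~\ref{thm:3IGN_upperbounds_SubgraphNetworks} produces an $S_n$-invariant \IGN{3} whose scalar output matches $\mathcal{N}$'s graph-level readout, so that comparing against the graph-level colouring returned by \WL{3} is indeed the appropriate notion of distinguishing.
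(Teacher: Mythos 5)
Your proposal is correct and follows essentially the same route as the paper's proof: invoke Theorem~\ref{thm:3IGN_upperbounds_SubgraphNetworks} to obtain a \IGN{3} instance separating the pair, then apply the \IGN{k}-versus-\WL{k} characterisation of \citet{geerts2020expressive,azizian2020characterizing}. The paper merely phrases the second step contrapositively (as a proof by contradiction), which is logically identical to your direct composition, and your remark about the existential/universal quantifiers lining up is exactly why the chaining is sound.
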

\emph{Proof idea}: If there is a pair of graphs undistinguishable by \WL{3}, but for which there exists a Subgraph GNN separating them, there must exists a \IGN{3} separating these (Theorem~\ref{thm:3IGN_upperbounds_SubgraphNetworks}). This is in contradiction with the result \camera{by}~\citet{geerts2020expressive, azizian2020characterizing}\footnote{\WL{k} is equivalent to \FWL{(k-1)}, i.e. the ``Folklore'' WL test, see~\citep{morris2019weisfeiler}.}.

\section{A design space for Subgraph GNNs}\label{sec:space}

As discussed, different formulations of Subgraph GNNs differ primarily in the  specific rules for updating node representations across subgraphs. However, until now it is not clear whether existing rules exhaust all the possible equivariant options.
We devote this section to a systematic characterisation of the `layer space' of Subgraph GNNs.%, with the aim of achieving a deeper comprehension of this class of models and a principled framework to guide their future design.

In the spirit of the previous Section~\ref{sec:upperbound}, where we ``embedded'' Subgraph GNNs in \IGNs{3}, one option would be to consider all $\text{bell}(6)=203$ linear equivariant operations prescribed by this formalism. However, this choice would be problematic for three main reasons: (i) This layer space is \emph{too vast} to be conveniently explored; (ii) It includes operations involving $\mathcal{O}(n^3)$ space complexity, impractical in most applications; (iii) The linear \IGN{} basis does not directly support local message passing, a key operation in \revision{s}ubgraph methods. Following previous Subgraph GNN variants, which use $\mathcal{O}(n^2)$ storage for the representation of $n$ nodes in $n$ subgraphs, we set the desideratum of $\mathcal{O}(n^2)$ memory complexity as our main constraint, and use this restriction to reduce the design space. \revision{Precisely}, we are interested in modelling $S_n$-equivariant transformations on the subgraph-node tensor $\mathcal{X}$.

\subsection{Extended 2-IGNs} As we have already observed in Equation~\ref{eq:symmetry} in Section~\ref{sec:symmetries}, such a second order tensor $\mathcal{X}$ abides by the same symmetry structure of \IGNs{2}. We therefore gain intuition from the characterisation of linear equivariant mappings as introduced by~\citet{maron2018invariant}, and propose an extension of this formalism.

\textbf{\texttt{2-IGN} layer space.} A \IGN{2} layer $L_{\Theta}$ updates $\mathcal{X} \in \mathbb{R}^{n\times n \times d}$ as $\mathcal{X}^{(t+1)} = L_{\Theta} \big ( \mathcal{X}^{(t)} \big )$ by applying a specific transformation to on- ($x_{i}^{i}$) and off-diagonal terms ($x_{j}^{i}, i \neq j$):
\begin{align}
    x^{i,(t+1)}_{i} \!\!\!&=\! \upsilon_{\theta_1} \big ( x^{i,(t)}_{i}\!, \agg_{j} x^{j,(t)}_{j}\!, \agg_{j \neq i} x^{i,(t)}_{j}\!, \agg_{h \neq i} x^{h,(t)}_{i}\!, \agg_{h \neq j}
    x^{h,(t)}_{j} \big ) \label{eq:2IGN} \\
    x^{k,(t+1)}_{i} \!\!\!&=\! \upsilon_{\theta_2} \big ( x^{k,(t)}_{i}\!, x^{i,(t)}_{k}\!, \agg_{h \neq j} x^{h,(t)}_{j}\!, \agg_{h \neq i} x^{h,(t)}_{i}\!, \agg_{j \neq k} x^{k,(t)}_{j}\!, \agg_{j \neq i} x^{i,(t)}_{j}\!, \agg_{h \neq k} x^{h,(t)}_{k}\!, x^{k,(t)}_{k}\!, x^{i,(t)}_{i}\!, \agg_j x^{j,(t)}_{j} \big ) \nonumber
\end{align}
\noindent Here, $\agg$ indicates a permutation invariant aggregation function, $\upsilon_{\theta_1}, \upsilon_{\theta_2}$ apply a specific $d \times d'$ linear transformation to each input term and sum the outputs including bias terms.

\textbf{\ReIGN{2} layer space.} As \IGN{2} layers are linear, the authors advocate setting $\agg \equiv \sum$, performing pooling as \emph{global} summation. Here, we extend this formulation to additionally include different \emph{local} aggregation schemes. In this new extended formalism, entry $x^{k}_{i}$ represents node $i$ in subgraph $k$; accordingly, each aggregation in Equation~\ref{eq:2IGN} can be also performed locally, i.e. extending only over $i$'s neighbours, as prescribed by the connectivity of subgraph $k$ or of the original input graph. As an example, when updating entry $x^{k,(t)}_{i}$, term $\agg_{j \neq k} x^{k,(t)}_{j}$ is expanded as $\big ( \agg_{j \neq k} x^{k,(t)}_{j}, \agg_{j \sim_k i} x^{k,(t)}_{j}, \agg_{j \sim i} x^{k,(t)}_{j} \big )$, with $\sim_k$ denoting adjacency in subgraph $k$, and $\sim$ that in the original graph connectivity. Each term in the expansion is associated with a specific learnable linear transformation. We report a full list of pooling operations in \Cref{app:space}, Table~\ref{tab:expansion}. These local pooling operations allow to readily recover sparse message passing, which constitutes the main computational primitive of all popular (Subgraph) GNNs. Other characteristic Subgraph GNN operations are also recovered by this formalism: for example, $\agg_{h \neq i} x^{h,(t)}_{i}$ operates global pooling of node $i$'s representations across subgraphs, as previously introduced in~\citet{bevilacqua2022equivariant, zhao2022from}. We also note that additional, novel, operations are supported, e.g. the transpose $x^{i,(t)}_{k}$. We generally refer to this framework as \ReIGN{2}{} (``Rethought \IGN{2}'').

\textbf{\ReIGN{2} architectures.} \ReIGN{2} induces (linear) layers in the same form of Equation~\ref{eq:2IGN}, but where $\square$ terms are expanded to both local and global operations, as explained. These layers can operate on any bag generated by a node-based selection policy $\bar{\pi}$, and can be combined together in \ReIGN{2} stacks of the form $\mathcal{S}_\mathcal{R} = L^{(T)} \circ \sigma \circ L^{(T-1)} \circ \sigma \circ \mathellipsis \circ \sigma \circ L^{(1)}$, where $\sigma$'s are pointwise nonlinearities and $L$'s are \ReIGN{2} layers. This allows us to define \ReIGN{2} models as Subgraph GNNs in the form of Equation~\ref{eq:subgraph_gnn}, where $\mathcal{S}$ is a \ReIGN{2} layer stacking and $\pi$ is node-based: $\mathcal{R}_{\bar{\pi}} = \mu \circ \rho \circ \mathcal{S}_\mathcal{R} \circ \bar{\pi}$.

More generally, \ReIGN{2} induces a `layer space' for node-based Subgraph GNNs: the expanded terms in its update equations represent a pool of atomic operations that can be selected and combined to define new equivariant layers. Compared to that of \IGNs{3}, this space is of tractable size, yet \revision{it recovers previously proposed Subgraph GNNs and allows to define novel interesting variants}. 

\begin{figure}[t]
    \centering%\vspace{-1mm}
    \includegraphics[width=\linewidth]{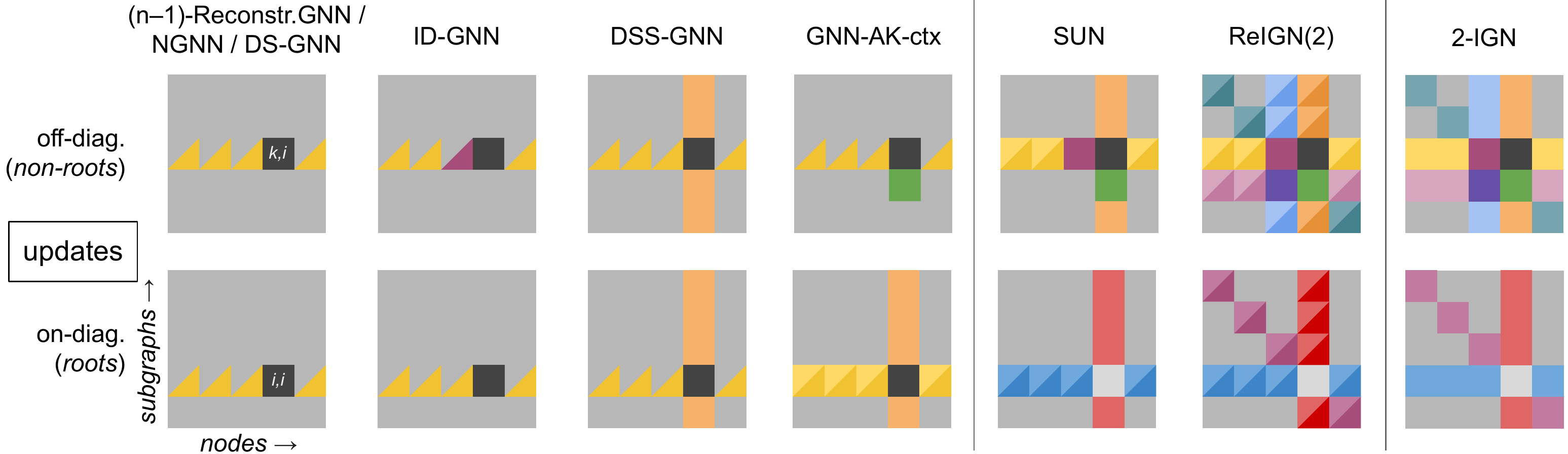}%\vspace{-2mm}
    \caption{\revision{C}omparison of aggregation and update rules in Subgraph GNNs, illustrated on an $n \times n$ matrix \revision{($n$ subgraphs with $n$ nodes). Top row: off-diagonal updates; bottom row: diagonal (root node) updates}. Each colour represents a different parameter. Full squares\revision{: global sum pooling; triangles: local pooling; two triangles: both local and global pooling}. See \Cref{app:figure} for more details.}%\vspace{-5mm}
    \label{fig:ops}
\end{figure}

\textbf{Recovering previous Subgraph GNNs.} The following result states that the \revision{\ReIGN{2}} generalises all known \revision{subgraph methods in $\Upsilon$}, as their layers are captured by a \ReIGN{2} stacking.
\begin{theorem}[\camera{\ReIGN{2} implements node-based Subgraph GNNs}]\label{thm:ReIGN_implements_SubgraphNetworks}
    Let $\mathcal{N}$ be a model in family $\Upsilon$ equipped with \citet{morris2019weisfeiler} message-passing base-encoders. For any instance $\mathcal{N}_\Theta$, there exists \ReIGN{2} instance $\mathcal{R}_\Omega$ such that $\mathcal{R}_\Omega \cong \mathcal{N}_\Theta$.
\end{theorem}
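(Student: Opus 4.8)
The plan is to prove the statement constructively, by a case analysis over the finite family $\Upsilon$: for each architecture $\mathcal{N}$ I would exhibit, layer by layer, a \ReIGN{2} stacking $\mathcal{R}_\Omega$ whose update and readout equations reproduce those of $\mathcal{N}_\Theta$ exactly on the relevant domain (bags of subgraphs generated by a node-based $\pi \in \Pi$). Since every model in $\Upsilon$ shares the template $\mu \circ \rho \circ \mathcal{S} \circ \pi$ of Equation~\ref{eq:subgraph_gnn}, and since \ReIGN{2} models are themselves of this form with a node-based policy, it suffices to (i) match the selection policies, (ii) reproduce each internal layer $L \in \mathcal{S}$ by a stack of \ReIGN{2} layers, and (iii) recover the pooling $\rho$ and final MLP $\mu$. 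The first point is immediate, as \ReIGN{2} admits any node-based $\bar\pi$: I would simply set $\bar\pi = \pi$.

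The crux of step (ii) is the observation that the layers of every $\mathcal{N} \in \Upsilon$ decompose into a per-subgraph base-encoder and a cross-subgraph information-sharing module, and that both families of operations appear among the expanded atomic terms of \ReIGN{2} (Table~\ref{tab:expansion} in \Cref{app:space}). For the base-encoder, I would recall that the \citet{morris2019weisfeiler} MPNN applied independently within each subgraph $k$ updates node $i$ from $x^{k}_{i}$ and the multiset $\{x^{k}_{j} : j \sim_k i\}$; this is precisely the local pooling term $\agg_{j \sim_k i} x^{k,(t)}_{j}$ that \ReIGN{2} introduces, so a single \ReIGN{2} layer with $\upsilon_{\theta_1}, \upsilon_{\theta_2}$ chosen to mimic the MPNN update realises one round of base message passing on the whole bag at once. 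This already settles \emph{DS-GNN} with policies in $\Pi$ (and the \emph{(n-1)-Reconstruction GNN}, whose layers are a Siamese MPNN) up to the readout.

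For the information-sharing models I would match each cross-subgraph primitive to a \ReIGN{2} term: the node-level aggregation across subgraphs used by \emph{DSS-GNN} and \emph{GNN-AK-ctx}, $\agg_{h \neq i} x^{h}_{i}$, is exactly the column-pooling term of Equation~\ref{eq:2IGN}; the global subgraph pooling and root-conditioned updates of \emph{NGNN}, \emph{ID-GNN} and \emph{GNN-AK} are recovered by combining the diagonal/off-diagonal split of \ReIGN{2} (which isolates root entries $x^{i}_{i}$ from non-root entries $x^{k}_{i}$, $k \neq i$) with the appropriate local or global aggregations; and the cross-subgraph connectivity aggregation of \emph{DSS-GNN} is recovered through the $\sim$-local variant $\agg_{j \sim i} x^{k}_{j}$. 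Since each atomic operation carries its own learnable linear map, choosing $\Omega$ to activate exactly the terms used by $\mathcal{N}$ and to zero out the rest yields a layer that agrees with $L$ on every input; composing these recovers $\mathcal{S}$, while $\rho$ and $\mu$ carry over unchanged because \ReIGN{2} leaves the pooling and final MLP of the template free.

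The main obstacle I anticipate is bookkeeping rather than conceptual: I must verify, architecture by architecture, that the specific semantics of each method's aggregations (which neighbourhood, which subgraphs, root vs.\ non-root) coincides with one of the enumerated \ReIGN{2} terms, and that the update functions $\upsilon_{\theta_1}, \upsilon_{\theta_2}$ — which apply a linear map per argument followed by summation — together with the interleaved nonlinearities can express the possibly nonlinear combination functions used inside each $\mathcal{N}$. Handling the latter would lean on the universality of MLPs and on stacking several \ReIGN{2} layers when a single linear-plus-nonlinearity step is insufficient. The subtlest individual cases will be \emph{GNN-AK} and \emph{GNN-AK-ctx}, whose subgraph-level readouts must be re-expressed as equivariant pooling terms before they can be assigned to \ReIGN{2} operations; I would treat these last, reusing the diagonal/off-diagonal machinery established for the simpler models.
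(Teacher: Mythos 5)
Your proposal matches the paper's proof essentially step for step: the paper likewise reduces the claim to layer-level implementation (since $\pi$, $\rho$, $\mu$ carry over unchanged in the shared template of Equation~\ref{eq:subgraph_gnn}), then proceeds model by model through $\Upsilon$, selecting the appropriate expanded terms from Table~\ref{tab:expansion} with their per-term linear maps, zeroing out the rest, and using two stacked \ReIGN{2} layers where a single linear-plus-$\sigma$ step does not suffice (DSS-GNN and ID-GNN), exactly as you anticipate. One small caution: your fallback to MLP universality is both unnecessary and too weak here --- ``implements'' demands exact equality, not approximation, but none is needed because the \citet{morris2019weisfeiler} updates are themselves linear per argument followed by a pointwise nonlinearity, which is precisely the form of a \ReIGN{2} layer.
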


This shows that known methods are generalised without resorting to the $\mathcal{O}(n^3)$ computational complexity of \IGNs{3}. \camera{\Cref{fig:ops} illustrates the aggregation and sharing rules used by previous Subgraph GNNs to update root and non-root nodes, and compare them with those of \ReIGN{2} and \IGNs{2}. We visualise these on the subgraph-node sub-tensor gathering node representations across subgraphs; here, root nodes occupy the main diagonal, non-root nodes all the remaining off-diagonal entries. As for to the \IGN{2} Equations~\ref{eq:2IGN}, the elements in these two partitions may be updated differently, so we depict them separately in, respectively, the bottom and top rows. In each depiction we colour elements depending on the set of weights parameterising their contribution in the update process, with two main specifications: (i) Elements sharing the same colour are pooled together; (ii) Triangles indicate such pooling is performed locally based on the subgraph connectivity at hand (two triangles indicate both local and global pooling ops are performed). E.g., note how DS-GNN equivalently updates the representations of root and non-root nodes via the same (local) message-passing layer (triangles, yellow, leftmost picture). By illustrating how \ReIGN{2} generalises previous node-based methods, this figure is to be interpreted as visual support for the Proof of \Cref{thm:ReIGN_implements_SubgraphNetworks} (see \Cref{app:space}). Additional details and discussions on \Cref{fig:ops} are found in~\Cref{app:figure}.}

Notably, as methods in $\Upsilon$ have been shown to be strictly stronger than \WL{2}~\citep{bevilacqua2022equivariant, cotta2021reconstruction, zhao2022from, zhang2021nested, you2021identity}, \Cref{thm:ReIGN_implements_SubgraphNetworks} implies the same lower bound for \ReIGN{2}. Nevertheless, when employing policies in $\Pi$ and \IGN{3}-computable invariant pooling functions $\rho$ (as those used by models in $\Upsilon$), \ReIGN{2}s are upper-bounded by \IGNs{3}:
\begin{proposition}[\camera{\IGNs{3} implement \ReIGN{2}}]\label{prop:3IGN_implements_ReIGN}
   For any pair of non-isomorphic graphs $G_1, G_2$ in family $\mathcal{G}$, if there exist policy $\bar{\pi} \in \Pi$, parameters $\Theta$ and \IGN{3}-computable invariant pooling function $\rho$ such that the \ReIGN{2} instance $\mathcal{R}_{\rho,\Theta,\bar{\pi}}$ distinguishes $G_1$, $G_2$, then there exist weights $\Omega$ for a \IGN{3} instance $\mathcal{M}_{\Omega}$ such that $G_1, G_2$ are distinguished by $\mathcal{M}_{\Omega}$ as well.
\end{proposition}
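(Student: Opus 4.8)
The plan is to mirror the two-stage argument behind Theorem~\ref{thm:3IGN_upperbounds_SubgraphNetworks}: I would show that a \IGN{3} can \emph{implement} (in the sense of Definition~\ref{def:implements}) every component of the \ReIGN{2} pipeline $\mathcal{R}_{\rho,\Theta,\bar{\pi}} = \mu \circ \rho \circ \mathcal{S}_\mathcal{R} \circ \bar{\pi}$, and then invoke the standard ``implementation $\Rightarrow$ preservation of separating power'' step. Two of the four stages are essentially free under the hypotheses: since $\bar{\pi} \in \Pi$, Lemma~\ref{lemma:3IGN_implements_policies} already supplies a stacking of \IGN{3} layers $\mathcal{M}_{\bar{\pi}} \cong \bar{\pi}$; the invariant $\rho$ is \IGN{3}-computable by assumption; and $\mu$ is an MLP that sits inside the \IGN{3} readout. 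The substance of the proof is therefore to show that each \ReIGN{2} layer $L^{(t)}$ acting on the subgraph-node tensor $\mathcal{X}$ can be realised by a short stacking of \IGN{3} layers acting on $\mathcal{Y} \in \mathbb{R}^{n^3 \times d}$, where (as in Section~\ref{sec:symmetries}) $\mathcal{X}$ is embedded in the diagonal plane $\{(k,i,i)\}$ — root nodes on $(k,k,k)$, non-root nodes on $(k,i,i),\,i\neq k$ — while the subgraph adjacencies $\mathcal{A}$ occupy the remaining orbits.

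First I would dispatch the ``global'' and purely structural terms of Equation~\ref{eq:2IGN}. Each global pooling term (e.g. $\agg_j x^{j}_{j}$, $\agg_{h\neq i} x^{h}_{i}$) and each copy or transpose term (e.g. the transpose $x^{i}_{k}$) is a linear $S_n$-equivariant map between the root- and non-root-node orbits of $\mathcal{Y}$; by the basis of \citet{maron2018invariant}, equivalently the pooling--broadcasting operators of \citet{Albooyeh2019incidence} over the $5$ orbits of $[n]^3$, every such map belongs to the \IGN{3} linear layer space. Since the \ReIGN{2} update applies a per-term linear transform, a summation, and a pointwise nonlinearity, this portion is implemented by a single \IGN{3} linear layer that projects each term onto the appropriate orbit, followed by the same MLP nonlinearity.

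The crux will be the local pooling terms, namely the subgraph-connectivity aggregations $\agg_{j \sim_k i} x^{k}_{j}$ and the original-connectivity aggregations $\agg_{j \sim i} x^{k}_{j}$, which are \emph{not} linear in $\mathcal{Y}$: they are adjacency-weighted contractions $\sum_j \mathcal{A}_{kij}\, x^{k}_{j} = \sum_j \mathcal{Y}_{kij}\,\mathcal{Y}_{kjj}$. I would implement these with the broadcast--multiply--pool mechanism already established in Lemma~\ref{lemma:3IGN_implements_SubgraphNetworks}: a linear layer broadcasts the diagonal feature $\mathcal{Y}_{kjj}$ onto the entry $(k,i,j)$ for all $i$ and retains the adjacency $\mathcal{Y}_{kij}$ alongside it in separate feature channels; the pointwise MLP then forms the product $\mathcal{Y}_{kij}\cdot\mathcal{Y}_{kjj}$ at each such entry exactly as in the base-encoder construction of Lemma~\ref{lemma:3IGN_implements_SubgraphNetworks}; finally a linear layer pools over the middle index and deposits $\sum_j \mathcal{Y}_{kij}\mathcal{Y}_{kjj}$ back on the diagonal plane at $(k,i,i)$. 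The original-connectivity variant is identical once one notes that the original adjacency $A$ is recoverable within $\mathcal{Y}$: it can be retained in auxiliary channels during the policy stage $\mathcal{M}_{\bar{\pi}}$ (which begins by broadcasting $A$ across the subgraph index) and left untouched by the subsequent edge-deletions and markings, so $\sim$ is available in every slice. As every atomic operation listed in Table~\ref{tab:expansion} is one of these global, transpose, or local types, composing their \IGN{3} realisations with the shared MLP yields a stacking $\mathcal{M}_{L^{(t)}} \cong L^{(t)}$.

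Chaining $\mathcal{M}_{\bar{\pi}}$, the per-layer stackings $\mathcal{M}_{L^{(t)}}$, the \IGN{3} realisation of $\rho$, and $\mu$ produces a single \IGN{3} instance $\mathcal{M}_\Omega$ that agrees with $\mathcal{R}_{\rho,\Theta,\bar{\pi}}$ on all bags generated by $\bar{\pi}$; hence whenever $\mathcal{R}_{\rho,\Theta,\bar{\pi}}$ separates $G_1$ and $G_2$, so does $\mathcal{M}_\Omega$, which is the claim. I expect the main obstacle to be exactly the local terms: one must verify that the broadcast--multiply--pool gadget (i) is genuinely $S_n$-equivariant when written over $\mathcal{Y}$'s orbits, (ii) faithfully separates the two distinct neighbourhood relations $\sim_k$ and $\sim$, in particular that the original connectivity survives the policy transformation for the ego-net policies in $\Pi$, and (iii) can carry the products required by all co-occurring local terms of a single \ReIGN{2} layer simultaneously, without interference across the feature channels.
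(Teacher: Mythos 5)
Your overall architecture matches the paper's proof exactly: factor $\mathcal{R}_{\rho,\Theta,\bar{\pi}} = \mu \circ \rho \circ \mathcal{S}_\mathcal{R} \circ \bar{\pi}$, obtain $\mathcal{M}_{\bar{\pi}}$ from Lemma~\ref{lemma:3IGN_implements_policies}, use the hypothesis on $\rho$, absorb $\mu$ into the \IGN{3} readout, handle the global/transpose terms with a single linear layer via the pooling--broadcasting basis, and reduce the local terms to a broadcast--nonlinearity--pool gadget with the original connectivity retained in auxiliary channels (the paper does all of this too). However, there is a genuine gap at the crux: you assert that the pointwise MLP ``forms the product $\mathcal{Y}_{kij}\cdot\mathcal{Y}_{kjj}$ \emph{exactly} as in the base-encoder construction of Lemma~\ref{lemma:3IGN_implements_SubgraphNetworks}.'' That is not what Lemma~\ref{lemma:3IGN_implements_SubgraphNetworks} does, and it cannot be done as stated: a pointwise ReLU network computes a piecewise-linear function and therefore cannot exactly compute the bilinear map $(a,x)\mapsto a\cdot x$ on an unbounded domain, even with a binary gate $a$, unless one has an a priori bound on the features. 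The paper's actual mechanism is a \emph{memorisation} argument: it constructs sparsification functions $f^{\odot}$ that zero out messages along absent edges, assembles the finite dataset of all entries $X^{(t,1)}_{(1),\cdot}, X^{(t,1)}_{(2),\cdot}$ arising from the two bags of $G_1, G_2$, and invokes Proposition~\ref{prop:memorisation_ours} (built on Theorem~3.1 of \citet{yun2019small}) to obtain MLPs that agree with $f^{\odot}$ \emph{on that finite set only}. This non-uniformity is doing real work --- it is precisely why the proposition (like Theorem~\ref{thm:3IGN_upperbounds_SubgraphNetworks}) is stated for a fixed pair of graphs rather than as a uniform implementation over all of $\mathcal{G}$. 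Your proposal never registers that the step is only achievable per-input-pair, so as written the central claim ``$\mathcal{M}_{L^{(t)}} \cong L^{(t)}$'' is unsupported; it must be weakened to agreement on the two bags at hand, with the memorisation (or an explicit boundedness/gating) argument supplying the sparsifying MLPs.

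A secondary omission: your taxonomy treats every entry of Table~\ref{tab:expansion} as a single local sum, but the terms [\#4.on] and [\#1.off] expand to \emph{nested} double aggregations of the form $\sum_{h \sim i}\sum_{j \sim_h h} x^h_j$, which one round of broadcast--sparsify--pool cannot produce. The paper handles these by running the broadcasting, sparsification and aggregation steps \emph{twice} in sequence (broadcasting the intermediate pooled result back onto the off-diagonal orbits before the second sparsification) prior to the final update; your construction would need the same two-round composition. Your points on retaining the original connectivity through the policy stage and on channel bookkeeping for co-occurring terms are sound and mirror the paper's treatment.
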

This proposition entails an upper-bound on the expressive power of \ReIGN{2}.
\begin{corollary}[\camera{\WL{3} upper-bounds \ReIGN{2}}]\label{cor:ReIGN_upperbound}
    The expressive power of a \ReIGN{2} model with policy $\pi \in \Pi$ and \IGN{3}-computable invariant pooling function $\rho$ is upper-bounded by \WL{3}.
\end{corollary}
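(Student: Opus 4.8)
The plan is to obtain this statement as an immediate consequence of \Cref{prop:3IGN_implements_ReIGN}, combined with the known equivalence between the separation power of \IGNs{3} and the \WL{3} test~\citep{geerts2020expressive,azizian2020characterizing}. The argument mirrors the proof of \Cref{cor:upperbound}, simply substituting the \ReIGN{2} embedding result (\Cref{prop:3IGN_implements_ReIGN}) for the Subgraph-GNN embedding result (\Cref{thm:3IGN_upperbounds_SubgraphNetworks}) used there; the whole corollary is therefore a short composition of two already-established facts.

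Concretely, I would argue by contraposition. Fix two non-isomorphic graphs $G_1, G_2 \in \mathcal{G}$ that are \emph{not} distinguished by \WL{3}, and suppose for contradiction that some \ReIGN{2} instance $\mathcal{R}_{\rho,\Theta,\bar{\pi}}$, with policy $\bar{\pi} \in \Pi$ and \IGN{3}-computable invariant pooling $\rho$, does distinguish them. By \Cref{prop:3IGN_implements_ReIGN}, there then exist weights $\Omega$ for a \IGN{3} instance $\mathcal{M}_{\Omega}$ that also distinguishes $G_1$ and $G_2$. Since the graph-separation power of \IGNs{3} coincides with that of the \WL{3} test (equivalently, \FWL{2})~\citep{geerts2020expressive,azizian2020characterizing}, this forces \WL{3} to separate $G_1$ and $G_2$ as well, contradicting our assumption. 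Hence no \ReIGN{2} instance satisfying the stated hypotheses can distinguish a pair left indistinguishable by \WL{3}, which is precisely the claimed upper bound.

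I expect essentially no new technical obstacle at this level: all the real work is already discharged by \Cref{prop:3IGN_implements_ReIGN}, whose proof constructs an explicit \IGN{3} simulation of every \ReIGN{2} layer, selection policy, and pooling step. The only points requiring care are bookkeeping ones — ensuring the hypotheses of \Cref{prop:3IGN_implements_ReIGN} are met (in particular that $\rho$ is \IGN{3}-computable, which is assumed, and that $\bar{\pi} \in \Pi$), and invoking the \IGN{3}/\WL{3} equivalence in the direction that bounds separation power \emph{from above}. No additional assumptions on the nonlinearities, aggregation functions, or the local and global pooling terms of \ReIGN{2} are needed beyond those already in force for \Cref{prop:3IGN_implements_ReIGN}.
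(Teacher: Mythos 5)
Your argument is correct and matches the paper's own proof essentially verbatim: both proceed by contradiction, invoke \Cref{prop:3IGN_implements_ReIGN} to produce a \IGN{3} instance $\mathcal{M}_\Omega$ separating the two graphs, and then apply the \IGN{3}/\WL{3} equivalence of \citet{geerts2020expressive,azizian2020characterizing} (Theorem~\ref{thm:kIGN_upperbound} in the appendix) to reach the contradiction. Your bookkeeping remarks about verifying the hypotheses ($\bar{\pi} \in \Pi$, \IGN{3}-computable $\rho$) are consistent with how the paper discharges them, so nothing further is needed.
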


We note that there may be layers equivariant to $S_n$ over $\mathbb{R}^{n^2}$ not captured by \ReIGN{2}. Yet, previously proposed Subgraph GNN layers do not exhaust the \ReIGN{2} design space, which remains largely unexplored. One, amongst possible novel constructions, is introduced next.\label{sec:reign}

% https://open.spotify.com/playlist/1nAh5gWnxeKdPGtwV0RHZ5?si=0c8d1f51dc734198
\subsection{A unifying architecture: Subgraph Union Networks} 

We now show how the \ReIGN{2} layer space can guide the design of novel, expressive, Subgraph GNNs. Our present endeavour is to conceive a computationally tractable architecture subsuming known node-based models: in virtue of this latter desideratum, we will dub this architecture ``Subgraph Union Network'' (\SUN{}). To design the base equivariant layer for \SUN{}, we select and combine specific aggregation terms suggested by the \ReIGN{2} framework:
\begin{align}
    x^{i,(t+1)}_{i} &= \sigma \Big( \upsilon_{\theta_1} \big( x^{i,(t)}_{i}, \sum_{j \sim_i i} x^{i,(t)}_{j}, \sum_{j} x^{i,(t)}_{j}, \sum_{h} x^{h,(t)}_{i}, \sum_{j \sim i} \sum_{h} x^{h,(t)}_{j} \big) \Big) \label{eq:sun_layer_on} \\
    x^{k,(t+1)}_{i} &= \sigma \Big( \upsilon_{\theta_2} \big( x^{k,(t)}_{i}, \sum_{j \sim_k i} x^{k,(t)}_{j}, x^{i,(t)}_{i}, x^{k,(t)}_{k}, \sum_{j} x^{k,(t)}_{j}, \sum_{h} x^{h,(t)}_{i}, \sum_{j \sim i} \sum_{h} x^{h,(t)}_{j} \big) \Big) \label{eq:sun_layer_off}
\end{align}
\noindent where $\upsilon$'s sum their inputs after applying a specific linear transformations to each term. One of the novel features of \SUN{} is that roots are transformed by a \emph{different set of parameters} ($\theta_1$) than the other nodes \footnote{As a result, the architecture can mark root nodes, for example.} ($\theta_2$, see Figure \ref{def:subgraph_gnns}). In practice, the first and last two terms in each one of~\Cref{eq:sun_layer_on,eq:sun_layer_off} can be processed by maximally expressive MPNNs \citep{morris2019weisfeiler,xu2019how}, the remaining terms by MLPs. We test these variants in our experiments, with their formulations in~\Cref{app:experiments}. \SUN{} remains an instantiation of the \ReIGN{2} framework:
\begin{proposition}[A \ReIGN{2} stacking implements \SUN{} layers]\label{prop:SUN_is_in_ReIGN}
    For any \SUN{} layer $L$ defined according to Equations~\ref{eq:sun_layer_on} and~\ref{eq:sun_layer_off}, there exists a \ReIGN{2} layer stacking $\mathcal{S}_L$, such that $\mathcal{S}_L \cong L$.
\end{proposition}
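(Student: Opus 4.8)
The plan is to verify that each of the seven terms appearing in the off-diagonal \SUN{} update of Equation~\ref{eq:sun_layer_off}, and the five terms in the on-diagonal update of Equation~\ref{eq:sun_layer_on}, is either a single atomic \ReIGN{2} operation or a short composition of such operations, and then to assemble them with the correct linear maps and activation. First I would dispatch the ``easy'' terms. Up to rewriting a full row sum $\sum_j x^k_j$ as $\agg_{j\neq k} x^{k}_j + x^{k}_k$ (and analogously a full column sum $\sum_h x^h_i = \agg_{h\neq i} x^h_i + x^i_i$) --- which is legitimate because $\upsilon_{\theta_1},\upsilon_{\theta_2}$ apply an independent linear map to each input term and then sum --- every term except the last is directly present in the \ReIGN{2} expansion of Equation~\ref{eq:2IGN}: the self/diagonal terms $x^k_i,x^i_i,x^k_k$, the global row and column pools, and the local row pools $\sum_{j\sim_i i}x^i_j$ and $\sum_{j\sim_k i}x^k_j$ obtained from the local-adjacency expansions of $\agg_{j\neq i}x^i_j$ and $\agg_{j\neq k}x^k_j$ catalogued in Table~\ref{tab:expansion}. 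The on-/off-diagonal split of \SUN{} (distinct parameters $\theta_1,\theta_2$ for roots and non-roots) is matched automatically because \ReIGN{2}, inheriting the structure of Equation~\ref{eq:2IGN}, already updates diagonal and off-diagonal entries with independent parameter sets.

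The crux, and the reason the statement asserts a \emph{stacking} rather than a single layer, is the remaining term $\sum_{j\sim i}\sum_h x^{h}_j$. This is a composition of two distinct poolings --- an aggregation over the subgraph index $h$ and a local aggregation over original-graph neighbours $j\sim i$ --- and no single atomic \ReIGN{2} term realizes it, since each such term performs one pooling over a single orbit pattern. I would therefore realize it in two \ReIGN{2} layers. In the first layer I compute, for every node $j$, the column pool $c_j = \sum_h x^h_j$ and write it into a fresh feature channel; because this quantity depends only on $j$, the corresponding \ReIGN{2} column-pool (namely $\agg_{h\neq i}x^h_i + x^i_i$) broadcasts $c_j$ to \emph{every} entry $(k,j)$ of column $j$, diagonal included. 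In the second layer I apply the local row pool over original-graph adjacency, $\agg_{j\sim i}(\cdot)$, to this new channel: at an entry indexed by node $i$ it returns $\sum_{j\sim i} c_j = \sum_{j\sim i}\sum_h x^{h}_j$, and since $c_j$ is constant across rows this value is again correctly broadcast to the whole column $i$, hence available at both the root entry $(i,i)$ and every off-diagonal entry $(k,i)$, exactly as both \SUN{} equations demand.

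To assemble the full layer, I would run these auxiliary \ReIGN{2} layers with identity activations, having them additionally identity-copy the original features into separate channels so that the originals are preserved unaltered alongside the computed channel holding $\sum_{j\sim i}\sum_h x^h_j$. A final \ReIGN{2} layer then reads all channels, applies the per-term linear maps realizing $\upsilon_{\theta_1}$ on the diagonal and $\upsilon_{\theta_2}$ off the diagonal, and the stack's terminal pointwise nonlinearity is set to $\sigma$; composing the linear auxiliary layers is harmless precisely because their activations are the identity. This produces a \ReIGN{2} stacking $\mathcal{S}_L$ that agrees with $L$ on every bag of node-based subgraphs, establishing $\mathcal{S}_L \cong L$.

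I expect the main obstacle to be this double-pooling term: one must both recognize that it cannot be a single $S_n$-equivariant linear operation on $\mathbb{R}^{n^2}$ and exhibit the explicit pool-broadcast-then-local-pool decomposition, with careful accounting of where each intermediate quantity is broadcast (the whole column versus a single entry). A secondary, bookkeeping-level concern is the treatment of the intermediate nonlinearities and channel management, which I handle by using identity activations and feature copying in the auxiliary layers.
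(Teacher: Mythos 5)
Your proposal is correct and follows essentially the same route as the paper's own proof: dispatch every single-pool term as an atomic \ReIGN{2} operation (completing the full row/column sums with the diagonal entries $x^{k}_{k}$, $x^{i}_{i}$), then realise the double-pooled term $\sum_{j \sim i}\sum_{h} x^{h}_{j}$ by a stacking that first writes the cross-subgraph column pool $\sum_h x^h_j$ into auxiliary channels (broadcast down each column via the global vertical-pool terms plus the diagonal) and then applies the local row pool over the original adjacency, with your identity-activation bookkeeping matching the convention the paper adopts and justifies via its ReLU sign-splitting trick. One minor quibble that does not affect validity: your side claim that no single atomic \ReIGN{2} operation realises the double-pooled term is overstated for the \emph{off-diagonal} update, since the expansion of term [\#1.off] in Table~\ref{tab:expansion} with the global choice for $h$ and the $j \sim i$ choice for $j$ yields exactly $\sum_{h}\sum_{j \sim i} x^{h}_{j}$ (the claim is true for the on-diagonal update, and the paper itself uses the two-layer construction for both).
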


Finally, we show that a stacking of \SUN{} layers can implement any layer of known node-based Subgraph Networks, making this model a principled generalisation thereof.
\begin{proposition}[A \SUN{} stacking implements known Subgraph GNN layers]\label{prop:SUN_implements_SubgraphNetworks}
    Let $\mathcal{N}$ be a model in family $\Upsilon$ employing \citet{morris2019weisfeiler} as a message-passing base-encoder. Then, for any layer $L$ in $\mathcal{N}$, there exists a stacking of \SUN{} layers $\mathcal{S}_L$ such that $\mathcal{S}_L \cong L$.
\end{proposition}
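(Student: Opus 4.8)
The plan is to prove the claim constructively and model-by-model, reusing the term-level bookkeeping already carried out for \Cref{thm:ReIGN_implements_SubgraphNetworks}. That result identifies, for every $\mathcal{N}\in\Upsilon$, the precise subset of \ReIGN{2} aggregation terms that its layers employ (the rules catalogued in \Cref{fig:ops}). Since a \SUN{} layer is itself a particular selection of \ReIGN{2} terms (\Cref{prop:SUN_is_in_ReIGN}), the heart of the argument is to check that \SUN{}'s fixed repertoire in \Cref{eq:sun_layer_on,eq:sun_layer_off} already contains all the terms that any $\mathcal{N}\in\Upsilon$ actually uses, and to recover the remaining, \emph{composite}, operations by stacking a bounded number of \SUN{} layers.

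First I would set up the dictionary between each model's operations and the \SUN{} terms. The self term $x^{k}_{i}$ together with the local aggregation $\sum_{j\sim_k i}x^{k}_{j}$ reproduces a single \citet{morris2019weisfeiler} message-passing step on subgraph $k$; tying $\theta_1$ and $\theta_2$ on these two terms and zeroing the rest recovers the Siamese base-encoder, hence \emph{DS-GNN}$_\Pi$ and $(n-1)$-Reconstruction GNN. The cross-subgraph node pooling $\sum_{h}x^{h}_{i}$ is exactly the node-sharing operation of DSS-GNN and GNN-AK-ctx, while the subgraph-level pooling $\sum_{j}x^{k}_{j}$ and the two root terms $x^{k}_{k}$ (centroid) and $x^{i}_{i}$ supply the subgraph/centroid readouts of GNN-AK and NGNN. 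Crucially, \SUN{}'s use of a \emph{separate} parameter block $\theta_1$ for diagonal (root) updates lets it reproduce the distinguished root processing of ID-GNN and the root-marking behaviour induced by NM and EGO+ policies. For each model I would then exhibit the explicit weight assignment (zeroing unused terms, tying root/non-root blocks where the model is uniform) that collapses \Cref{eq:sun_layer_on,eq:sun_layer_off} onto its layer.

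For operations that a model performs \emph{within a single layer} but that are not a single \SUN{} term — e.g. DSS-GNN's connectivity-aggregation branch, which message-passes over a cross-subgraph aggregate, or GNN-AK-ctx's context-then-reembed step — I would realise them by a short \SUN{} stacking: a first layer writes the required intermediate (e.g. $\sum_{h}x^{h}_{i}$, or the double aggregate $\sum_{j\sim i}\sum_h x^h_j$ already present in \Cref{eq:sun_layer_off}) into a dedicated feature channel, and a second layer consumes it. The linear maps inside $\upsilon_{\theta_1},\upsilon_{\theta_2}$ are used to route each needed quantity into its own coordinate block and to carry untouched features forward as the identity on the relevant channels, so that the pointwise $\sigma$'s do not corrupt the preserved information; this is exactly the sense of ``implements'' in \Cref{def:implements}, namely equality only on genuine graph-derived inputs. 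The base-encoder nonlinearity is absorbed by $\sigma$, and where an intermediate must remain linear I would keep it on a channel on which $\sigma$ acts as the identity over the attained range.

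The main obstacle I anticipate is twofold. First, I must verify that \emph{no} model in $\Upsilon$ relies on a term that \SUN{} drops relative to the full \IGN{2} basis of \Cref{eq:2IGN} — in particular the transpose $x^{i}_{k}$, the foreign-subgraph node sum $\sum_{j}x^{i}_{j}$, or the full diagonal sum $\sum_{j}x^{j}_{j}$ — since any such dependence could not be manufactured from \SUN{}'s terms by stacking alone. Second, for the sharing-based models I must confirm that their cross-subgraph aggregation matches the connectivity used in \SUN{}'s double-sum term (the original graph $\sim$) rather than a subgraph-specific one, and, where node-deletion alters connectivity, that the discrepancy is confined to the deleted root and is therefore absorbed by the separate root processing. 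Discharging these two checks against the explicit rule catalogue of \Cref{fig:ops} is the crux; the remaining per-model weight constructions are then routine.
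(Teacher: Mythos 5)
Your overall route coincides with the paper's own proof: a constructive, model-by-model collapse of the \SUN{} update equations onto each layer type, reusing the term bookkeeping from \Cref{thm:ReIGN_implements_SubgraphNetworks}. Like the paper, you tie the root and non-root parameters and keep only the self term and the subgraph-local aggregation for DS-GNN, NGNN and $(n{-}1)$-Reconstruction GNN; you activate $\sum_h x^{h}_i$ and the double aggregate $\sum_{j\sim i}\sum_h x^{h}_j$ for DSS-GNN (and, as you correctly note, the latter is native to \Cref{eq:sun_layer_off}, so the paper needs only a \emph{single} \SUN{} layer there, not a two-layer routing); and for ID-GNN you exploit the separate root parameters exactly as the paper does, which stacks two layers --- the first applying $W_{1,t}$ together with $W_{3,t}$ on the diagonal but $W_{1,t}$ together with $W_{2,t}$ off it in doubled channels, the second recombining (cf.\ \Cref{eq:linear_SUN,eq:linear_SUN_2}).

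The one genuine soft spot is your anticipated obstacle (1), specifically the assertion that a dependence on the foreign-subgraph readout $\sum_j x^{i}_j$ ``could not be manufactured from \SUN{}'s terms by stacking alone.'' GNN-AK and GNN-AK-ctx \emph{do} use precisely this term: their aggregation block (\Cref{eq:GNNAK_A}) updates the off-diagonal entry $x^{k}_i$ with $\sum_j x^{i}_j$, the readout of the subgraph rooted at $i$, whereas \SUN{}'s off-diagonal readout term is $\sum_j x^{k}_j$. Discharged literally, your check would therefore wrongly conclude that \SUN{} cannot implement GNN-AK(-ctx), contradicting the proposition. The resolution is that the term \emph{is} manufacturable by stacking, because the block's output at entry $(k,i)$ does not depend on $k$: the diagonal \SUN{} update contains $\sum_j x^{i}_j$ natively (it is the $\sum_j x^{k}_j$ term specialised to $k=i$), and the off-diagonal update takes $x^{i}_i$ as an input, so one layer can compute the aggregation exactly on the diagonal and a subsequent layer can broadcast it to every row through the $x^{i}_i$ term --- your own channel-routing manoeuvre, staged through the diagonal orbit rather than a feature channel. (The paper compresses this into one layer with $U^0_t=U^3_t=U^5_t=I$, which is exact on the diagonal; its off-diagonal $U^3$ contribution is the subgraph-$k$ readout, so the two-layer broadcast just described is the clean way to finish, and the proposition's ``stacking'' licenses it.) Your treatment of nonlinearities is consistent with the paper, which simply omits $\sigma$ after intermediate layers of a stacking; with that repair to check (1), your plan goes through and matches the paper's argument in all remaining respects.
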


\textbf{Beyond \SUN.} As it can be seen in Figure~\ref{fig:ops}, \SUN{} does not use all possible operations in the \ReIGN{2} framework. Notably, two interesting operations that are not a part of \SUN{} are: (i) The `transpose': $x^{k}_{i} = \upsilon_{\theta}(x^{i}_{k})$, which shares information between the $i$-th node in the $k$-th subgraph and the $k$-th node in the $i$-th subgraph; (ii) Local vertical pooling $x^{k}_{i}=\upsilon_{\theta}(\sum_{h \sim i} x^{h}_{i})$. The exploration of these and other operations is left to future work.\label{sec:sun}

\section{Experiments}\label{sec:exp}

\begin{table}[t]
\scriptsize
\caption{Test mean MAE on the Counting Substructures and ZINC-12k datasets. All Subgraph GNNs employ a GIN base-encoder. $^\dagger$This version of \textsc{GNN-AK+} does not follow the standard evaluation procedure.}
\label{tab:count-zinc}
\begin{minipage}{0.63\textwidth}
    % \scriptsize
    \begin{tabular}{l cccc  %ccc
        }
        \toprule
         \multirow{2}{*}{Method} &  \multicolumn{4}{c}{Counting Substructures (MAE $\downarrow$)}
         \\
         \cmidrule(l{2pt}r{2pt}){2-5}
                                      & Triangle & Tailed Tri. & Star & 4-Cycle
        \\ 
        \midrule  
         \textsc{GCN}~\citep{kipf2016semi}     & 0.4186 & 0.3248 & 0.1798 & 0.2822 
         \\
         \textsc{GIN}~\citep{xu2019how}       & 0.3569 & 0.2373 & 0.0224 & 0.2185 
         \\
         \textsc{PNA}~\citep{corso2020pna}        & 0.3532 & 0.2648 & 0.1278 & 0.2430 
         \\
         \textsc{PPGN}~\citep{maron2019provably}        & 0.0089 &0.0096  &0.0148  &{\bf 0.0090}  
         \\
         \midrule
        \textsc{GNN-AK}~\citep{zhao2022from} &   0.0934 & 0.0751 & 0.0168 & 0.0726 
        \\
        \textsc{GNN-AK-ctx}~\citep{zhao2022from} &  0.0885 &  0.0696 & 0.0162 &  0.0668 
        \\
        \textsc{GNN-AK$+$}~\citep{zhao2022from}  & 0.0123 & 0.0112 & 0.0150 & 0.0126 
        \\
        \midrule
        {\bf SUN (EGO)} &  0.0092 & 0.0105 & 0.0064 & 0.0140 
        \\
        {\bf SUN (EGO+)} & {\bf 0.0079} & {\bf0.0080} & {\bf 0.0064}  & 0.0105 
        \\
        \bottomrule         
    \end{tabular}
\end{minipage}%
\begin{minipage}{0.37\textwidth}
    \begin{tabular}{l l}
    \toprule
        Method & \textsc{ZINC (MAE $\downarrow$)} \\
        \midrule  
        \textsc{GCN}~\citep{kipf2016semi}        & 0.321 $\pm$ 0.009\\
        \textsc{GIN}~\citep{xu2019how}        & 0.163 $\pm$ 0.004\\
        \textsc{PNA}~\citep{corso2020pna} & 0.133 $\pm$ 0.011 \\
        \textsc{GSN}~\citep{bouritsas2022improving}        & 0.101 $\pm$ 0.010\\
        \textsc{CIN}~\citep{bodnar2021weisfeilerB} & {\bf 0.079} $\pm$ 0.006 \\
        \midrule
        \textsc{NGNN}~\citep{zhang2021nested} & 0.111 $\pm$ 0.003 \\
        \textsc{DS-GNN (EGO)}~\citep{bevilacqua2022equivariant} &  0.115 $\pm$ 0.004 \\
        \textsc{DS-GNN (EGO+)}~\citep{bevilacqua2022equivariant} & 0.105 $\pm$ 0.003 \\
        \textsc{DSS-GNN (EGO)}~\citep{bevilacqua2022equivariant} & 0.099 $\pm$ 0.003 \\
        \textsc{DSS-GNN (EGO+)}~\citep{bevilacqua2022equivariant} & 0.097 $\pm$ 0.006 \\
        \textsc{GNN-AK}~\citep{zhao2022from} & 0.105 $\pm$ 0.010 \\
        \textsc{GNN-AK-ctx}~\citep{zhao2022from} & 0.093 $\pm$ 0.002\\
        \textsc{GNN-AK+}~\citep{zhao2022from}$^\dagger$ & 0.086 $\pm$ ???\\
        \textsc{GNN-AK+}~\citep{zhao2022from} & 0.091 $\pm$ 0.011 \\
        \midrule
        \textsc{\bf SUN (EGO)}  & 0.083 $\pm$ 0.003\\
         \textsc{\bf SUN (EGO+)}  & 0.084 $\pm$ 0.002\\
        \bottomrule
    \end{tabular}
\end{minipage}
\end{table}

We experimentally validate the effectiveness of one \ReIGN{2} instantiation, comparing \SUN{} to previously proposed Subgraph GNNs\footnote{For GNN-AK variants~\citep{zhao2022from}, we run the code provided by the authors, for which the `context' and `subgraph' embeddings sum only over ego-network nodes.}. We seek to verify whether its theoretical representational power practically enables superior accuracy in expressiveness tasks and real-world benchmarks. Concurrently, we pay attention to the \emph{generalisation ability} of models in comparison. \SUN{} layers are less constrained in their weight sharing pattern, resulting in a more complex model. As this is traditionally associated with inferior generalisation abilities in low data regimes, we deem it important to additionally assess this aspect. %Here, we report our main results and refer readers to \Cref{app:experiments} for additional details and results.
Our code is also available.\footnote{\small \url{https://github.com/beabevi/SUN}}

\textbf{Synthetic.} Counting substructures and regressing graph topological features are notoriously hard tasks for GNNs~\citep{zhengdao2020can,dwivedi2021graph,corso2020pna}. We test the representational ability of \SUN{} on common benchmarks of this kind~\citep{zhengdao2020can, corso2020pna}. \Cref{tab:count-zinc} reports results on the substructure counting suite, on which \SUN{} attains state-of-the-art results in $3$ out of $4$ tasks. Additional results on the regression of global, structural properties are reported in Appendix~\ref{app:experiments}.

\textbf{Real-world.} On the molecular ZINC-12k benchmark (constrained solubility regression)~\citep{ZINCdataset, gomez2018auto,dwivedi2020benchmarking}, \SUN{} exhibits best performance amongst all domain-agnostic GNNs under the $500$k parameter budget, including other Subgraph GNNs (see~\Cref{tab:count-zinc}). A similar trend is observed on the large-scale Molhiv dataset from the OGB~\citep{hu2020open} (inhibition of HIV replication). Results are in~\Cref{tab:ogbg-hiv-baselines}. Remarkably, on both datasets, \SUN{} either outperforms or approaches HIMP~\citep{Fey/etal/2020}, GSN~\citep{bouritsas2022improving} and CIN~\citep{bodnar2021weisfeilerB}, GNNs which explicitly model rings. We experiment on smaller-scale TUDatasets~\citep{morris2020tudataset} in Appendix~\ref{app:experiments}, where we also compare selection policies.

\begin{wraptable}[14]{r}{0.355\textwidth}
    %\vspace{-5pt}
    %   \vspace{-10pt}
  \caption{Test results for OGB dataset. GIN base-encoder for each Subgraph GNN.
  }
  \label{tab:ogbg-hiv-baselines}
%   \vspace{-5pt}
%  \tiny
%  \renewcommand{\arraystretch}{1.1}
      \tiny
  \begin{tabular}{l|c}
      \toprule
      & \textsc{ogbg-molhiv} \\
        \textbf{Method} & \textbf{ROC-AUC (\%)} \\
      \midrule
        \textsc{GCN}~\citep{kipf2016semi} & 76.06$\pm$0.97  \\
        \textsc{GIN}~\citep{xu2019how} & 75.58$\pm$1.40 \\
        \textsc{PNA}~\citep{corso2020pna} & 79.05$\pm$1.32 \\
        \textsc{DGN}~\citep{beaini2021directional} & 79.70$\pm$0.97 \\
        \textsc{HIMP}~\citep{Fey/etal/2020} & 78.80$\pm$0.82 \\
        \textsc{GSN}~\citep{bouritsas2022improving} & 80.39$\pm$0.90 \\
        \textsc{CIN}~\citep{bodnar2021weisfeilerB} & {\bf80.94}$\pm$0.57\\
        \midrule
        \textsc{Reconstr.GNN}~\citep{cotta2021reconstruction} & 76.32$\pm$1.40 \\
        \textsc{DS-GNN (EGO+)}~\citep{bevilacqua2022equivariant} & 77.40$\pm$2.19  \\
        \textsc{DSS-GNN (EGO+)}~\citep{bevilacqua2022equivariant} & 76.78$\pm$1.66  \\
        \textsc{GNN-AK+}~\citep{zhao2022from} & 79.61$\pm$1.19 \\
        \midrule
        \textsc{\bf SUN (EGO+)} & 80.03$\pm$0.55\\
      \bottomrule
    %\vspace{-15pt}
    \end{tabular}
\end{wraptable}

\begin{figure}[t]
    \centering
    \begin{subfigure}{0.32\textwidth}
        \centering
        \includegraphics[width=\textwidth]{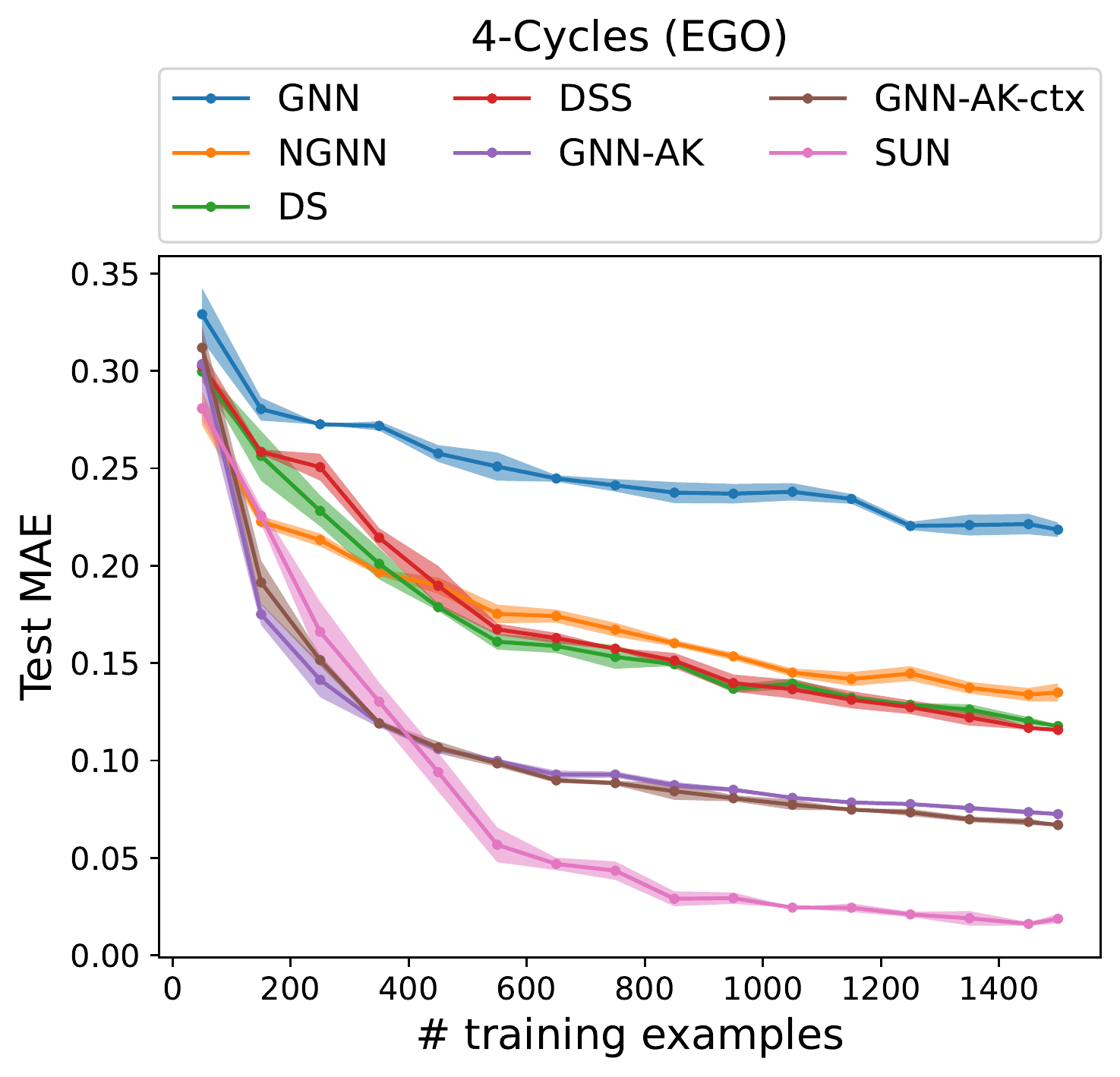} % first figure itself
        \caption{}
        \label{fig:4cycles-ego}
    \end{subfigure}
    \begin{subfigure}{0.32\textwidth}
        \centering
        \includegraphics[width=\textwidth]{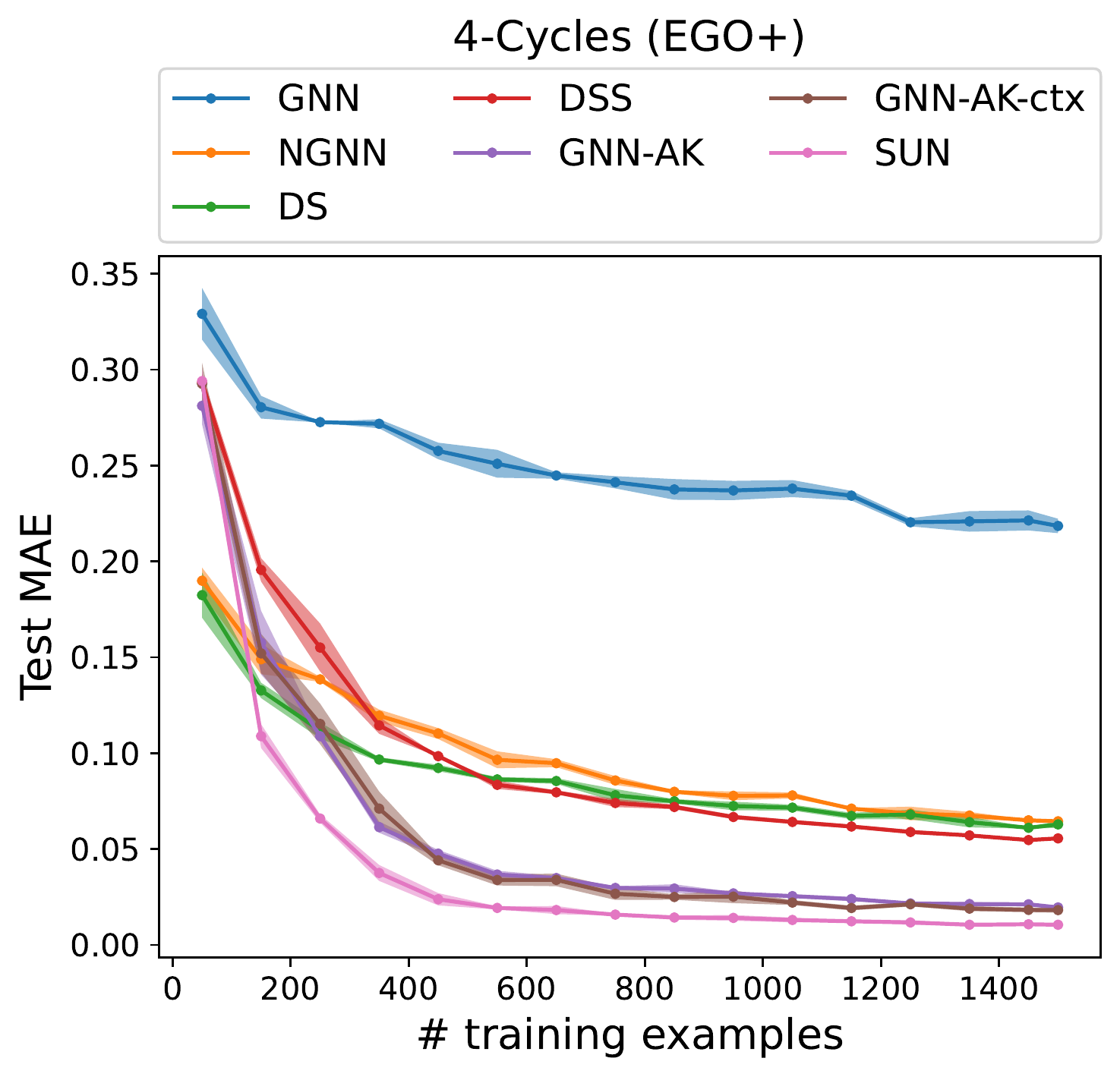} % second figure itself
        \caption{}
        \label{fig:4cycles-ego-plus}
    \end{subfigure}
    \begin{subfigure}{0.32\textwidth}
        \centering
        \includegraphics[width=\textwidth]{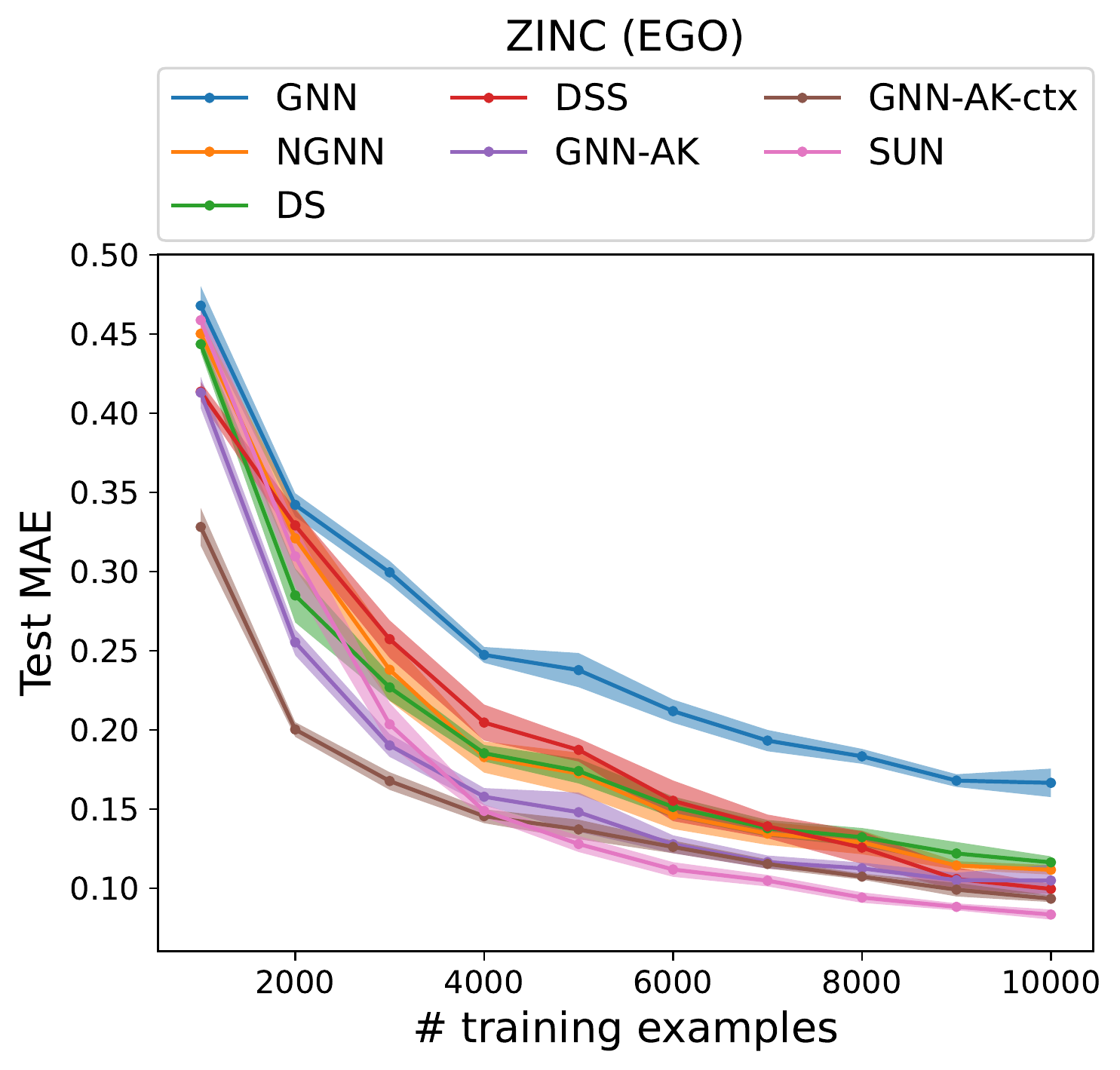} % third figure itself
        \caption{}
        \label{fig:zinc}
    \end{subfigure}
    \caption{Generalisation capabilities of Subgraph GNNs in the counting prediction task (\Cref{fig:4cycles-ego,fig:4cycles-ego-plus}) and in the ZINC-12k dataset (\Cref{fig:zinc}).}
\end{figure}

\textbf{Generalisation from limited data.} In this set of experiments we compare the test performance of Subgraph GNNs when trained on increasing fractions of the available training data. Each architecture is selected by tuning the hyperparameters with the entire training and validation sets. We run this experiment on the $4$-cycle counting task and the real-world ZINC-12k. We illustrate results in~\Cref{fig:4cycles-ego,fig:4cycles-ego-plus,fig:zinc}. Except for a short initial phase in the EGO policy, \SUN{} generalises better than other Subgraph GNNs on cycle-counting. On ZINC-12k, \SUN{} outruns DSS-, DS-GNN and GNN-AK variants from, respectively, $20, 30$ and $40\%$ of the samples.
These results demonstrate that \SUN{}'s expressiveness is not at the expense of sample efficiency, suggesting that its modelled symmetries guarantee strong representational power while retaining important inductive biases for learning on graphs.

\section{Conclusions}
Our work unifies, extends, and analyses the emerging class of Subgraph GNNs. Notably, we demonstrated that the expressive power of these methods is bounded by \WL{3}. 
%We note that for many applications such as molecule property prediction, \WL{3} might be expressive enough: our contribution should not be understood as a negative result, rather as an encouragement towards a more systematic study of models whose expressivity lies between \texttt{2-} and \WL{3}. 
Towards a systematic study of models whose expressivity lies between \texttt{1-} and \WL{3}, we proposed a new family of layers for the class of Subgraph GNNs and, unlike most previous works on the expressive power of GNNs, we also investigated the generalisation abilities of these models, for which \SUN{} shows considerable improvement. Appendix \ref{app:future} lists several directions for future work, including an extension of our work to higher-order node-based policies.

\textbf{Societal impact.} We do not envision any negative, immediate societal impact originating from our theoretical results, which represent most of our contribution. Experimentally, our model has shown promising results on molecular property prediction tasks and strong generalisation ability in low-data regimes. This leads us to believe our work may contribute to positively impactful pharmaceutical research, such as drug discovery~\citep{drug_discovery, AlRaPaPa2017}.\label{sec:concl}

\begin{ack}
The authors are grateful to Joshua Southern, Davide Eynard, Maria Gorinova, Guadalupe Gonzalez, Katarzyna Janocha for valuable feedback on early versions of the manuscript. They would like to thank Bruno Ribeiro and Or Litany for helpful discussions, Giorgos Bouritsas for constructive conversations about the generalisation experiments and, in particular, Marco Ciccone for the precious exchange on sharpness-aware optimisation and Neapolitan pizza.
MB is supported in part by ERC Consolidator grant no 724228 (LEMAN). No competing interests are declared.

% Use unnumbered first level headings for the acknowledgments. All acknowledgments
% go at the end of the paper before the list of references. Moreover, you are required to declare
% funding (financial activities supporting the submitted work) and competing interests (related financial activities outside the submitted work).
% More information about this disclosure can be found at: \url{https://neurips.cc/Conferences/2022/PaperInformation/FundingDisclosure}.

% Do {\bf not} include this section in the anonymized submission, only in the final paper. You can use the \texttt{ack} environment provided in the style file to autmoatically hide this section in the anonymized submission.
\end{ack}
% \newpage

\bibliographystyle{plainnat}
\bibliography{references,app_references}

\newpage

%%%%%%%%%%%%%%%%%%%%%%%%%%%%%%%%%%%%%%%%%%%%%%%%%%%%%%%%%%%%
\section*{Checklist}

%%% BEGIN INSTRUCTIONS %%%
% The checklist follows the references.  Please
% read the checklist guidelines carefully for information on how to answer these
% questions.  For each question, change the default \answerTODO{} to \answerYes{},
% \answerNo{}, or \answerNA{}.  You are strongly encouraged to include a {\bf
% justification to your answer}, either by referencing the appropriate section of
% your paper or providing a brief inline description.  For example:
% \begin{itemize}
%   \item Did you include the license to the code and datasets? \answerYes{See Section.}
%   \item Did you include the license to the code and datasets? \answerNo{The code and the data are proprietary.}
%   \item Did you include the license to the code and datasets? \answerNA{}
% \end{itemize}
% Please do not modify the questions and only use the provided macros for your
% answers.  Note that the Checklist section does not count towards the page
% limit.  In your paper, please delete this instructions block and only keep the
% Checklist section heading above along with the questions/answers below.
%%% END INSTRUCTIONS %%%

\begin{enumerate}

\item For all authors...
\begin{enumerate}
  \item Do the main claims made in the abstract and introduction accurately reflect the paper's contributions and scope?
    \answerYes{}
  \item Did you describe the limitations of your work?
    \answerYes{We discussed limitations of several previous works, as well as our own model, throughout the paper as our main contribution.}
  \item Did you discuss any potential negative societal impacts of your work?
    \answerYes{See Section~\ref{sec:concl}.}
  \item Have you read the ethics review guidelines and ensured that your paper conforms to them?
    \answerYes{}
\end{enumerate}

\item If you are including theoretical results...
\begin{enumerate}
  \item Did you state the full set of assumptions of all theoretical results?
    \answerYes{}
        \item Did you include complete proofs of all theoretical results?
    \answerYes{See~\Cref{app:upperbound,app:space}.}
\end{enumerate}

\item If you ran experiments...
\begin{enumerate}
  \item Did you include the code, data, and instructions needed to reproduce the main experimental results (either in the supplemental material or as a URL)?
    \answerYes{See~\Cref{sec:exp} and \Cref{app:experiments}.}
  \item Did you specify all the training details (e.g., data splits, hyperparameters, how they were chosen)?
    \answerYes{See~\Cref{app:experiments}.}
        \item Did you report error bars (e.g., with respect to the random seed after running experiments multiple times)?
    \answerYes{We report the standard deviation computed over multiple seeds for experiments on ZINC12k (\Cref{tab:count-zinc}), ogbg-molhiv (\Cref{tab:ogbg-hiv-baselines}) and on all generalisation experiments (\Cref{fig:4cycles-ego,fig:4cycles-ego-plus,fig:zinc}). We report the standard deviation for the ``Counting Substructures'' experiments (\Cref{tab:count-zinc}) in~\Cref{app:experiments}.}
        \item Did you include the total amount of compute and the type of resources used (e.g., type of GPUs, internal cluster, or cloud provider)?
    \answerYes{See~\Cref{app:experiments}. }
\end{enumerate}

\item If you are using existing assets (e.g., code, data, models) or curating/releasing new assets...
\begin{enumerate}
  \item If your work uses existing assets, did you cite the creators?
    \answerYes{}
  \item Did you mention the license of the assets?
    \answerYes{See~\Cref{app:experiments}.}
  \item Did you include any new assets either in the supplemental material or as a URL?
    \answerNA{}
  \item Did you discuss whether and how consent was obtained from people whose data you're using/curating?
    \answerNA{}
  \item Did you discuss whether the data you are using/curating contains personally identifiable information or offensive content?
    \answerNA{}
\end{enumerate}

\item If you used crowdsourcing or conducted research with human subjects...
\begin{enumerate}
  \item Did you include the full text of instructions given to participants and screenshots, if applicable?
    \answerNA{}
  \item Did you describe any potential participant risks, with links to Institutional Review Board (IRB) approvals, if applicable?
    \answerNA{}
  \item Did you include the estimated hourly wage paid to participants and the total amount spent on participant compensation?
    \answerNA{}
\end{enumerate}

\end{enumerate}

%%%%%%%%%%%%%%%%%%%%%%%%%%%%%%%%%%%%%%%%%%%%%%%%%%%%%%%%%%%%

\newpage

\appendix
\begin{Large}
    \begin{center}
        Supplementary Materials for \\
        \textbf{Understanding and Extending Subgraph GNNs \\
        by Rethinking Their Symmetries}
    \end{center}
\end{Large}
\section{Subgraph GNNs}\label{app:subgraphs}
\subsection{Review of Subgraph GNN architectures}

Here we review a series of previously proposed Subgraph GNNs, showing how the proposed architectures are captured by the formulation of Equation~\ref{eq:subgraph_gnn}. We report this here for convenience:
\begin{equation*}
    (A, X) \mapsto \big ( \mu \circ \rho \circ \mathcal{S} \circ \pi \big ) ( A, X ).
\end{equation*}
\noindent {\bf \camera{$(n-k)$-}Reconstruction GNN} \camera{by} \citet{cotta2021reconstruction} is the simplest Subgraph GNN: it applies a Siamese MPNN base-encoder $\gamma$ to \camera{$k$-}node-deleted subgraphs of the original graph and then processes the resulting representations with a set network. \camera{When $k=1$, these models are \emph{node-based} Subgraph GNNs.} More formally, $\pi = \pi_{\mathrm{ND}}$, a DeepSets network~\citep{zaheer2017deep} implements $\mu \circ \rho$, and $\mathcal{S}$ is realised with layers of the form:
\begin{equation}\label{eq:DSGNN}
    X^{i,(t+1)} = \gamma_t(A^i,X^{i,(t)})
\end{equation}

\noindent {\bf Equivariant Subgraph Aggregation Network (ESAN)} \camera{by} \citet{bevilacqua2022equivariant} extends Reconstruction GNNs in two main ways: First, introducing subgraph selection policies that allow for more general sets of subgraphs such as edge-deleted policies. Second, performing an in-depth equivariance analysis which advocates the use of the DSS layer structure introduced by \citet{maron2020learning}. This choice gives rise to a more expressive architecture that shares information between subgraphs. Formally, in ESAN, $\mathcal{S}$ is defined as a sequence of equivariant layers which process subgraphs as well as the aggregated graph $G_{\mathrm{agg}} = \big( A^{\mathrm{agg}}, X^{\mathrm{agg}} \big ) = \sum_{G^i\in B_G} G^i$. Each layer in $\mathcal{S}$ is of the following form: 
\begin{equation}\label{eq:DSSGNN}
    X^{i,(t+1)} = \gamma^{0}_{t}(A^i,X^{i, (t)}) + \gamma^{1}_{t}(A^{\mathrm{agg}},X^{\mathrm{agg}, (t)})
\end{equation}
\noindent with $\gamma^{0}_{t}, \gamma^{1}_{t}$ being two \emph{distinct} MPNN base-encoders. The above architecture is referred to as DSS-GNN. \citet{bevilacqua2022equivariant} also explore disabling component $\gamma^{1}$ and term this simplified model DS-GNN (which reduces to a Reconstruction GNN of \citet{cotta2021reconstruction} \camera{under node-deletion policies}\footnote{For this reason, we will only consider DS-GNN in the following proofs.}). In the same work, the considered policies are edge-covering~\citep[Definition~7]{bevilacqua2022equivariant}, that is, each edge in the original connectivity appears in the connectivity of at least one subgraph. In view of this observation, the authors consider and implement a simplified version of DSS-GNN, whereby $\gamma^1$'s operate on the original connectivity $A$, rather than $A^{\mathrm{agg}}$, that is:
\begin{align}\label{eq:DSSGNN_orig}
    X^{i,(t+1)} = \gamma^{0}_{t}(A^i,X^{i, (t)}) + \gamma^{1}_{t}(A, X^{\mathrm{agg}, (t)}).
\end{align}
\camera{Both DS- and DSS-GNN are \emph{node-based} Subgraph GNNs when equipped with policies in $\Pi$. Also, these policies} are clearly edge-covering and, in this work, we will consider DSS-GNN as defined by Equation~\ref{eq:DSSGNN_orig}.

\noindent {\bf GNN as Kernel (GNN-AK).} \citet{zhao2022from} employs an ego-network policy ($\pi=\pi_{\mathrm{EGO}}$), while each layer in $\mathcal{S}$ is structured as $ A \circ S$, where $S$ is a stacking of layers in the form of Equation~\ref{eq:DSGNN} and $A$ is an aggregation/pooling block in the form:
\begin{equation}\label{eq:GNNAK}
    x^{i,(t+1)}_j = \phi \big( h^{j,(t)}_j, \sum_\ell h^{j,(t)}_\ell  \big)
\end{equation}
\noindent with $\phi$ either concatenation or summation, $h_j^{i,(t)} = \big ( \gamma_{t}(A^i, X^{i,(t)}) \big )_j^\top$, for MPNN $\gamma_{t}$.
The authors introduce an additional variant (GNN-AK-ctx in the following) which also pools information from nodes in other subgraphs:
\begin{equation}\label{eq:GNNAK-ctx}
    x^{i, (t+1)}_j = \phi \big( h^{j,(t)}_j, \sum_\ell h^{j,(t)}_\ell, \sum_\ell h^{\ell,(t)}_j \big).
\end{equation}
In this paper we consider a more general case of global summation in Equations \eqref{eq:GNNAK}-\eqref{eq:GNNAK-ctx} \footnote{The original paper considers summation over each ego network which is specific to a particular policy. Such summations can be dealt with by adding masking node features.}.

\noindent {\bf Nested GNN (NGNN).} \citet{zhang2021nested} also uses $\pi = \pi_{\mathrm{EGO}}$ and applies an independent MPNN to each ego-network, effectively structuring $\mathcal{S}$ as a stack of layers in the form of Equation~\ref{eq:DSGNN}. This architecture differs in the way block $\rho$ is realised, namely by pooling the obtained local representations and running an additional MPNN $\gamma_\rho$ on the original graph:
\begin{equation}\label{eq:NestedGNN}
    x^{(\rho)}_j = \sum_\ell x^{j,(T)}_\ell \thinspace, \qquad x_G = \sum_j \big( \gamma_{\rho} (A, X^{(\rho)}) \big)_j
\end{equation}

\noindent {\bf ID-GNN.} \citet{you2021identity} proposes distinguishing messages propagated by ego-network roots. This architecture uses $\pi = \pi_{\mathrm{EGO+}(T)}$ policy and $\mathcal{S}$ as a $T$-layer stacking performing independent \emph{heterogeneous} message-passing on each subgraph:
\begin{equation}\label{eq:IDGNN}
    x^{i, (t+1)}_j = \upsilon_t \big( x^{i,(t)}_j, \sum_{\ell \sim_i j, \ell \neq i} \mu_{0,t}(x^{i,(t)}_\ell) + \mathds{1}_{[i \sim_i j]} \cdot \mu_{1,t}(x^{i,(t)}_i) \big)
\end{equation}
\noindent where $\mathds{1}_{[i \sim_i j]}$ if $i \sim_i j$, $0$ otherwise, and $\sim_i$ denotes the connectivity induced by $A^i$. \camera{
GNN-AK, GNN-AK-ctx, NGNN and ID-GNN are all, intrinsically, \emph{node-based} Subgraph GNNs.}

\camera{Interestingly, we notice that the contemporary work by \citet{papp2022theoretical} suggested using a node marking policy as a more powerful alternative to node deletion. Finally, we note that the model by \citet{vignac2020building} may potentially be considered as a Subgraph GNN as well.}

\revision{
    \subsection{The computational complexity of Subgraph GNNs}
    
    Other than proposing Subgraph GNN architectures, the works by~\citet{bevilacqua2022equivariant, zhang2021nested, zhao2022from} also study their computational complexity. In particular, \citet{bevilacqua2022equivariant} describe both the space and time complexity of a subgraph method equipped with generic subgraph selection policy and an MPNN as a base encoder. Given the inherent locality of traditional message-passing, the authors derive asymptotic bounds accounting for the sparsity of input graphs. Let $n, d$ refer to, respectively, the number of nodes and \emph{maximum node degree} of an input graph generating a subgraph bag of size $b$. The forward-pass asymptotic time complexity amounts to $\mathcal{O}(b \cdot n \cdot d)$, while the memory complexity to $\mathcal{O}(b \cdot (n + n \cdot d))$. For a node-based selection policy, $b = n$ so these become, respectively, $\mathcal{O}(n^2 \cdot d)$ and $\mathcal{O}(n \cdot (n + n d))$. The authors stress the explicit dependence on $d$, which is, on the contrary, lacking in \IGNs{3}. As we show in \Cref{app:upperbound}, \IGNs{3} subsume node-based subgraph methods, but at the cost of a cubic time and space complexity ($\mathcal{O}(n^3)$)~\citep{bevilacqua2022equivariant}. Amongst others, this is one reason why Subgraph GNNs may be preferable when applied to sparse, real world graphs (where we typically have $d \ll n$).
    
    As we have noted in the above, more sophisticated Subgraph GNNs layers may feature ``global'' pooling terms other than local message-passing: see, e.g., term $X^\mathrm{agg}$ for DSS-GNN~\citep{bevilacqua2022equivariant} in \Cref{eq:DSSGNN,eq:DSSGNN_orig} or the ``subgraph'' and ``context'' encodings in the GNN-AK-ctx model~\citep{zhao2022from} (second and third term in the summation of \Cref{eq:GNNAK-ctx}). In principle, each of these operations require a squared asymptotic computational complexity ($\mathcal{O}(n^2)$). However, we note that these terms are shared in the update equations of nodes / subgraphs: in practice, it is only sufficient to perform the computation once. In this case, the asymptotic time complexity would amount to $\mathcal{O}(n^2 \cdot d + n^2)$, i.e., still $\mathcal{O}(n^2 \cdot d)$. Therefore, these Subgraph GNNs retain the same asymptotic complexity described above.

    Our proposed \SUN{} layers involve the same ``local'' message-passing and ``global'' pooling operations: the above analysis is directly applicable, yielding the same asymptotic bounds.
 
    We conclude this section by noting that, for some specific selection policies, these bounds can be tightened. In particular, let us consider ego-networks and refer to $c$ as the maximum ego-network size. As observed by~\citet{zhang2021nested}, the time complexity of a Subgraph GNN equipped with such policy becomes $\mathcal{O}(n \cdot c \cdot d)$. Importantly, When ego-networks are of limited depth, the size of the subgraphs may be significantly smaller than that of the input graph; in other words $c \ll n$, reducing the overall forward-pass complexity.
}

\section{Proofs for Section \ref{sec:upperbound} -- Subgraph GNNs and 3-IGNs}\label{app:upperbound}
\subsection{3-IGNs as computational models}

Before proving the results in Section~\ref{sec:upperbound}, we first describe here a list of simple operations that are computable by \IGNs{3}. These opearations are to be intended as `computational primitives' that can then be invoked and reused together in a way to program these models to implement more complex functions. We believe this effort not only serves our need to define those atomic operations required to simulate Subgraph GNNs, but also, it points out to an interpretation of \IGNs{} as (abstract) comprehensive computational models beyond deep learning on hypergraphs. We start by describing the objects on which \IGNs{3} operate, and how they can be interpreted as bags of subgraphs.

\subsubsection{The 3-IGN computational data structure}\label{app:3IGN_data_struct}

The main object on which a \IGN{3} operates is a `cubed' tensor in $\mathbb{R}^{n^3 \times d}$, typically referred to as $\mathcal{Y}$ in the following. An $S_n$ permutation group act on the first three dimensions of this tensor as:
\begin{align*}
    \big( \sigma \cdot \mathcal{Y} \big)_{ijkl} &= \mathcal{Y}_{\sigma^{-1}(i)\sigma^{-1}(j)\sigma^{-1}(k)l} \quad \forall \sigma \in S_n
\end{align*}
whereas the last dimension ($l$ above) hosts $d$ `channels' not subject to the permutation group.

The action of the permutation group on $[n]^k$ decomposes it into orbits, that is equivalence classes associated with the relation $\sim_S$ defined as:
\begin{equation*}
    \forall x, y \in [n]^k, x \sim_S y \iff \exists \sigma \in S_n : \sigma(x) = y
\end{equation*}

Orbits induce a partitioning of $[n]^k$. In particular, for $k=3$ we have:
\begin{align}
    [n]^3 &= \morbit{iii} \sqcup \morbit{iij} \sqcup \morbit{iji} \sqcup \morbit{ijj} \sqcup \morbit{ijk} \label{eq:orbits} \\ 
    \morbit{iii} &= \big \{ (i,i,i) \thinspace | \thinspace i \in [n] \big \}, \nonumber \\
    \morbit{iij} &= \big \{ (i,i,j) \thinspace | \thinspace i,j \in [n], i \neq j \big \}, \nonumber \\
    \morbit{iji} &= \big \{ (i,j,i) \thinspace | \thinspace i,j \in [n], i \neq j \big \}, \nonumber \\
    \morbit{ijj} &= \big \{ (i,j,j) \thinspace | \thinspace i,j \in [n], i \neq j \big \}, \nonumber \\
    \morbit{ijk} &= \big \{ (i,j,k) \thinspace | \thinspace i,j \in [n], i \neq j \neq k \big \} \nonumber 
\end{align}

\begin{figure}[t]
    \centering%\vspace{-1mm}
    \includegraphics[width=0.75\linewidth]{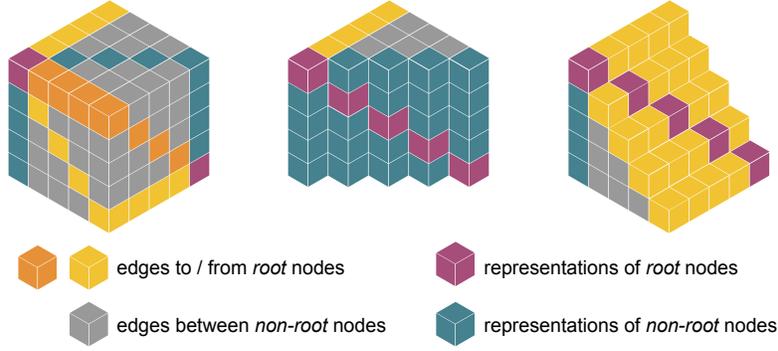}%\vspace{-2mm}
    \caption{Depiction of cubed tensor $\mathcal{Y}$, its orbit-induced partitioning and the related semantics when $\mathcal{Y}$ is interpreted as a bag of node-based subgraphs, $n = 5$. Elements in the same partition are depicted with the same colour. Left: the whole tensor. Middle and right: sections displaying orbit representations $X_{iii}, X_{ijj}, X_{iij}$ in their entirety.}\vspace{-3mm}
    \label{fig:orbit_dec}
\end{figure}

As studied in~\citet{Albooyeh2019incidence}\footnote{The authors study the more general case of incidence tensors of any order, for which our three-way cubed tensor is a special case.}, each of these orbits indexes a specific face-vector, that is a (vectorised) sub-tensor of $\mathcal{Y}$ with a certain number of free index variables, which determines its `size'. Importantly, this entails that the partitioning in Equation~\ref{eq:orbits} induces the same partitioning on $\mathcal{Y}$, so that we can interpret the cubed tensor $\mathcal{Y}$ as a disjoint union of following face-vectors\footnote{The use of symbol `$\cong$', referring to an isomorphism, follows~\citet{Albooyeh2019incidence}.}:
\begin{align}
    \mathcal{Y} &\cong X_{iii} \sqcup X_{iij} \sqcup X_{iji} \sqcup X_{ijj} \sqcup X_{ijk} \label{eq:partitioning} \\ 
    &\qquad X_{iii} = \big(\mathcal{Y}\big)_{\morbit{iii}} \qquad \mathrm{(size\ 1)},\nonumber \\
    &\qquad X_{iij} = \big(\mathcal{Y}\big)_{\morbit{iij}} \qquad \mathrm{(size\ 2)},\nonumber \\
    &\qquad X_{iji} = \big(\mathcal{Y}\big)_{\morbit{iji}} \qquad \mathrm{(size\ 2)},\nonumber \\
    &\qquad X_{ijj} = \big(\mathcal{Y}\big)_{\morbit{ijj}} \qquad \mathrm{(size\ 2)},\nonumber \\
    &\qquad X_{ijk} = \big(\mathcal{Y}\big)_{\morbit{ijk}} \qquad \mathrm{(size\ 3)} \nonumber 
\end{align}
\noindent or, more compactly, as $\mathcal{Y} \cong \bigsqcup_{\omega} X_{\omega}, \omega \in \Omega = \{ iii, iij, iji, ijj, ijk \}$. 
According to this notation, we consider subscripts in $\Omega$ for $X$ as indexing variables for $\mathcal{Y}$. Importantly, since they directly reflect the set of indexes in \orbit{iii}, \orbit{iij}, \orbit{iji}, \orbit{ijj}, \orbit{ijk}, when subscripting $X$, $i,j,k$ are always distinct amongst each other. At the same time, as it can be observed from the definition above, we keep duplicate indexing variables in the subscripts of $X$'s to highlight the characteristic equality pattern of the corresponding orbit. As an example, element $\big(\mathcal{Y}\big)_{1,1,1} \in \mathbb{R}^d$ uniquely belongs to face-vector $X_{iii}$, elements $\big(\mathcal{Y}\big)_{1,1,2}, \big(\mathcal{Y}\big)_{1,2,1}, \big(\mathcal{Y}\big)_{1,2,2}$ to, respectively, face-vectors $X_{iij}, X_{iji}, X_{ijj}$, $\big(\mathcal{Y}\big)_{1,2,3}$ to $X_{ijk}$. Since each of these face-vectors is uniquely associated with a particular orbit, we will more intuitively refer to them as `orbit representations' in the following. 

From the considerations above, it is interesting to notice that orbit representations have a precise, characteristic collocation within the cubed tensor $\mathcal{Y}$, directly induced by the equality patterns of the orbits they are indexed by. In particular, $X_{iii}$ corresponds to elements on the main diagonal of the cube, $X_{iij}$, $X_{iji}$, $X_{ijj}$ to its three diagonal planes (with main-diagonal excluded), while $X_{ijk}$ corresponds to the the overall adjacency cube (with the main diagonal and the three diagonal planes excluded). The described partitioning is visually depicted in~\citet[Figure in page 6]{Albooyeh2019incidence} and \Cref{fig:orbit_dec} (\camera{corresponding to \Cref{fig:orbit_dec_sem} in the main paper}).

\subsubsection{Bag-of-subgraphs interpretation}\label{app:bag_interpretation}

An important observation underpinning the majority of our results is that the described cubed tensor $\mathcal{Y}$ can represent bags of node-based subgraphs --- this is in contrast with standard interpretations whereby this tensor represents a $3$-hypergraph~\citep{maron2018invariant,maron2019provably}. In particular, as depicted in Figure~\ref{fig:symmetries}, we arrange subgraphs on the first axis, whereas the second and third axes index nodes.

Accordingly, the adjacency matrix and node representations for subgraph $\bar{i}$ are in sub-tensor $\mathcal{Z}_{\bar{i}} = (\mathcal{Y})_{\bar{i},j,k,l} \in \mathbb{R}^{n^2 \times d}$, with $j,k = 1, \mathellipsis, n, l =1, \mathellipsis, d$. Here, coherently with~\citep{maron2018invariant}, we assume node representations are stored in the on-diagonal entries ($j=k$) of $\mathcal{Z}_{\bar{i}}$, while off-diagonal terms ($j \neq k$) host edges, i.e. connectivity information.
% \fab{maybe this is a good place to say about $d$ and the padding required for edges}.

We note that this interpretation of $\mathcal{Y}$ assigns meaningful semantics to orbit representations (\camera{see \Cref{fig:orbit_dec} for a visual illustration}):
\begin{itemize}
    \item $X_{iii}$ stores representations for root nodes;
    \item $X_{ijj}$ stores representations for non-root nodes;
    \item $X_{iij}$ stores connections incoming into root nodes;
    \item $X_{iji}$ stores connections outgoing from root nodes;
    \item $X_{ijk}$ stores connections between non-root nodes.
\end{itemize}
To come back to the examples above, and consistently with the aforementioned semantics, $\big(\mathcal{Y}\big)_{1,1,1}$ represents subgraph $1$'s root node (that is node $1$); $\big(\mathcal{Y}\big)_{1,1,2}, \big(\mathcal{Y}\big)_{1,2,1}$ the connectivity between such root node and node $2$ in the same subgraph; $\big(\mathcal{Y}\big)_{1,2,2}$ represents node $2$ in subgraph $1$; $\big(\mathcal{Y}\big)_{1,2,3}$ the connectivity from node $3$ to node $2$ in subgraph $1$.

It is important to note how this interpretation induces a correspondence between the \IGN{3} tensor $\mathcal{Y}$ and tensors $\mathcal{A}, \mathcal{X}$ introduced in the main paper,~\Cref{sec:symmetries}. $\mathcal{X}$ gathers node features across subgraphs and is therefore in correspondence with $X_{iii} \sqcup X_{ijj}$; $\mathcal{A}$ hosts subgraph connectivity information and is thus in correspondence with $X_{iij} \sqcup X_{iji} \sqcup X_{ijk}$.

As a last note, this interpretation already preludes the more general, less constrained weight sharing pattern advocated by the \ReIGN{2} framework, which prescribes, for example, parameters specific to root- and non-root-updates. See Figure~\ref{fig:ops}. This will become more clear in the following (see Equation~\ref{eq:orbit_update_equation}).

\subsubsection{Updating orbit representations}

A \IGN{3} architecture has the following form:
\begin{equation}\label{eq:3IGN_model}
    (A, X) \mapsto \big( m \circ h \circ \mathcal{I} \big)(A, X) \thinspace , \qquad \mathcal{I} = L^{(T)} \circ \sigma \mathellipsis \circ \sigma \circ L^{(1)}
\end{equation}
\noindent where $m$ is an MLP, $h$ is an invariant linear `pooling' layer and $L$'s are equivariant linear \IGN{k}-layers, with \texttt{k} $\leq 3$. From here onwards we will assume $\sigma$'s are ReLU non-linearities.

\citet{Albooyeh2019incidence} show that, effectively, a linear \IGN{3}-layer $L$ updates each orbit representation such that output $X_{\omega'}$ is obtained as the sum of linear equivariant transformations of input orbit representations $X_{\omega}$\footnote{Even if omitted, each face-vector update equation includes a bias term.}:
\begin{align}
    \mathcal{Y}^{(t+1)} &= L(\mathcal{Y}^{t}) \cong \bigsqcup_{\omega' \in \Omega} X^{(t+1)}_{\omega'} \\
    X^{(t+1)}_{\omega'} &= \sum_{\omega \in \Omega} \boldsymbol{W}^{\omega \rightarrow \omega '} ( X^{(t)}_{\omega} ) \label{eq:orbit_update_equation}
\end{align}
\noindent where, as the authors show, each $\boldsymbol{W}^{\omega \rightarrow \omega '}$ corresponds to a linear combination of all pooling-broadcasting operations defined between input-output orbit representation $X_{\omega}, X_{\omega'}$:
\begin{align}
    \boldsymbol{W}^{\omega \rightarrow \omega '} ( X_{\omega} ) &= \sum_{\substack{\mathbb{P} \subseteq [m] \\ \mathbb{B} \subseteq \langle 1, \mathellipsis, m' \rangle \\ |\mathbb{B}| = m - |\mathbb{P}|}} W^{\omega \rightarrow \omega '}_{\mathbb{B,P}} \texttt{Broad}_{\mathbb{B},m'} \big(  \texttt{Pool}_{\mathbb{P}} (X_{\omega}) \big) \label{eq:pool_broadcast}
\end{align}
\noindent with $m$ the size of input orbit representation $X_{\omega}$ and $m'$ the size of the output one. \texttt{Broad} and \texttt{Pool} are defined in~\citep{Albooyeh2019incidence} as follows.

\paragraph{Pooling}
Let $X_{i_1, \mathellipsis i_m}$ be a generic face-vector of size $m$ indexed by ${i_1, \mathellipsis i_m}$. Let $\mathbb{P} = \{p_1, \mathellipsis, p_\ell\} \subseteq [m]$. \texttt{Pool}$_\mathbb{P}$ sums over the indexes in $\mathbb{P}$:
\begin{align}\label{eq:pooling}
    \texttt{Pool}_{\mathbb{P}}\big( X_{i_1, \mathellipsis i_m} \big) = 
    \sum_{\substack{i_{p_1} \neq \mathellipsis \neq i_{p_\ell} \\ i_{p_1} \in [n], \mathellipsis, i_{p_\ell} \in [n]}} X_{i_1, \mathellipsis i_m}
\end{align}
\noindent where the inequality constraints in the summation derive from the fact that $X_{i_1, \mathellipsis i_m}$ represents an orbit of the permutation group. To shorten the notation, we will write pooling operations as:
\begin{equation*}
    \texttt{Pool}_{\mathbb{P}}\big( X_{i_1, \mathellipsis i_m} \big) = \pool{}{out} X_{i_1, \mathellipsis i_m}
\end{equation*}
\noindent where $ out = \{i_p \thinspace | \thinspace p \in [m] \setminus \mathbb{P}\} $ i.e. the set of indexes which are \emph{not} pooled over, from which it follows that the cardinality of $out$ states the size of the resulting object. For example, $\pool{}{i_1} X_{i_1,i_2}$ returns a size-1 object from a size-2 face-vector by summing over index variable $i_2$.

The pooling operation applies in the same way when we repeat index variables in the subscript of the input orbit representation: as more concrete examples, when interpreting $\mathcal{Y}$ as a bag of subgraphs, $\pool{}{i} X_{ijj}$ sums the representations of all non-root nodes (subgraph readout) as:
\begin{align*}
    \Big( \texttt{Pool}_{j}\big( X_{ijj} \big) \Big)_1 = \big( \pool{}{i} X_{ijj} \big)_1 = \Big( \sum_{j \in [n], j \neq i} \big( \mathcal{Y} \big)_{i,j,j} \Big)_{1} = \sum_{j \in [n], j \neq 1} \big( \mathcal{Y} \big)_{1,j,j}
\end{align*}
Similarly, $\pool{}{j} X_{ijj}$ sums non-root node representations across subgraphs (cross-subgraph aggregation). Set $out$ can be empty, in which case pooling amounts to global summation: e.g., $\pool{}{} X_{iii}$ sums the representations of all root nodes across the bag. Pooling boils down to the identity operation when $out$ replicates the free indexes in $X$ -- as no pooling is effectively performed. We will refrain from explicitly writing this operation.

\paragraph{Broadcasting}
Let $X_{i_1, \mathellipsis i_m}$ be a generic face-vector of size $m$ indexed by ${i_1, \mathellipsis i_m}$ and $\mathbb{B} = (b_1, \mathellipsis, b_m)$ a tuple of $m$ indexes, with $b_j \in [m'], j = 1, \mathellipsis m, m' \geq m$. Operation $\texttt{Broad}_{\mathbb{B},m'}$ ``broadcasts'' $X$ over a target size-$m'$ face-vector in the sense that it identifies $X$ by the target index sequence $\mathbb{B}$ and broadcasts across the remaining $m' - m$ indexes:
\begin{align}
    \Big( \texttt{Broad}_{\mathbb{B},m'}\big( X_{i_1, \mathellipsis, i_{m}} \big) \Big)_{i_1, \mathellipsis, i_{m'}} = X_{i_{b_1}, \mathellipsis, i_{b_m}}
\end{align}
For example, if input $X_{i_1, i_2}$ is a size-$2$ face-vector and output $Y_{i_1, i_2, i_3}$ is a size-$3$ face-vector, $\texttt{Broad}_{(1,2),3}\big( X_{i_1, i_2} \big)$ maps $X$ onto the first two indexes of $Y$, and broadcasts along the third. As another example, for output size-$2$ face-vector $Z_{i_1, i_2}$, $\texttt{Broad}_{(2,1),2}\big( X_{i_1, i_2} \big)$ effectively implements the `transpose' operation. Similarly as above, we shorten the notation. Let us define 
$\iota$ such that, $\forall \ell \in [m']$:
\begin{align*}
    \iota(\ell) = \begin{cases}
                    j\ \text{s.t.}\  b_j = \ell & \text{if } \ell \in \mathbb{B}, \\
                    * & \text{otherwise}.
                  \end{cases}
\end{align*}
\noindent where $*$ indicates an index over which broadcasting is performed. We write $\mathbb{C} = (i_{\iota(1)}, \mathellipsis, i_{\iota(m')})$ and rewrite the broadcast operation as:
\begin{equation*}
    \texttt{Broad}_{\mathbb{B},m'}\big( X_{i_1, \mathellipsis, i_{m}} \big) = \broad{}{\mathbb{C}} X_{i_1, \mathellipsis, i_{m}}
\end{equation*}
Accordingly, the examples above are rewritten as $\broad{}{i_1,i_2,*} X_{i_1,i_2}, \broad{}{i_2,i_1} X_{i_1,i_2}$:
\begin{align*}
    \big(Y_{i_1,i_2,i_3} \big)_{1,2,3} &= 
        \big( \texttt{Broad}_{(1,2),3}\big( X_{i_1,i_2} \big) \big)_{1,2,3} =
        \big( \broad{}{i_1,i_2,*} X_{i_1,i_2} \big)_{1,2,3} = 
        \big( X_{i_1,i_2} \big)_{1,2} \\
    \big(Z_{i_1,i_2} \big)_{1,2} &= 
        \big( \texttt{Broad}_{(2,1),2}\big(X_{i_1,i_2}\big) \big)_{1,2} =
        \big( \broad{}{i_2,i_1} X_{i_1,i_2} \big)_{1,2} =
        \big( X_{i_1,i_2} \big)_{2,1}
\end{align*}

In the subscript of $\broad{}{}$, we can conveniently retain the equality pattern of the orbit representation where we are broadcasting onto. For a concrete example, $X_{ijj} = \broad{}{i,*j,*j} X_{iii}$ broadcasts the root node representations over non-root ones, in a way that each non-root node $j$ in subgraph $i$ gets the representation of root node $i$. It has to be noted that, in these cases, the length of tuple $\mathcal{C}$ may not correspond to the size of the output face-vector, i.e. in those cases where indexes repeat, as above. Finally, let us note that, when both input and target face-vectors have size $m$ and $\mathbb{B} = [m]$, the broadcasting operation boils down to the identity, e.g. $\broad{}{ijk} X_{ijk}$. We will omit it in writing in these cases.

The above results suggest that a way to describe a \IGN{3} stacking $\mathcal{I} = L^{(T)} \circ \sigma \mathellipsis \circ \sigma \circ L^{(1)}$ is to specify how each orbit representation $X_{\omega'}$ is updated from time step $(t)$ to $(t+1)$, according to Equation~\ref{eq:orbit_update_equation}, expanded as per Equation~\ref{eq:pool_broadcast}. In other words, stacking $\mathcal{I}$ is described by specifying, for each layer $L^{(t)}$, every linear operator $W^{\omega \rightarrow \omega '}_{\mathbb{B,P}, t}$ in Equation~\ref{eq:pool_broadcast}. This is the main strategy we adopt in the proofs described in the following.

As a last note, in an effort to ease the notation, we will:
\begin{enumerate}
    \item Describe updates for $X_{\omega'}$'s (in the form of Equation~\ref{eq:orbit_update_equation}) only when non-trivial;
    \item (For each of the above) specify only the non-null linear operators $W^{\omega \rightarrow \omega '}_{\mathbb{B,P}, t}$ and corresponding terms.
\end{enumerate}
For example, if layer $L^{(t)}$ only applies linear transformation $W^{iii \rightarrow iii}_{(1), \varnothing, t} = W$ to orbit representation $X^{(t)}_{iii}$, we will simply write:
\begin{equation*}
    X^{(t+1)}_{iii} = W \cdot X^{(t)}_{iii}
\end{equation*}
\noindent implying that, according to 2., all other terms in Equation~\ref{eq:pool_broadcast} are nullified by operator $\mathbf{0}$ and, according to 1., all other orbit updates read as $X^{(t+1)}_{\omega} = I_d \cdot X^{(t)}_{\omega}$.

\subsubsection{Pointwise operations}

We define as `pointwise' those operations which only apply to the feature dimension(s) and implement a form of channel mixing. By operating independently on the feature dimension(s), these are trivially equivariant. As these operations can be performed on any orbit representation, we will not specify them in the following descriptions, and will simply assume to be working with generic tensors $X, Y, Z$.

Linear transformations are the most natural pointwise operations the \IGN{3} framework supports. Some of these are of particular interest as they will be heavily used in our proofs. We describe them below and define convenient notation for them.

\paragraph{Copy/Routing}
Pointwise linear operators can copy some specific feature channels in the input tensor and route them into some other output channels. Let $X$ refer to a tensor representing orbit elements with $d_{in}$ channels, and $Y$ to an output tensor with $d_{out}$ channels. We will write:
\begin{align*}
    Y &= \cp{a}{b}{e}{f} X
\end{align*}
\noindent to refer to the operation which writes channels $a$ to $b$ (included) in $X$ into channels $e$ to $f$ (included) in $Y$, with $a \leq b \leq d_{in}$, $e \leq f \leq d_{out}$, $b - a = f - e$. Here, $\cp{a}{b}{e}{f}$ is used to conveniently denote operator $W$: a matrix $d_{out} \times d_{in}$ where the square submatrix $W_{e:f, a:b} = I_{b-a+1}$ and other entries are $0$. When omitting left indices we start from the first channel, i.e., $\cp{}{b}{e}{f}$ = $\cp{1}{b}{e}{f}$. When omitting right indices we end at the last channel, i.e., $\cp{a}{b}{e}{}$ = $\cp{a}{b}{e}{f}$ for $f$ output channels. We will use notation $\hemcp{a}{b}{c}{d}{e}$ to specify the target has $e$ channels, whenever not clear from the context.

\paragraph{Selective copy/routing}
With a slight notation overload, we also write $\selcp{in}{out}$ --- with $in$, $out$ being index tuples of the same cardinality $\ell$ --- to refer to specific channels in, respectively, input and output tensors. For instance, $\selcp{a,b,c}{d,e,f}$ copies (or routes) input channels $a,b,c$ into output channels $d,e,f$, with such indices being not necessarily contiguous. Again, this operation is linear and is implemented by a matrix $W$ having $1$ in all entries in the form $(out_k, in_k), k \in [\ell]$, $0$ elsewhere.

\paragraph{Concatenation}
Two (compatible) operands can be concatenated by means of the copy/routing operation above along with summation. For example, if the two are $d$ dimensional, it is sufficient to route them into the two distinct halves of a $2d$-dimensional output tensor and then sum the two:
\begin{align*}
    Z &= \cp{}{}{}{d} X + \cp{}{}{d+1}{2d} Y
\end{align*}
 
\paragraph{Concurrent linear transformations}
We note that one single linear operator can apply multiple linear transformations on (a specific subset of) channels of the input tensor. Let $W_1, W_2$ be two operators of size $d_{out} \times d_{in}$. Obtained as the vertical stacking of $W_1, W_2$, operator $W$ can be applied to tensor $X$ with $d_{in}$ channels. It produces an output tensor $Y$ with $2 \cdot d_{out}$ channels, where the first $d_{out}$ channels store the result of applying $W_1$, channels $d_{out} + 1$ to $2 \cdot d_{out}$ store that from the application of $W_2$. We write:
\begin{align*}
    Y &= \glue{W_1}{W_2} X
\end{align*}
Pre-multiplying routing operators allows these transformations to be applied to a selection of a subset of input channels. As an example, let $X$ be an input with $2 \cdot d$ channels, and $W_1, W_2$ be two $d \times d$ linear operators. Expression:
\begin{align*}
    Y &= \glue{W_1 \cdot \cp{}{d}{}{d}}{W_2 \cdot \cp{d+1}{2d}{}{d}} X
\end{align*}
\noindent effectively applies $W_1$ to the first $d$ channels of $X$, $W_2$ to channels $d+1$ to $2d$. In fact, $\glue{W_1 \cdot \cp{}{d}{}{d}}{W_2 \cdot \cp{d+1}{2d}{}{d}}$ consists, as a whole, of a block diagonal matrix made up of operators $W_1$ (leftmost upper block) and $W_2$ (rightmost lower block). Clearly, these operations can be extended to more than two concurrent linear operators.

We note that this notation also captures the following `replication' operation:
\begin{align*}
    Y &= \glue{I_d}{I_d} X
\end{align*}
\noindent which outputs two (stacked) copies of $d$-dimensional input tensor $X$.

\paragraph{Concurrent transformations and channel-wise summation}
Let $X$ be an tensor with $2 \cdot d$ channels, and $W_1, W_2$ two $d \times d$ linear operators. The following expression applies $W_1$ to the first $d$ channels of $X$, $W_2$ to channels $d+1$ to $2d$ \emph{and} sums the result channel by channel:
\begin{align*}
    Y &= \sumglue{W_1}{W_2} X
\end{align*}
In fact, $\sumglue{W_1}{W_2}$ can be interpreted as single linear operator $d \times 2 \cdot d$ obtained by horizontally stacking $W_1, W_2$. As a particular case, we can simply sum the two halves of $X$ by $\sumglue{I_d}{I_d} X$.

\paragraph{A note on non-linearities}
A \IGN{3}-layer stacking is in the form $\mathcal{I} = L^{(T)} \circ \sigma \mathellipsis \circ \sigma \circ L^{(1)}$, where $L$'s are linear equivariant layers and $\sigma$'s are ReLU non-linearities. In principle, these non-linearities in between layers may alter the result of linear computation they perform. For example, in order to perform an exact copy of input representation $X$, it may not be sufficient to simply choose an identity weight matrix $I_d$: the following ReLU non-linearity would clip negative entries to $0$, thus invalidating the correctness of the operation. However, we note that the identity function can be implemented by a ReLU-network, as $y = x = \sigma(x)-\sigma(-x)$. This means that the copy operation can be realised by a \IGN{3}-layer stacking as:
\begin{equation*}
    Y = \sumglue{I_d}{-I_d} \cdot \Big( \sigma \big( \glue{I_d}{-I_d} X \big) \Big)
\end{equation*}
This effectively provides us with a way to `choose', in a \IGN{3} layer stacking, when to apply $\sigma$'s and when not to. Indeed, we can always work with $2d$ channels where all entries are non-negative: the positive entries in the first $d$ channels store the originally positive ones, those in the second $d$ channels store the originally negative ones, negated. This expansion can be realised by one layer as $Y = \sigma \big( \glue{I_d}{-I_d} X \big)$ and is such that ReLU activations act neutrally. Any linear transformation $W$ is now applied as $Y = \sigma \big( U X \big)$, with $U = \glue{I_d}{-I_d} \cdot W \cdot \sumglue{I_d}{-I_d} $. Whenever computation requires the application of ReLUs after linear transformation $W$, it is sufficient to perform the following: $Y = \glue{I_d}{-I_d} \cdot \Big( \sigma \big( V X \big) \Big)$, with $V = W \cdot \sumglue{I_d}{-I_d}$. Operator $\glue{I_d}{-I_d}$ will effectively be absorbed by the linear transformation in the following layer. This doubled representation also allows to apply non-linearities to some particular channels only. This can be realised by $Y = \sigma \big( U' X \big)$, with $U'$ constructed as $U$ above, but with the difference being that entries in its $(d+e)$-th row are set to $0$, if $e$ is an output channel where the ReLU non-linearity takes effect.

These considerations show that, in fact, interleaving linear (equivariant) layers with ReLU non-linearities does not hinder the possibility of performing (partially) linear computation, which can always be recovered at the cost of having additional channels and/or layers. In view of the above observations, and in an effort to ease the notation, in the proofs reported in the following sections we will thus assume to be allowed to stack linear layers with no ReLU activations in between.

\paragraph{Multi-Layer Perceptron}
\IGNs{3} can naturally implement the application of a Multi-Layer Perceptron (MLP) to the feature dimension(s) of a particular orbit representation, as it stacks linear layers, interspersed with non-linearities. Each of these dense linear layers is trivially equivariant, and so is the overall stacking. We generally write:
\begin{equation*}
    Y = \mlp{f} X
\end{equation*}
\noindent to indicate the application of an MLP implementing (or approximating) function $f$. Additionally, we use the notation $\selmlpcp{in}{out}{f}$ to indicate that $\mlp{f}$ acts on the $in$ channels of its input and writes over channels $out$ of the output. This behaviour can be obtained by multiplying the MLP inputs by operator $\selcp{in}{}$ and its output by operator $\selcp{}{out}$.

% https://open.spotify.com/album/1xrXrgQDQzTlGrDYhX8ikT?si=RpdvmoGjRUikoO8K_iR4rQ
\paragraph{MLP-copy}
We can also combine copy and MLP operations together. In other words, we can apply an MLP on some particular channels while copying some others; we write this operation as
\begin{equation*}
    Y = \glue{\selcp{in}{out}}{\selmlpcp{in'}{out'}{}} X
\end{equation*}
This operation can be implemented by `embedding' the weights of the MLP in appropriately sized matrices where remaining entries perform copy/routing operations and are not affected by non-linearities (see discussion above). More in specific, to see how this is implemented, let the MLP $\selmlpcp{in'}{out'}{}$ above have weight matrices $\big( W_1, W_2, \mathellipsis, W_L \big)$ with $W_1 \in \mathbb{R}^{d_1 \times |in'|}$, $W_l \in \mathbb{R}^{d_{l} \times d_{l-1}}, \forall l = 2 \dots L-1$, $W_L \in \mathbb{R}^{|out'| \times d_{L-1}}$. Then, $Y$ is obtained by stacking:
\begin{equation*}
    Y = \big( U_L \circ \sigma \mathellipsis \circ \sigma \circ U_1 \big) X
\end{equation*}
where $U_l, l \in [L]$ are linear operators obtained as follows. 
\begin{align*}
    U_1 &= \glue{I_{(1)}}{-I^{*}_{(1)}} \cdot \glue{\selcp{in}{:}}{W_1 \cdot \selcp{in'}{:}} \\
    &\quad I_{(1)} = I_{|in| + d_1} \\
    &\quad I^{*}_{(1)} = \glue{\sumglue{I_{|in|}}{\mathbf{0}_{|in| \times d_1}}}{\mathbf{0}_{d_1 \times (|in| + d_1)}}
\end{align*}
For any $l = 2 \dots L-1$:
\begin{align*}
    U_l &= \glue{I_{(l)}}{-I^{*}_{(l)}} \cdot V_l \cdot \sumglue{I_{|in| + d_{l-1}}}{-I_{|in| + d_{l-1}}} \\
    &\quad V_l = \glue{\sumglue{I_{|in|}}{\mathbf{0}_{|in|\times d_{l-1}}}}{\sumglue{\mathbf{0}_{d_l \times |in|}}{W_l}} \\
    &\quad I_{(l)} = I_{|in| + d_l} \\
    &\quad I^{*}_{(l)} = \glue{\sumglue{I_{|in|}}{\mathbf{0}_{|in| \times d_l}}}{\mathbf{0}_{d_l \times (|in| + d_l)}}
\end{align*}
Finally:
\begin{align*}
    U_L &= \selcp{:}{(out || out')} \cdot \glue{I_{(L)}}{-I^{*}_{(L)}} \cdot V_L \cdot \sumglue{I_{|in| + d_{L-1}}}{-I_{|in| + d_{L-1}}} \\
    &\quad V_L = \glue{\sumglue{I_{|in|}}{\mathbf{0}_{|in|\times d_{L-1}}}}{\sumglue{\mathbf{0}_{|out'| \times |in|}}{W_L}} \\
    &\quad I_{(L)} = I_{|in| + |out'|} \\
    &\quad I^{*}_{(L)} = \glue{\sumglue{I_{|in|}}{\mathbf{0}_{|in| \times |out'|}}}{\mathbf{0}_{|out'| \times (|in| + |out'|)}}
\end{align*}
\noindent with $(out || out')$ the concatenation of the two output index tuples $out$ and $out'$.

\paragraph{Logical AND}
For inputs in $\{0, 1\}$, it is possible to \emph{exactly} implement the logical $\mathrm{AND}$ function $y = f(a, b) = a \land b $ as $y = \sigma \big ( +2 \cdot a + 2 \cdot b - 3 \big)$, with $\sigma$ being the ReLU non-linearity. As \IGNs{3} can implement pointwise MLPs, this operation can be performed as well when operands are on two distinct channels of the same orbit representation. In particular, for a $d$-dimensional input $X$ we write:
\begin{equation*}
    Y = \selmlpcp{a,b}{c}{\land} X
\end{equation*}
\noindent to indicate the operation that applies a logical AND between channels $a$ and $b$ of input $X$ and writes the result into channel $c$ in output $Y$, with $a,b,c \in [d]$. For this operation in particular, we also imply that the rest of $X$ is written in the remaining output channels, that is, we imply we are, in fact, performing $\glue{\selcp{[d] \setminus \{c\}}{[d] \setminus \{c\}}}{\selmlpcp{a,b}{c}{\land}}$.

\paragraph{Clipping}
An MLP equipped with one hidden layer and ReLU activation can exactly implement the $1$-clipping function $f_\downarrow = \min(x, 1)$. In practice, $f_\downarrow$ clips inputs at the value of $1$. For a scalar input, this is realised as:
\begin{equation*}
    y = - \sigma \big ( - x + 1 \big ) + 1
\end{equation*}
An MLP $\mlp{\mathrm{\downarrow}}$ implementing this function on each channel of a multi-dimensional input $X$ is constructed as:
\begin{equation*}
    Y = - I_d\ \sigma \big ( - I_d\ X + \mathbf{1}_d \big ) + \mathbf{1}_d
\end{equation*}
\noindent where $\mathbf{1}_d$ is the $d$-dimensional one-vector. The existence of such an MLP entails that \IGNs{3} can exactly implement the clipping function as well, as a pointwise operation.

\subsection{Implementing node-based selection policies}\label{app:policy_implementation}
We now show how \IGNs{} can compute node-based selection policies, proving Lemma~\ref{lemma:3IGN_implements_policies}. We will make use of concepts introduced above; the partitioning into orbits and the computational primitives in particular.

\begin{proof}[Proof of Lemma~\ref{lemma:3IGN_implements_policies}]
    We assume to be given an $n$-node graph $G$ in input, represented by tensor $A \in \mathbb{R}^{n \times n \times d}$. $A$ is subject to $S_n$ symmetry, which partitions it into two distinct orbits: on-diagonal terms $A_{ii}$ and off-diagonal terms $A_{ij}, i \neq j$. $A_{ii}$ store $d$-dimensional node features; $A_{ij}$ the binary graph connectivity in its first channels, with others being $0$-padded.
    
    The first operation, common to the implementation of all the policies, consists of lifting the two-way tensor $A$ to the three-way tensor $\mathcal{Y} \in \mathbb{R}^{n^3 \times d}$ interpreted as a bag of subgraphs and partitioned into orbit tensors as described above in Section~\ref{app:3IGN_data_struct}. This step is realised by the following broadcasting operations\footnote{This layer effectively implements the \emph{null} policy, see~\Cref{app:experiments}.}:
    \begin{align*}
        X^{(0)}_{iii} &= \broad{}{i,i,i} A_{ii} \\
        X^{(0)}_{jii} &= \broad{}{*,i,i} A_{ii} \\
        X^{(0)}_{iij} &= \broad{}{i,i,j} A_{ij} \\
        X^{(0)}_{iji} &= \broad{}{i,j,i} A_{ij} \\
        X^{(0)}_{kij} &= \broad{}{*,i,j} A_{ij} 
    \end{align*}
    
    We now focus on each of the considered policies in particular.
    
    [\emph{\textbf{node-deletion}}] Given that $X_{iij}, X_{iji}$ contain all and only those connections involving root nodes, it is sufficient to zero them out to recover a node-deleted bag. We thus perform the following operations:
    \begin{align*}
        X_{iij} &= \mathbf{0} \cdot X^{(0)}_{iij} \\
        X_{iji} &= \mathbf{0} \cdot X^{(0)}_{iji}
    \end{align*}
    
    [\emph{\textbf{node-marking}}] adds a special `mark' to root nodes only. We implement it by adding one additional dimension to node features and by setting that to $1$ only for root nodes via the bias term:
    \begin{align*}
        X_{iii} &= \hemcp{1}{d}{1}{d}{d+1} X^{(0)}_{iii} + \onehot{d+1} \\
        X_{ijj} &= \hemcp{1}{d}{1}{d}{d+1} X^{(0)}_{ijj} + \mathbf{0}_{d+1} \\
        X_{iij} &= \hemcp{1}{d}{1}{d}{d+1} X^{(0)}_{iij} + \mathbf{0}_{d+1} \\
        X_{iji} &= \hemcp{1}{d}{1}{d}{d+1} X^{(0)}_{iji} + \mathbf{0}_{d+1} \\
        X_{ijk} &= \hemcp{1}{d}{1}{d}{d+1} X^{(0)}_{ijk} + \mathbf{0}_{d+1} 
    \end{align*}
    \noindent where $\onehot{d+1}$ is a (one-hot) $(d+1)$-dimensional vector being $1$ in dimension $d+1$, $0$ elsewhere; $\mathbf{0}_{d+1}$ is the $(d+1)$-dimensional zero vector.
    
    [\emph{\textbf{ego-networks($h$)}}] So far, we have mostly made use of the orbit-partitioning that is induced by the $S_n$ symmetry group, which has allowed us to implement the \emph{node-deletion} and \emph{node-marking} policies with one single \IGN{3} layer. We now show that, in order to implement the \emph{ego-networks} policy, multiple layers are required, as a \IGN{3} effectively needs to perform Breadth-First-Search for each node in the graph to construct $h$-hop neighbourhoods. We illustrate the required steps below, which mainly articulate in (i) the construction of $h$-hop neighbourhoods around nodes; (ii) extraction of egonets from such neighbourhoods. The yet-to-describe \IGN{3} will realise part (i) by storing a `reachability' patterns in an additional channel in $X_{ijj}$: element $\big( X^{d+1}_{ijj} \big)_{i=v, j=w}$ will be set to $1$ if node $w$ is reachable from node $v$ in $h$ hops, $0$ otherwise. In part (ii), these patterns will be utilised, for each subgraph, to nullify the connectivity involving unreachable nodes, thus effectively extracting $h$-hop egonets. We start by describing the layers implementing part (i).
    
    (\emph{Immediate neighbourhood}) For immediate neighbours, the reachability pattern is already (implicitly) stored in $X_{iij}$, as it contains the direct connectivity involving root nodes. We therefore copy this information into an additional channel in $X_{ijj}$. This value will be updated iteratively as we explore higher-order neighborhoods. In the following, we conveniently set $e=d+1$ to ease the notation.
    \begin{align*}
        X_{ijj}^{(1)} &= \hemcp{}{d}{}{d}{e} X^{(0)}_{ijj} + \cp{}{1}{e}{} \broad{}{i,j,j} X^{(0)}_{iij}
    \end{align*}
    
    (\emph{Higher-order neighbourhoods}) We repeat the following steps $(h-1)$ times, and describe the generic $l$-th step. We first broadcast the current reachability pattern into $X_{ijk}$, writing it into the second channel (the first contains the original graph connectivity). Essentially, this operation `propagates' the subgraph-wise reachability pattern copying it row-by-row.
    \begin{align*}
        X_{ijk}^{(l,1)} &= X_{ijk}^{(l-1)} + \cp{e}{}{2}{2} \broad{}{i,*,j} X_{ijj}^{(l-1)}
    \end{align*}
    Having placed the pattern as described above, we perform a logical AND between the first two channels of $X_{ijk}$ and write back the result into the second channel. 
    \begin{align*}
        X_{ijk}^{(l,2)} &= \mlphemcp{1}{2}{2}{2}{e}{\land} X_{ijk}^{(l,1)}
    \end{align*}
    For a specific node $w$ in a given subgraph $v$, this operation spots the neighbours of $w$ that are currently marked in $v$, effectively propagating the reachability information one hop farther. At this point, pooling over rows counts the number of such neighbours: if at least one is reachable, then node $w$ becomes reachable as well, and its corresponding entry must be set to $1$ in the updated reachability pattern. We therefore complete the $l$-th hop step by updating the pattern accordingly and by clipping it to $1$.
    \begin{align*}
        X_{ijj}^{(l,3)} &= X_{ijj}^{(l,2)} + \cp{2}{2}{e}{} \broad{}{i,j,j} \pool{}{i,j} X_{ijk}^{(l,2)} \\
        X_{ijj}^{(l)} &= \glue{\hemcp{}{d}{}{d}{e}}{\mlpcp{e}{e}{e}{e}{\downarrow}} X_{ijj}^{(l,3)}
    \end{align*}
    
    (\emph{Egonet extraction}) We complete the implementation of the policy by leveraging the computed reachability pattern to extract the required egonets. We do this by nullifying the connectivity entries in $X_{ijk}$ for those nodes still unreached, i.e. we zero out row- and column-elements for nodes whose entry in the reachability pattern is $0$. To perform this operation we appropriately broadcast the pattern and use it as an argument into a logical AND:
    \begin{align*}
        X_{ijk}^{(x,1)} &= X_{ijk}^{(h)} + \cp{e}{}{2}{2} \broad{}{i,j,*} X_{ijj}^{(h)} \\
        X_{ijk}^{(x,2)} &= \mlpcp{1}{2}{2}{2}{\land} X_{ijk}^{(x,1)} \\
        X_{ijk}^{(x,3)} &= X_{ijk}^{(x,2)} + \cp{e}{}{3}{3} \broad{}{i,*,j} X_{ijj}^{(x,2)} \\
        X_{ijk}^{(x,4)} &= \mlpcp{1}{3}{3}{3}{\land} X_{ijk}^{(x,3)} \\
        X_{ijk}^{(x)} &= \mlpcp{2}{3}{}{1}{\land} X_{ijk}^{(x,4)}
    \end{align*}
    Finally, we save the reachability pattern in channel $2$ of $X_{iij}$; this information effectively conveys, for each subgraph, the membership of each node to that specific subgraph. At the same time, we bring all other orbit tensors to the original dimension $d$:
    \begin{align*}
        X_{iii} &= \cp{}{d}{}{d} X_{iii}^{(x)} \\
        X_{ijj} &= \cp{}{d}{}{d} X_{ijj}^{(x)} \\
        X_{iij} &= \hemcp{}{1}{}{1}{d} X_{iij}^{(x)} + \hemcp{e}{}{2}{2}{d} \broad{}{i,i,j} X_{ijj}^{(x)} \\
        X_{iji} &= \cp{}{d}{}{d} X_{iji}^{(x)} \\
        X_{ijk} &= \cp{}{d}{}{d} X_{ijk}^{(x)}
    \end{align*}
    
    [\textbf{\emph{Retaining original connectivity}}] In all the derivations above we have overwritten the first channel of orbit representations $X_{iij}, X_{iji}, X_{ijk}$ with the computed subgraph connectivity. However, certain Subgraph GNNs may require to retain the original graph connectivity, see, e.g., \Cref{eq:DSSGNN_orig,eq:NestedGNN}. In that case it is sufficient to first replicate the first channel of the aforementioned orbit representations into another one before altering it to obtain the subgraph connectivity.
\end{proof}

\subsection{Implementing Subgraph GNN layers}
Before proceeding to prove~\Cref{lemma:3IGN_implements_SubgraphNetworks}, we report the equation of a GNN layer in the form due to~\citet{morris2019weisfeiler}, which we assume is the base-encoder of the Subgraph GNN layers considered in the following.
\begin{equation}\label{eq:morris}
    x^{(t+1)}_i = \sigma \big( W_{1,t} \cdot x^{(t)}_i + W_{2,t} \cdot \sum_{j \sim i} x^{(t)}_j \big)
\end{equation}

\begin{proof}[Proof of Lemma~\ref{lemma:3IGN_implements_SubgraphNetworks}]\label{proof:3IGN_implements_SubgraphNetworks}
    
    We will implement the update equations for a generic bag $B \in \{B_1^{(t)}, B_2^{(t)}\}$ represented by $\mathbb{R}^{n^3 \times d}$ tensor $\mathcal{Y}^{(t)} \cong X^{(t)}_{iii} \sqcup X^{(t)}_{ijj} \sqcup X^{(t)}_{iij} \sqcup X^{(t)}_{iji} \sqcup X^{(t)}_{ijk}$. When necessary, we will use an additional subscript to indicate which of the two input bags the tensor is representing, as in $X^{(t)}_{(1), ijk}$. In the following we assume $B_1$ is the bag for graph $G_1$ of $n_1$ nodes, $B_2$ is the bag for graph $G_2$ of $n_2$ nodes.
    
    [\emph{\textbf{DS-GNN}}] When equipped with \citet{morris2019weisfeiler} base-encoder, DS-GNN updates the representation of node $i$ in subgraph $k$ as:
    \begin{align}\label{eq:DSGNN_Morris}
        x^{k,(t+1)}_i &= \sigma \big ( W_{1,t} \cdot x^{k,(t)}_i + W_{2,t} \cdot \sum_{j \sim_k i} x^{k,(t)}_j \big ) 
    \end{align}
    
    ([1] \emph{Message broadcasting}) One first \IGN{3} layer propagates the current node representations in a way to prepare them for the following aggregation. Node representations $X_{ijj}$ are written over $X_{ijk}, X_{iij}$; the ones in $X_{iii}$ over $X_{iji}$: This will allow matching them with the subgraph connectivity stored in the first channel of such tensors.
    \begin{align*}
        X^{(t,1)}_{iij} &= \hemcp{}{d}{}{d}{2d} X^{(t)}_{iij} + \cp{}{}{d+1}{2d} \broad{}{i,i,j} X^{(t)}_{ijj} \\
        X^{(t,1)}_{iji} &= \hemcp{}{d}{}{d}{2d} X^{(t)}_{iji} + \cp{}{}{d+1}{2d} \broad{}{i,*,i} X^{(t)}_{iii} \\
        X^{(t,1)}_{ijk} &= \hemcp{}{d}{}{d}{2d} X^{(t)}_{ijk} + \cp{}{}{d+1}{2d} \broad{}{i,*,j} X^{(t)}_{ijj}
    \end{align*}
    
    ([2] \emph{Message sparsification \& aggregation}) \IGNs{3} only possess \emph{global} pooling as computational primitive, while message-passing requires a form of local aggregation in accordance to the connectivity at hand. For each node, we realise this by first nullifying messages from non-adjacent nodes followed by global summation. We make use of a result by~\citet{yun2019small} to show that the aforementioned nullification, over the two bags in input, can be exactly implemented by a (small) ReLU network memorising a properly assembled dataset. Let us first report the result of interest~\citep[Theorem 3.1]{yun2019small}:
    \begin{theorem}[Theorem 3.1 from~\citet{yun2019small}]\label{thm:memorisation_theirs}
        Consider any datasaset $\{ (x_i, y_i) \}_{i=1}^N$ such that all $x_i$'s are distinct and all $y_i \in [-1, +1]^{d_y}$. If a $3$-layer ReLU-like MLP $f_{\boldsymbol{\theta}}$ satisfies $4 \lfloor \nicefrac{d_1}{4} \rfloor \lfloor \nicefrac{d_2}{(4d_y)} \rfloor \geq N$, then there exists a parameter $\boldsymbol{\theta}$ such that $y_i = f_{\boldsymbol{\theta}}(x_i)$ for all $i \in [N]$.
    \end{theorem}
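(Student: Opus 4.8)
The plan is to prove this memorization bound by first collapsing the input dimension and then exhibiting an explicit three-layer construction whose fitting capacity scales as the \emph{product} of the two hidden widths. \textbf{Step 1 (reduction to scalars and to plain ReLU).} Because the $N$ inputs $x_i$ are distinct, the set of directions $a$ for which $a^\top x_i = a^\top x_j$ for some $i\neq j$ is a finite union of hyperplanes, hence Lebesgue-null; I would pick $a$ outside it and absorb $x\mapsto a^\top x$ into the first weight matrix, obtaining distinct scalars which, after relabelling, I sort as $z_1<z_2<\dots<z_N$. The task reduces to: fit prescribed vectors $y_i\in[-1,+1]^{d_y}$ at $N$ distinct real locations $z_i$ with a three-layer MLP of hidden widths $d_1,d_2$. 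For the activation I would first build everything for the standard ReLU and then observe that any ``ReLU-like'' $\sigma$ (two distinct asymptotic slopes $s_+\neq s_-$) reproduces a genuine ReLU via a fixed affine combination of two shifted copies of $\sigma$, so every gadget below transfers at the cost of a constant factor that is already absorbed into the factors of $4$ appearing in the statement.

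\textbf{Step 2 (grid addressing and the multiplicative gain).} Set $n_1=\lfloor d_1/4\rfloor$ and $n_2=\lfloor d_2/(4d_y)\rfloor$. Since $n_1\cdot 4n_2\geq 4n_1n_2\geq N$, I can injectively label each sorted point $z_i$ by a grid coordinate $(p_i,q_i)\in\{1,\dots,n_1\}\times\{1,\dots,4n_2\}$, where $p$ is constant on each of $n_1$ consecutive blocks of the sorted sequence and $q$ indexes the at most $4n_2$ positions inside a block. The elementary primitive is the ramp $z\mapsto\sigma(z-a)-\sigma(z-b)$, from which a two-layer ReLU network of width $w$ realizes any continuous piecewise-linear (CPWL) function with $\leq w$ breakpoints; used naively this gives only \emph{additive} capacity. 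To get the \emph{multiplicative} budget I would have the first hidden layer (using four ramp neurons per block, so $4n_1\leq d_1$ neurons total) compute a block-addressing feature vector $\Phi_{p}$ that takes $n_1$ distinct values, together with a within-block ``position'' coordinate that is monotone on each block. The second hidden layer then spends $4n_2$ neurons per output channel (so $4d_yn_2\leq d_2$ in total); the point is that each such neuron reads both the shared position coordinate and the address $\Phi_{p}$ through the $d_2\times d_1$ weight matrix, so a single physical neuron computes a ramp in the position coordinate whose offset is shifted \emph{per block} by the term contributed by $\Phi_p$. Thus the same $4n_2$ neurons are reused across all $n_1$ blocks while still emitting block-specific values, and the linear output reads off the target; the product $n_1\cdot 4n_2$ of ``rows'' and reusable ``column slots'' is exactly the claimed capacity $4n_1n_2\geq N$.

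\textbf{Step 3 (accounting and the main obstacle).} I would then verify the width budget ($4n_1\leq d_1$ in the first layer, $4d_yn_2\leq d_2$ in the second), check that the labelling hypothesis $4n_1n_2\geq N$ places every point in some $(\text{block},\text{slot})$ cell, and confirm \emph{exactness} at the data: because the data are finite and the $z_i$ are strictly separated, all gadget breakpoints can be placed strictly between consecutive $z_i$ and all addressing thresholds set with a positive margin, so no approximation is incurred. \textbf{The main obstacle} is precisely the mechanism that turns additive into multiplicative capacity: the first-layer address $\Phi_p$ must select, inside each block, a \emph{different} ramp from the same fixed pool of $4n_2$ second-layer neurons, for an \emph{arbitrary} target table $(p,q)\mapsto y$ that need not decompose additively across $p$ and $q$. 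The delicate content is to design $\Phi_p$ and the $d_2\times d_1$ weights so that each second-layer pre-activation equals (within-block position) plus a clean block-specific bias, with the ramps for distinct blocks not interfering on the overlapping ranges of the shared coordinate; this is the gadget-level case analysis that the four-neuron blocking (and hence the factors of $4$) is tuned to make work, and it is where the bulk of the technical care, though ultimately routine, resides.
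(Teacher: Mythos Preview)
The paper does not prove this statement at all: it is quoted verbatim as \citet[Theorem~3.1]{yun2019small} and used purely as a black box to derive Proposition~\ref{prop:memorisation_ours} (which just rescales targets outside $[-1,1]^{d_y}$ back into the cube and absorbs the scaling into the last layer). So there is no ``paper's own proof'' to compare against; your proposal already goes far beyond what the present paper does for this theorem.

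That said, as a sketch of the actual Yun et al.\ argument, your plan is on the right track: the scalar projection in Step~1, the block/slot grid labelling of the sorted points in Step~2, and the idea that the first hidden layer manufactures per-block indicator features while the second hidden layer spends its width on within-block slots (reused across all blocks via the block-dependent bias shift), yielding the multiplicative $n_1\cdot n_2$ capacity, are exactly the ingredients of the original proof. The factors of $4$ arise for the reason you identify (each indicator/ramp gadget consumes a small constant number of ReLU units). Your ``main obstacle'' paragraph is also accurate: the delicate part is arranging the first-layer outputs so that a single shared pool of second-layer neurons produces block-specific values without cross-block interference, and Yun et al.\ handle this with a careful choice of indicator amplitudes and thresholds rather than anything conceptually new. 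If you want to turn your sketch into a full proof you should consult that paper directly; nothing in the present paper will help you with the details.
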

    \noindent The theorem guarantees the existence of a properly sized ReLU network able to memorise an input dataset satisfying the reported conditions. We note that one of these conditions can, in a sense, be relaxed:
    \begin{proposition}[Memorisation]\label{prop:memorisation_ours}
        Consider any dataset $D = \{ (x_i, y_i) \}_{i=1}^N$ such that all $x_i$'s $\in \mathbb{R}^{d_x}$ are distinct and all $y_i \in \mathbb{R}^{d_y}$. There exists a $3$-layer ReLU-like MLP $\mlp{D}$ such that $y_i = \mlp{D}(x_i)$ for all $i \in [N]$.
    \end{proposition}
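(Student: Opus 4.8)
The plan is to reduce the claim directly to \Cref{thm:memorisation_theirs}, whose only substantive hypothesis absent from our setting is that the targets lie in the hypercube $[-1,+1]^{d_y}$. Since the dataset $D$ is finite, its targets are bounded, so a simple output rescaling will map them into this cube; after invoking the theorem I would undo the rescaling by absorbing a single scalar factor into the final (linear) layer, which preserves the $3$-layer ReLU-like architecture. Note also that \Cref{thm:memorisation_theirs} carries a width condition, whereas our proposition only asserts existence, so I would additionally need to argue that a sufficient width can always be chosen.

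Concretely, first I would set $M = \max\big(1, \max_{i \in [N]} \|y_i\|_\infty\big)$ and define the rescaled targets $\tilde{y}_i = y_i / M$, so that $\tilde{y}_i \in [-1,+1]^{d_y}$ for all $i$, while the inputs $x_i$ remain distinct by hypothesis. The width condition of \Cref{thm:memorisation_theirs}, namely $4 \lfloor \nicefrac{d_1}{4} \rfloor \lfloor \nicefrac{d_2}{(4 d_y)} \rfloor \geq N$, can always be met for a fixed finite $N$: e.g. choosing hidden widths $d_1 = 4N$ and $d_2 = 4 d_y N$ yields $4 N \cdot N \geq N$. Applying the theorem to the rescaled dataset $\{(x_i, \tilde{y}_i)\}_{i=1}^N$ thus produces a $3$-layer ReLU-like MLP $f_{\boldsymbol{\theta}}$ with $f_{\boldsymbol{\theta}}(x_i) = \tilde{y}_i$ for all $i \in [N]$.

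Finally, I would define $\mlp{D} = M \cdot f_{\boldsymbol{\theta}}$. Because the output layer of $f_{\boldsymbol{\theta}}$ is linear, multiplying its weight matrix and bias by the scalar $M$ realises this map within the very same $3$-layer ReLU-like architecture; hence $\mlp{D}(x_i) = M \tilde{y}_i = y_i$ for all $i \in [N]$, as required. I expect no genuine obstacle here, as the argument is a routine reduction; the two points that warrant (minor) care are only that the width constraint becomes trivially satisfiable once $N$ is fixed, and that the degenerate case of all-zero targets is gracefully handled by the $\max(1, \cdot)$ guard in the definition of $M$ (alternatively, the identically-zero network settles that case directly).
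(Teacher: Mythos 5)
Your proposal is correct and follows essentially the same route as the paper's proof: rescale the targets into $[-1,+1]^{d_y}$ via a guarded maximum, invoke the memorisation theorem of \citet{yun2019small}, and undo the rescaling by absorbing the factor into the final linear layer. The only (immaterial) differences are that the paper rescales element-wise with a diagonal matrix $\bar{W} = \mathrm{diag}(M)$ where you use a single scalar based on $\|y_i\|_\infty$, and that you additionally make explicit the choice of hidden widths satisfying the theorem's size condition, which the paper leaves implicit.
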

    \begin{proof}
        Let $M = \max \{ \max_{i \in [N]} |y_i| , \mathbf{1}_{d_y} \}$, where $\max$ is intended to be applied element-wise. Let $\tilde{D} = \{ (x_i, \tilde{y}_i) \thinspace | \thinspace \tilde{y}_i = y_i \oslash M \}_{i=1}^N$, where $\oslash$ is element-wise division. Dataset $\tilde{D}$ satisfies the assumptions in Theorem~\ref{thm:memorisation_theirs}, as targets $\tilde{y}_i$ are now all necessarily in $[-1, +1]^{d_y}$. Hence, there exists a $3$-layer ReLU MLP $f_{\boldsymbol{\theta}}$ memorising $\tilde{D}$. Let $(W_l, b_l)$ refer to its weight matrix and bias vector at the $l$-th layer, $l = 1, 2, 3$. Let $\bar{W} = \mathrm{diag}(M)$, i.e. a diagonal matrix in $\mathbb{R}^{d_{y} \times d_{y}}$ such that $\bar{W}_{ii} = M_i, i \in [d_y]$. To conclude the proof it is sufficient to construct a $3$-layer ReLU MLP $\mlp{D}$ with parameter stacking $\{(W^D_l, b^D_l)\}_{l=1,2,3}$ such that: $(W^D_1, b^D_1) = (W_1, b_1), (W^D_2, b^D_2) = (W_2, b_2), (W^D_3, b^D_3) = (\bar{W} \cdot W_3, \bar{W} \cdot b_3)$.
    \end{proof}

    We now continue our proof. Formally, we seek to find two MLPs $\mlp{\odot_{iij}}, \mlp{\odot_{ijk}}$ implementing, respectively, functions $f^{\odot}_{iij}, f^{\odot}_{ijk}$ satisfying the following. $f^{\odot}_{iij}$ is such that $\forall a, b \in [n], a \neq b$:
    \begin{align*}
        f^{\odot}_{iij} \big( X^{(t,1)}_{aab} \big) =
            \begin{cases}
                \mathbf{0}_d & \text{if } X^{(t,1), 1}_{aab} = 0, \\
                X^{(t,1), d+1:}_{aab} & \text{otherwise}.
            \end{cases}
    \end{align*}
    Likewise, $f^{\odot}_{ijk}$ is such that: $\forall a, b, c \in [n], a \neq b \neq c$:
    \begin{align*}
        f^{\odot}_{ijk} \big(  X^{(t,1)}_{abc} \big) =
            \begin{cases}
                \mathbf{0}_d & \text{if } X^{(t,1), 1}_{abc} = 0, \\
                X^{(t,1), d+1:}_{abc} & \text{otherwise}.
            \end{cases}
    \end{align*}
    We construct two datasets:
    \begin{align*}
        D_{iij} &= \Big \{ \big( x, f^{\odot}_{iij}(x) \big) \thinspace \big | \thinspace x = X^{(t,1)}_{(1),aab} \enspace \forall a, b \in [n_1], a \neq b \Big \}\ \cup \\
            &\qquad \cup \Big \{ \big( x, f^{\odot}_{iij}(x) \big) \thinspace \big | \thinspace x = X^{(t,1)}_{(2),aab} \enspace \forall a, b \in [n_2], a \neq b \Big \} \\
        D_{ijk} &= \Big \{ \big( x, f^{\odot}_{ijk}(x) \big) \thinspace \big | \thinspace x = X^{(t,1)}_{(1),abc} \enspace \forall a, b, c \in [n_1], a \neq b \neq c \Big \}\ \cup \\
            &\qquad \cup \Big \{ \big( x, f^{\odot}_{ijk}(x) \big) \thinspace \big | \thinspace x = X^{(t,1)}_{(2),abc} \enspace \forall a, b, c \in [n_2], a \neq b \neq c \Big \}
    \end{align*}
    Here, as all targets are the output of a well-defined function, these datasets satisfy, by construction, the hypothesis of Preposition~\ref{prop:memorisation_ours}, which we apply on both. This guarantees the existence of $\mlp{\odot_{iij}}$, $\mlp{\odot_{ijk}}$; their application allows global pooling to effectively recover sparse message aggregation. We also notice that, when updating representations of non-root nodes, roots may be amongst their neighbours, so that it may be needed to additionally sum their representations in $X_{iii}$ to $X_{ijj}$, conditioned on the subgraph connectivity information stored in $X_{iji}$. Accordingly, let us define function $f^{\odot}_{iji}$:
    \begin{align*}
        f^{\odot}_{iji} \big(  X^{(t,1)}_{aba} \big) =
            \begin{cases}
                \mathbf{0}_d & \text{if } X^{(t,1), 1}_{aba} = 0, \\
                X^{(t,1), d+1:}_{aba} & \text{otherwise}.
            \end{cases}
    \end{align*}
    We construct dataset:
    \begin{align*}
        D_{iji} &= \Big \{ \big( x, f^{\odot}_{iji}(x) \big) \thinspace \big | \thinspace x = X^{(t,1)}_{(1),aba} \enspace \forall a, b \in [n_1], a \neq b \Big \}\ \cup \\
            &\qquad \cup \Big \{ \big( x, f^{\odot}_{iji}(x) \big) \thinspace \big | \thinspace x = X^{(t,1)}_{(2),aba} \enspace \forall a, b \in [n_2], a \neq b \Big \}
    \end{align*}
    Invoking Proposition~\ref{prop:memorisation_ours} on $D_{iji}$ guarantees the existence of $\mlp{\odot_{iji}}$ memorising $D_{iji}$. We let the described \IGN{3} implement such networks:
    \begin{align*}
        X^{(t,2)}_{iij} &= \glue{ \cp{}{d}{}{d}}{\mlpcp{}{}{d+1}{}{\odot_{iij}}} X^{(t,1)}_{iij} \\
        X^{(t,2)}_{iji} &= \glue{ \cp{}{d}{}{d}}{\mlpcp{}{}{d+1}{}{\odot_{iji}}} X^{(t,1)}_{iji} \\
        X^{(t,2)}_{ijk} &= \glue{ \cp{}{d}{}{d}}{\mlpcp{}{}{d+1}{}{\odot_{ijk}}} X^{(t,1)}_{ijk}
    \end{align*}
    This last layer completes message aggregation:
    \begin{align}
        X^{(t,3)}_{iii} &= \cp{}{d}{}{d} X^{(t,2)}_{iii} + \cp{d+1}{}{d+1}{} \broad{}{iii} \pool{}{i} X^{(t,2)}_{iij} \label{eq:3IGN_DSGNN_aggr_1} \\
        X^{(t,3)}_{ijj} &= \cp{}{d}{}{d} X^{(t,2)}_{ijj} + \cp{d+1}{}{d+1}{} \broad{}{ijj} X^{(t,2)}_{iji} + \cp{d+1}{}{d+1}{} \broad{}{ijj} \pool{}{ij} X^{(t,2)}_{ijk} \label{eq:3IGN_DSGNN_aggr_2}
    \end{align}
    
    ([3] \emph{Update}) We finally describe the statements implementing linear transformations operated by parameters $W_{1,t}, W_{2,t}$, other than bringing the other orbit representations to dimension $d$:
    \begin{align}
        X^{(t+1)}_{iii} &= \sigma \big(\ \sumglue{W_{1,t}}{W_{2,t}} X^{(t,3)}_{iii}\ \big) \label{eq:3IGN_DSGNN_up_1} \\
        X^{(t+1)}_{ijj} &= \sigma \big(\ \sumglue{W_{1,t}}{W_{2,t}} X^{(t,3)}_{ijj}\ \big) \label{eq:3IGN_DSGNN_up_2} \\
        X^{(t+1)}_{iij} &= \hemcp{}{1}{}{1}{d} X^{(t,3)}_{iij} \nonumber \\
        X^{(t+1)}_{iji} &= \hemcp{}{1}{}{1}{d} X^{(t,3)}_{iji} \nonumber \\
        X^{(t+1)}_{ijk} &= \hemcp{}{1}{}{1}{d} X^{(t,3)}_{ijk} \nonumber
    \end{align}

    [\emph{\textbf{DSS-GNN}}] When equipped with \citet{morris2019weisfeiler} base-encoder, DSS-GNN updates representation of node $i$ in subgraph $k$ as:
    \begin{align}\label{eq:DSSGNN_Morris}
        x^{k,(t+1)}_i &= \sigma \big ( W^{1}_{1,t} \cdot x^{k,(t)}_i + W^{1}_{2,t} \cdot \sum_{j \sim_k i} x^{k,(t)}_j + W^{2}_{1,t} \cdot \sum_{h} x^{h,(t)}_i + W^{2}_{2,t} \cdot \sum_{j \sim i} \sum_{h} x^{h,(t)}_j \big ) 
    \end{align}
    
    ([0] \emph{Cross-bag aggregation}) We start by performing those operations above of the form $\sum_h$: 
    % Commented instructions are to perform connectivity aggregation
    \begin{align*}
        X^{(t,0)}_{iii} &= \glue{I_d}{I_d} X^{(t)}_{iii} + \cp{}{}{d+1}{2d} \broad{}{j,j,j} \pool{}{j} X^{(t)}_{ijj} \\
        % X^{(t,0)}_{iij} &= \glue{\hemcp{1}{1}{1}{1}{2d}}{\hemcp{1}{1}{2}{2}{2d}} X^{(t)}_{iij} + \hemcp{1}{1}{2}{2}{2d} \broad{}{*,j,k} \pool{}{jk} X^{(t)}_{ijk} \\
        % X^{(t,0)}_{iji} &= \glue{\hemcp{1}{1}{1}{1}{2d}}{\hemcp{1}{1}{2}{2}{2d}} X^{(t)}_{iji} + \hemcp{1}{1}{2}{2}{2d} \broad{}{*,j,k} \pool{}{jk} X^{(t)}_{ijk} \\
        X^{(t,0)}_{ijj} &= \cp{}{}{}{d} X^{(t)}_{ijj} + \cp{}{}{d+1}{2d} \broad{}{*,j,j} \pool{}{j} X^{(t)}_{ijj} + \cp{}{}{d+1}{2d} \broad{}{*,i,i} X^{(t)}_{iii} \\
        % X^{(t,0)}_{ijk} &= \hemcp{1}{1}{1}{1}{2d} X^{(t)}_{ijk} + \hemcp{1}{1}{2}{2}{2d} \broad{}{*,j,k} \pool{}{jk} X^{(t)}_{ijk} + \hemcp{1}{1}{2}{2}{2d} \broad{}{*,i,j} X^{(t)}_{iij}
    \end{align*}
    
    ([1] \emph{Message broadcasting}) Similarly as in DS-GNN, we propagate node representations --- and their cross-bag aggregated counterparts --- on those orbits storing (sub)graph connectivity.
    \begin{align*}
        X^{(t,1)}_{iij} &= \hemcp{}{d}{}{d}{3d} X^{(t,0)}_{iij} + \cp{}{}{d+1}{3d} \broad{}{i,i,j} X^{(t,0)}_{ijj} \\
        X^{(t,1)}_{iji} &= \hemcp{}{d}{}{d}{3d} X^{(t,0)}_{iji} + \cp{}{}{d+1}{3d} \broad{}{i,*,i} X^{(t,0)}_{iii} \\
        X^{(t,1)}_{ijk} &= \hemcp{}{d}{}{d}{3d} X^{(t,0)}_{ijk} + \cp{}{}{d+1}{3d} \broad{}{i,*,j} X^{(t,0)}_{ijj}
    \end{align*}
    
    ([2] \emph{Message sparsification \& aggregation}) We now follow the same rationale as for DS-GNN, and construct datasets that allow the invocation of Proposition~\ref{prop:memorisation_ours}. This will guarantee the existence of an MLP that can be applied to retain, for each node, only those messages coming from direct neighbours, according to the subgraph connectivity (stored in channel 1) or the original one (which we assume to be stored in channel 2). Precisely, we would like to memorise the following functions:
    \begin{align*}
        f^{\odot, \sim_i}_{iij} \big( X^{(t,1)}_{aab} \big) &=
            \begin{cases}
                \mathbf{0}_d & \text{if } X^{(t,1), 1}_{aab} = 0, \\
                X^{(t,1), d+1:2d}_{aab} & \text{otherwise}.
            \end{cases}
        &\!\!\!f^{\odot, \sim_i}_{iji} \big(  X^{(t,1)}_{aba} \big) =
            \begin{cases}
                \mathbf{0}_d & \text{if } X^{(t,1), 1}_{aba} = 0, \\
                X^{(t,1), d+1:2d}_{aba} & \text{otherwise}.
            \end{cases} \\
        f^{\odot, \sim_i}_{ijk} \big( X^{(t,1)}_{abc} \big) &=
            \begin{cases}
                \mathbf{0}_d & \text{if } X^{(t,1), 1}_{abc} = 0, \\
                X^{(t,1), d+1:2d}_{abc} & \text{otherwise}.
            \end{cases} \\
        f^{\odot, \sim}_{iij} \big( X^{(t,1)}_{aab} \big) &=
            \begin{cases}
                \mathbf{0}_d & \text{if } X^{(t,1), 2}_{aab} = 0, \\
                X^{(t,1), 2d+1:}_{aab} & \text{otherwise}.
            \end{cases}
        &\!\!\!f^{\odot, \sim}_{iji} \big(  X^{(t,1)}_{aba} \big) =
            \begin{cases}
                \mathbf{0}_d & \text{if } X^{(t,1), 2}_{aba} = 0, \\
                X^{(t,1), 2d+1:}_{aba} & \text{otherwise}.
            \end{cases} \\
        f^{\odot, \sim}_{ijk} \big( X^{(t,1)}_{abc} \big) &=
            \begin{cases}
                \mathbf{0}_d & \text{if } X^{(t,1), 2}_{abc} = 0, \\
                X^{(t,1), 2d+1:}_{abc} & \text{otherwise}.
            \end{cases} \\
    \end{align*}
    \noindent and construct the corresponding datasets:
    \begin{align*}
        D^{\sim_i}_{iij} &= \Big \{ \big( x, f^{\odot, \sim_i}_{iij}(x) \big) \thinspace \big | \thinspace x = X^{(t,1)}_{(1),aab} \enspace \forall a, b \in [n_1], a \neq b \Big \}\ \cup \\
        & \qquad \qquad \cup \Big \{ \big( x, f^{\odot, \sim_i}_{iij}(x) \big) \thinspace \big | \thinspace x = X^{(t,1)}_{(2),aab} \enspace \forall a, b \in [n_2], a \neq b \Big \} \\
        D^{\sim_i}_{iji} &= \Big \{ \big( x, f^{\odot, \sim_i}_{iji}(x) \big) \thinspace \big | \thinspace x = X^{(t,1)}_{(1),aba} \enspace \forall a, b \in [n_1], a \neq b \Big \}\ \cup \\
        & \qquad \qquad \cup \Big \{ \big( x, f^{\odot, \sim_i}_{iij}(x) \big) \thinspace \big | \thinspace x = X^{(t,1)}_{(2),aba} \enspace \forall a, b \in [n_2], a \neq b \Big \} \\
        D^{\sim_i}_{ijk} &= \Big \{ \big( x, f^{\odot, \sim_i}_{ijk}(x) \big) \thinspace \big | \thinspace x = X^{(t,1)}_{(1),abc} \enspace \forall a, b, c \in [n_1], a \neq b \neq c \Big \}\ \cup \\
        & \qquad \qquad \cup \Big \{ \big( x, f^{\odot, \sim_i}_{ijk}(x) \big) \thinspace \big | \thinspace x = X^{(t,1)}_{(2),abc} \enspace \forall a, b, c \in [n_2], a \neq b \neq c \Big \} \\
        D^{\sim}_{iij} &= \Big \{ \big( x, f^{\odot, \sim}_{iij}(x) \big) \thinspace \big | \thinspace x = X^{(t,1)}_{(1),aab} \enspace \forall a, b \in [n_1], a \neq b \Big \}\ \cup \\
        & \qquad \qquad \cup \Big \{ \big( x, f^{\odot, \sim}_{iij}(x) \big) \thinspace \big | \thinspace x = X^{(t,1)}_{(2),aab} \enspace \forall a, b \in [n_2], a \neq b \Big \} \\
        D^{\sim}_{iji} &= \Big \{ \big( x, f^{\odot, \sim}_{iji}(x) \big) \thinspace \big | \thinspace x = X^{(t,1)}_{(1),aba} \enspace \forall a, b \in [n_1], a \neq b \Big \}\ \cup \\
        & \qquad \qquad \cup \Big \{ \big( x, f^{\odot, \sim}_{iij}(x) \big) \thinspace \big | \thinspace x = X^{(t,1)}_{(2),aba} \enspace \forall a, b \in [n_2], a \neq b \Big \} \\
        D^{\sim}_{ijk} &= \Big \{ \big( x, f^{\odot, \sim}_{ijk}(x) \big) \thinspace \big | \thinspace x = X^{(t,1)}_{(1),abc} \enspace \forall a, b, c \in [n_1], a \neq b \neq c \Big \}\ \cup \\
        & \qquad \qquad \cup \Big \{ \big( x, f^{\odot, \sim}_{ijk}(x) \big) \thinspace \big | \thinspace x = X^{(t,1)}_{(2),abc} \enspace \forall a, b, c \in [n_2], a \neq b \neq c \Big \}
    \end{align*}
    These, by Proposition~\ref{prop:memorisation_ours}, are memorised by, respectively, MLPs $\mlp{\odot^{\sim_i}_{iij}}, \mlp{\odot^{\sim_i}_{iji}}, \mlp{\odot^{\sim_i}_{ijk}}, \mlp{\odot^{\sim}_{iij}}, \mlp{\odot^{\sim}_{iji}}, \mlp{\odot^{\sim}_{ijk}}$. We let our \IGN{3} model apply these:
    \begin{align*}
        X^{(t,2)}_{iij} &= \big[ \cp{}{d}{}{d}\  \mlpcp{}{}{d+1}{2d}{\odot^{\sim_i}_{iij}}\  \mlpcp{}{}{2d+1}{}{\odot^{\sim}_{iij}} \big] X^{(t,1)}_{iij} \\
        X^{(t,2)}_{iji} &= \big[ \cp{}{d}{}{d}\  \mlpcp{}{}{d+1}{2d}{\odot^{\sim_i}_{iji}}\  \mlpcp{}{}{2d+1}{}{\odot^{\sim}_{iji}} \big] X^{(t,1)}_{iji} \\
        X^{(t,2)}_{ijk} &= \big[ \cp{}{d}{}{d}\  \mlpcp{}{}{d+1}{2d}{\odot^{\sim_i}_{ijk}}\  \mlpcp{}{}{2d+1}{}{\odot^{\sim}_{ijk}} \big] X^{(t,1)}_{ijk}
    \end{align*}
    It is only left to aggregate messages via global pooling:
    \begin{align*}
        X^{(t,3)}_{iii} &= \hemcp{}{2d}{}{2d}{4d} X^{(t,2)}_{iii} + \cp{d+1}{}{2d+1}{4d}\broad{}{iii} \pool{}{i} X^{(t,2)}_{iij} \\
        X^{(t,3)}_{ijj} &= \hemcp{}{2d}{}{2d}{4d} X^{(t,2)}_{ijj} + \cp{d+1}{}{2d+1}{4d} \broad{}{ijj} X^{(t,2)}_{iji} + \cp{d+1}{}{2d+1}{4d} \broad{}{ijj} \pool{}{ij} X^{(t,2)}_{ijk}
    \end{align*}
    
    ([4] \emph{Update}) We describe the statements implementing the final linear transformations:
    \begin{align*}
        X^{(t+1)}_{iii} &= \sigma \Big([ W^1_{1,t}\ ||\ W^2_{1,t} \ ||\ W^1_{2,t}\ ||\ W^2_{2,t} ]\ X^{(t,3)}_{iii} \Big)\\
        X^{(t+1)}_{ijj} &= \sigma \Big( [ W^1_{1,t}\ ||\ W^2_{1,t} \ ||\ W^1_{2,t}\ ||\ W^2_{2,t} ]\ X^{(t,3)}_{ijj} \Big) \\
        X^{(t+1)}_{iij} &= \hemcp{}{2}{}{2}{d} X^{(t,3)}_{iij} \\
        X^{(t+1)}_{iji} &= \hemcp{}{2}{}{2}{d} X^{(t,3)}_{iji} \\
        X^{(t+1)}_{ijk} &= \hemcp{}{2}{}{2}{d} X^{(t,3)}_{ijk}
    \end{align*}
    
    [\emph{\textbf{GNN-AK-ctx}}] When equipped with \citet{morris2019weisfeiler} base-encoder, GNN-AK-ctx updates representation of node $i$ in subgraph $k$ as:
    \begin{align}
        x^{k,(t,0)}_i &= x^{k,(t)}_i \nonumber \\
        x^{k,(t,l+1)}_i &= \sigma \big ( W_{1,t,l} \cdot x^{k,(t,l)}_i + W_{2,t,l} \cdot \sum_{j \sim_k i} x^{k,(t,l)}_j \big ), \enspace l = 0, \mathellipsis, L - 1 & [S] \label{eq:GNNAK_Morris_S} \\
        x^{k,(t+1)}_i &= x^{i,(t,L)}_i + \sum_j x^{j,(t,L)}_i + \sum_j x^{i,(t,L)}_j & [A] \label{eq:GNNAK_A}
    \end{align}
    
    ([$S$]) In order to implement block [$S$], it is sufficient to repeat steps [1--3] in the DS-GNN derivation $L$ times, i.e. the desired number of message-passing steps. We obtain representations $X^{(t,L)}$.

    ([$A$]) Block [$A$] is implemented as:
    \begin{align*}
        X^{(t+1)}_{iii} &= 3 \cdot X^{(t,L)}_{iii} + \broad{}{i,i,i} \pool{}{i} X^{(t,L)}_{ijj} + \broad{}{j,j,j} \pool{}{j} X^{(t,L)}_{ijj} \\
        X^{(t+1)}_{ijj} &= 3 \cdot \broad{}{*,i,i} \cdot X^{(t,L)}_{iii} + \broad{}{*,i,i} \pool{}{i} X^{(t,L)}_{ijj} + \broad{}{*,j,j} \pool{}{j} X^{(t,L)}_{ijj}
    \end{align*}
    
    In the original paper~\citep{zhao2022from}, the second and third terms in block [$A$] only operate on the nodes which are members of the ego-networks at hand:
    \begin{align*}
        x^{k,(t+1)}_i &= x^{i,(t,L)}_i + \sum_{j \in V^i} x^{j,(t,L)}_i + \sum_{j \in V^i} x^{i,(t,L)}_j & [A]
    \end{align*}
    We show that \IGNs{3} can implement this block formulation as well, by resorting to the same sparsification technique employed in the derivation of DS-GNN. Let us recall that, as already mentioned above, the \emph{ego-networks} policy can store reachability patterns in orbit representation $X_{iij}$: they convey, for each node $j$, its membership to subgraph $i$. This information can be used to sparsify node representations being aggregated by the global pooling operations taking place in the equations above. We start by placing node representations $X_{ijj}$ onto $X_{iij}, X_{iji}$. Let us also replicate reachability patterns into the second channel of $X_{iji}$.
    \begin{align*}
        X^{(t,p)}_{iij} &= \cp{}{d}{}{d} X^{(t,L)}_{iij} + \cp{}{d}{d+1}{2d} \broad{}{i,i,j} X^{(t,L)}_{ijj} \\
        X^{(t,p)}_{iji} &= \cp{}{d}{}{d} X^{(t,L)}_{iji} + \cp{2}{2}{2}{2} \broad{}{i,j,i} X^{(t,L)}_{iij} + \cp{}{d}{d+1}{2d} \broad{}{j,i,j} X^{(t,L)}_{ijj} % mind this last broadcasting
    \end{align*}
    We would like to memorise the following sparsification functions:
    \begin{align*}
        f^{\odot, V^i}_{iij} \!\big( X^{(t,p)}_{aab} \big) &=
            \begin{cases}
                \mathbf{0}_d & \text{if } X^{(t,p), 2}_{aab} = 0, \\
                X^{(t,p), d+1:}_{aab} & \text{otherwise}.
            \end{cases}
         &\!\!\!f^{\odot, V^i}_{iji} \!\big(  X^{(t,p)}_{aba} \big) =
            \begin{cases}
                \mathbf{0}_d & \text{if } X^{(t,p), 2}_{aba} = 0, \\
                X^{(t,p), d+1:}_{aba} & \text{otherwise}.
            \end{cases}
    \end{align*}
    \noindent so we construct the following datasets:
    \begin{align*}
        D^{V^i}_{iij} &= \Big \{ \big( x, f^{\odot, V^i}_{iij}(x) \big) \thinspace \big | \thinspace x = X^{(t,p)}_{(1),aab} \enspace \forall a, b \in [n_1], a \neq b \Big \}\ \cup \\
        & \qquad \qquad \cup \Big \{ \big( x, f^{\odot, V^i}_{iij}(x) \big) \thinspace \big | \thinspace x = X^{(t,p)}_{(2),aab} \enspace \forall a, b \in [n_2], a \neq b \Big \} \\
        D^{V^i}_{iji} &= \Big \{ \big( x, f^{\odot, V^i}_{iji}(x) \big) \thinspace \big | \thinspace x = X^{(t,p)}_{(1),aba} \enspace \forall a, b \in [n_1], a \neq b \Big \}\ \cup \\
        & \qquad \qquad \cup \Big \{ \big( x, f^{\odot, V^i}_{iji}(x) \big) \thinspace \big | \thinspace x = X^{(t,p)}_{(2),aba} \enspace \forall a, b \in [n_2], a \neq b \Big \}
    \end{align*}
    Again, we invoke Proposition~\ref{prop:memorisation_ours}, which guarantees the existence of MLPs $\mlp{\odot^{V^i}_{iij}}, \mlp{\odot^{V^i}_{iji}}$. Let the \IGN{3} model implement them:
    \begin{align*}
        X^{(t,p+1)}_{iij} &= \glue{\cp{}{d}{}{d}}{ \mlpcp{}{}{d+1}{2d}{\odot^{V^i}_{iij}}} X^{(t,p)}_{iij} \\
        X^{(t,p+1)}_{iji} &= \glue{\cp{}{d}{}{d}}{ \mlpcp{}{}{d+1}{2d}{\odot^{V^i}_{iji}}} X^{(t,p)}_{iji}
    \end{align*}
    Then, we perform the last global pooling step to complete the implementation of block [$A$]:
    \begin{align}
        X^{(t+1)}_{iii} &= 3 \cdot \cp{}{d}{}{d} X^{(t,p+1)}_{iii} + \cp{d+1}{2d}{}{d} \broad{}{i,i,i} \pool{}{i} X^{(t,p+1)}_{iij} + \cp{d+1}{2d}{}{d} \broad{}{i,i,i} \pool{}{i} X^{(t,p+1)}_{iji} \label{eq:3IGN_GNNAK_pooling_1} \\
        X^{(t+1)}_{ijj} &= 3 \cdot \cp{}{d}{}{d} \broad{}{*,i,i} X^{(t,p+1)}_{iii} + \cp{d+1}{2d}{}{d} \broad{}{*,i,i} \pool{}{i} X^{(t,p+1)}_{iij} + \cp{d+1}{2d}{}{d} \broad{}{*,i,i} \pool{}{i} X^{(t,p+1)}_{iji} \label{eq:3IGN_GNNAK_pooling_2} \\
        X^{(t+1)}_{iij} &= \hemcp{1}{2}{1}{2}{d} X^{(t,p+1)}_{iij} \nonumber \\
        X^{(t+1)}_{iji} &= \hemcp{1}{1}{1}{1}{d} X^{(t,p+1)}_{iji} \nonumber \\
        X^{(t+1)}_{ijk} &= \hemcp{1}{1}{1}{1}{d} X^{(t,p+1)}_{ijk} \nonumber
    \end{align}
    
    [\emph{\textbf{GNN-AK}}] In the case of [$A$] operating only on those nodes in the ego-networks at hand, it is sufficient to rewrite Equations~\ref{eq:3IGN_GNNAK_pooling_1}, \ref{eq:3IGN_GNNAK_pooling_2} as:
    \begin{align*}
        X^{(t+1)}_{iii} &= 2 \cdot \cp{}{d}{}{d} X^{(t,p+1)}_{iii} + \cp{d+1}{2d}{}{d} \broad{}{i,i,i} \pool{}{i} X^{(t,p+1)}_{iij} \\
        X^{(t+1)}_{ijj} &= 2 \cdot \cp{}{d}{}{d} \broad{}{*,i,i} X^{(t,p+1)}_{iii} + \cp{d+1}{2d}{}{d} \broad{}{*,i,i} \pool{}{i} X^{(t,p+1)}_{iij}
    \end{align*}
    These equations would also implement the more general block [$A$] in~\Cref{eq:GNNAK_A}.
    
    [\emph{\textbf{ID-GNN}}] With~\citet{morris2019weisfeiler} base-encoder, ID-GNN updates node representations as:
    \begin{equation}\label{eq:IDGNN_Morris}
        x^{k, (t+1)}_i = \sigma \big( W_{1,t} x^{k,(t)}_i + W_{2,t} \sum_{j \sim_k i, j \neq k} x^{k,(t)}_j + \mathds{1}_{[k \sim_k i]} \cdot W_{3,t} x^{k,(t)}_k \big)
    \end{equation}
    Message passing is performed according to the same $3$-IGN programme as in DS-GNN, with the only modifications required to Equations~\ref{eq:3IGN_DSGNN_aggr_1},\ref{eq:3IGN_DSGNN_aggr_2},\ref{eq:3IGN_DSGNN_up_1},\ref{eq:3IGN_DSGNN_up_2}. Equations~\ref{eq:3IGN_DSGNN_aggr_1} and \ref{eq:3IGN_DSGNN_aggr_2} are rewritten as:
    \begin{align*}
        X^{(t,3)}_{iii} &= \cp{}{d}{}{d} X^{(t,2)}_{iii} + \cp{}{d}{d+1}{} W_{2,t} \cp{d+1}{}{}{d} \broad{}{iii} \pool{}{i} X^{(t,2)}_{iij} \\
        X^{(t,3)}_{ijj} &= \cp{}{d}{}{d} X^{(t,2)}_{ijj} + \cp{}{d}{d+1}{} W_{3,t} \cp{d+1}{}{}{d} \broad{}{ijj} X^{(t,2)}_{iji} + \cp{}{d}{d+1}{} W_{2,t} \cp{d+1}{}{}{d} \broad{}{ijj} \pool{}{ij} X^{(t,2)}_{ijk}
    \end{align*}
    \noindent whereas Equations~\ref{eq:3IGN_DSGNN_up_1}, \ref{eq:3IGN_DSGNN_up_2} as:
    \begin{align*}
        X^{(t+1)}_{iii} &= \sigma \big(\ \sumglue{W_{1,t}}{I_d} X^{(t,3)}_{iii}\ \big) \\
        X^{(t+1)}_{ijj} &= \sigma \big(\ \sumglue{W_{1,t}}{I_d} X^{(t,3)}_{ijj}\ \big)
    \end{align*}
    
    [\emph{\textbf{NGNN}}] Using a~\citet{morris2019weisfeiler} base-encoder, the update equation for the \emph{inner} siamese GNN in a Nested GNN~\citep{zhang2021nested} exactly match that of Equation~\ref{eq:DSGNN_Morris}. It is therefore sufficient for the \IGN{3} to execute the same programme employed in the DS-GNN derivation.
\end{proof}

\subsection{Upperbounding Subgraph GNNs}

\begin{proof}[Proof of Thereom~\ref{thm:3IGN_upperbounds_SubgraphNetworks}]
    Subgraph GNN $\mathcal{N}_\Theta$ distinguishes $G_1, G_2$ if they are assigned distinct representations, that is: $y_{G_1} = \mathcal{N}_\Theta \big( A_1, X_1 \big) \neq \mathcal{N}_\Theta \big( A_2, X_2 \big) = y_{G_2}$. Naturally, a \IGN{3} instance $\mathcal{M}_{\Omega}$ implementing $\mathcal{N}_\Theta$ on the same pair of graphs would distinguish them as well. We prove the theorem by showing that such an instance exists.
    
    We seek to find a \IGN{3} model $\mathcal{M}_{\Omega}$ in the form of Equation~\ref{eq:3IGN_model} such that:  $\mathcal{M}_{\Omega} \big( A_1, X_1 \big) = \mathcal{N}_\Theta \big( A_1, X_1 \big) = y_{G_1}$ and $\mathcal{M}_{\Omega} \big( A_2, X_2 \big) = \mathcal{N}_\Theta \big( A_2, X_2 \big) = y_{G_2}$. According to Equation~\ref{eq:subgraph_gnn}, $\mathcal{N}_\Theta \big( \cdot \big) = ( \mu \circ \rho \circ \mathcal{S} \circ \pi \big )_{\Theta} ( \cdot )$. We will show how to construct $\mathcal{M}_{\Omega}$ as an appropriate stacking of \IGN{3} layers exactly implementing each of the components $\pi, \mathcal{S}, \rho, \mu$ when applied to graphs $G_1, G_2$. We assume, w.l.o.g., that stacking $\mathcal{S}$ has the form $\mathcal{S} = L^{(T)} \circ L^{(T-1)} \circ \mathellipsis \circ L^{(1)}$, where $L$'s are $\mathcal{N}$-layers.
    
    By the definition of class $\Upsilon$, $\pi$ in $\mathcal{N}_{\Theta}$ is such that $\pi \in \Pi$, thus, by Lemma~\ref{lemma:3IGN_implements_policies}, there exists a stacking of \IGN{3} layers $\mathcal{M}_{\pi}$ implementing $\pi$. At the same time, for each $\mathcal{N}$-layer $L^{(t)}$, Lemma~\ref{lemma:3IGN_implements_SubgraphNetworks} has its hypotheses satisfied, hence there exists a \IGN{3}-stacking $\mathcal{M}^{(t)}$ implementing $L^{(t)}$ on both $G_1, G_2$. We can compose such stacks so that, overall, we have: $\big( \mathcal{M}^{(T)} \circ \mathellipsis \circ \mathcal{M}^{(1)} \circ \mathcal{M_\pi} \big) \cong_{\{ G_1, G_2\}} \big( \mathcal{S} \circ \pi \big)$, $\cong_{\{ G_1, G_2\}}$ denoting \emph{implementation} over set $\{ G_1, G_2 \} \subset \mathcal{G}$.
    
    We are left with implementing blocks $\mu, \rho$. We show this for every Subgraph GNN in $\Upsilon$.
    
    [\emph{\textbf{DS-GNN} \& \textbf{DSS-GNN}}] perform graph readout on each subgraph and then apply a Deep Sets network to these obtained representations:
    \begin{align*}
        x^{k, (T)} &= \sum_{i} x_{i}^{k, (T)} \\
        y_G &= \psi \big( \sum_{k} \phi(x^{k, (T)}) \big) 
    \end{align*}
    
    The following instruction implements subgraph readout as a \texttt{3-} to \texttt{1-} equivariant layer:
    \begin{align*}
        X^{\rho, (1)}_i &= \pool{}{i} X^{(T)}_{ijj} + X^{(T)}_{iii}
    \end{align*}
    
    Transformation $\phi$ is implemented by a stacking of \IGN{1} layers:
    \begin{align}\label{eq:mlp_on_subgraph_readout}
        X^{\rho, (2)}_i &= \mlp{\phi} X^{\rho, (1)}_{i}
    \end{align}
    
    Finally, we let module $h$ in the \IGN{3} model implement summation $\sum_{k}$, and choose MLP $m$ in the \IGN{3} such that $m \equiv \psi$:
    \begin{align}
        x_G &= h \big( X^{\rho, (2)}_{i} \big) = \sum_{i} X^{\rho, (2)}_{i} \label{eq:DSSGNN_final_readout}\\
        y_G &= m \big( x_G \big) = \psi \big( x_G \big) \nonumber
    \end{align}
    
    It is possible for the DeepSets network to implement a late invariant-aggregation strategy, so that $\mu \circ \rho$ is realised as:
    \begin{align*}
        x^{k, (T)} &= \sum_{i} x_{i}^{k, (T)} \\
        x^{k, (T+1)} &= \sigma \big( W^1_{T} x^{k, (T)} + \sum_{h} W^2_{T} x^{h, (T)} \big) \\
        \mathellipsis \\
        x^{k, (T+L)} &= \sigma \big( W^1_{T+L-1} x^{k, (T+L-1)} + \sum_{h} W^2_{T+L-1} x^{h, (T+L-1)} \big) \\
        y_G &= \psi \big( \sum_{k} x^{k, (T+L)} \big) 
    \end{align*}
    
    In this case, it is sufficient to rewrite Equation~\ref{eq:mlp_on_subgraph_readout} as:
    \begin{align*}
        X^{\rho, (1+l)}_i &= \sigma \big( W^1_{l} X^{\rho, (l)}_{i} + W^2_{l} \broad{}{*} \pool{}{} X^{\rho, (l)}_{i} \big)
    \end{align*}
    \noindent with $l$ ranging from $1$ to $L$, so that then~\Cref{eq:DSSGNN_final_readout} becomes:
    \begin{align*}
        x_G &= h \big( X^{\rho, (1+L)}_{i} \big) = \sum_{i} X^{\rho, (1+L)}_{i}
    \end{align*}
    
    [\emph{\textbf{GNN-AK}, \textbf{GNN-AK-ctx}} \& \textbf{ID-GNN}] do not perform subgraph pooling, rather pool the representations of root nodes directly:
    \begin{align*}
        y_G = \mu \big( \sum_h x^{h, (T)}_h \big)
    \end{align*}
    
    Block $h$ implements pooling on roots:
    \begin{align*}
        x_G &= h(\mathcal{Y}^{(T)}) = \sum_{i} X^{(T)}_{iii}
    \end{align*}
    \noindent and it is then sufficient to choose block $m$ such that $m \equiv \mu$.
    
    [\emph{\textbf{NGNN}}], in its most general form, performs $L$ layers of message passing on subgraph pooled representations over the original graph connectivity:
     \begin{align*}
        x^{(T)}_v &= \sum_{w \in V^v} x^{v, (T)}_{w} \\
        x^{(T+1)}_v &= \sigma \big( W^1_{T} x^{(T)}_v + W^2_{T} \sum_{w \sim v} x^{(T)}_w \big) \\
        \mathellipsis \\
        x^{(T+L)}_v &= \sigma \big( W^1_{T+L-1} x^{(T+L-1)}_v + W^2_{T+L-1} \sum_{w \sim v} x^{(T+L-1)}_w \big) \\
        y_G &= \mu \big( \sum_w x^{(T+L)}_w \big)
    \end{align*}
    
   We assume the original graph connectivity has been retained in the third channel of orbit representations $X_{iij}, X_{iji}, X_{ijk}$, while the second channel in $X_{iij}$ hosts reachability patterns (see Section~\ref{app:policy_implementation}, [\textbf{ego-networks($h$)}] and [\emph{\textbf{Retaining original connectivity}}]). First, we pool representations of nodes in each subgraph, excluding those nodes not belonging to the ego-nets. We need to extend the summation only to those nodes belonging to the ego-networks. This information is stored in the reachability pattern in $X_{iij}$, and we make use of this information to mask node representations before aggregating them. First, we place node representations in $X_{ijj}$ over $X_{iij}$:
    \begin{align*}
        X^{\rho, (1)}_{iij} &= \hemcp{}{d}{}{d}{2d} X_{iij} + \cp{}{}{d+1}{2d} \broad{}{iij} X^{(T)}_{ijj}
    \end{align*}
    
    We note that it is needed to memorise the following sparsification function:
    \begin{align*}
        f^{\odot, V^i}_{iij} \big( X^{\rho, (1)}_{aab} \big) &=
            \begin{cases}
                \mathbf{0}_d & \text{if } X^{\rho, (1), 2}_{aab} = 0, \\
                X^{\rho, (1), d+1:}_{aab} & \text{otherwise}.
            \end{cases}
    \end{align*}
    \noindent and construct the following dataset:
    \begin{align*}
        D^{V^i}_{iij} &= \Big \{ \big( x, f^{\odot, V^i}_{iij}(x) \big) \thinspace \big | \thinspace x = X^{\rho, (1)}_{(1),aab} \enspace \forall a, b \in [n_1], a \neq b \Big \}\ \cup \\
        & \qquad \qquad \cup \Big \{ \big( x, f^{\odot, V^i}_{iij}(x) \big) \thinspace \big | \thinspace x = X^{\rho, (1)}_{(2),aab} \enspace \forall a, b \in [n_2], a \neq b \Big \} \\
    \end{align*}
    
    Proposition~\ref{prop:memorisation_ours} can be invoked, guaranteeing the existence of MLP $\mlp{\odot^{V^i}_{iij}}$ memorising such dataset. We let the \IGN{3} implement it:
    \begin{align*}
        X^{\rho, (2)}_{iij} &= \glue{\cp{}{d}{}{d}}{ \mlpcp{}{}{d+1}{2d}{\odot^{V^i}_{iij}}} X^{\rho, (1)}_{iij}  
    \end{align*}
    \noindent and complete the subgraph readout via a global pooling operation:
    \begin{align*}
        X^{\rho, (3)}_{iii} &= \cp{}{d}{}{d} X^{\rho, (3)}_{iii} + \cp{d+1}{}{}{d} \broad{}{iii} \pool{}{i} X^{\rho, (2)}_{iij}
    \end{align*}
    
    At this point it is left to perform message passing on these pooled representations for $L$ steps on the original connectivity. We note that it is sufficient to broadcast these onto $X_{ijj}$ and run the same message passing steps in parallel on each subgraph, using the same original graph connectivity:
    \begin{align*}
        X^{\rho, (4)}_{ijj} &= \broad{}{*,i,i} X^{\rho, (3)}_{iii}
    \end{align*}
    
    Such message-passing steps are implemented with the same programme provided in the Proof of Lemma~\ref{lemma:3IGN_implements_SubgraphNetworks} for DS-GNN, with the only difference being that the sparsification functions are defined based on the third channel of $X_{iij}, X_{iji}, X_{ijk}$:
    \begin{align*}
        f^{\odot}_{iij} \big( X^{\rho, (4+l)}_{aab} \big) =
            \begin{cases}
                \mathbf{0}_d & \text{if } X^{\rho, (4+l), 3}_{aab} = 0, \\
                X^{\rho, (4+l), d+1:}_{aab} & \text{otherwise}.
            \end{cases}
        \\
        f^{\odot}_{ijk} \big(  X^{\rho, (4+l)}_{abc} \big) =
            \begin{cases}
                \mathbf{0}_d & \text{if } X^{\rho, (4+l), 3}_{abc} = 0, \\
                X^{\rho, (4+l), d+1:}_{abc} & \text{otherwise}.
            \end{cases}
        \\
        f^{\odot}_{iji} \big(  X^{\rho, (4+l)}_{aba} \big) =
            \begin{cases}
                \mathbf{0}_d & \text{if } X^{\rho, (4+l), 3}_{aba} = 0, \\
                X^{\rho, (4+l), d+1:}_{aba} & \text{otherwise}.
            \end{cases}
    \end{align*}
    Datasets to be memorised are defined accordingly. This construction is repeated $L$ times. Afterwards, blocks $h$ and $m$ in the \IGN{3} model pool the root representations and apply MLP $\mu$ on the obtained embedding. These are implemented as shown above for GNN-AK, GNN-AK-ctx, ID-GNN. The proof concludes.
\end{proof}

\begin{proof}[Proof of Corollary~\ref{cor:upperbound}]~\label{proof:upperbound}
    We prove the corollary by contradiction. Suppose there exist non-isomorphic but \WL{3}-equivalent graphs $G_1, G_2$ distinguished by instance $\mathcal{N}_\Theta$ of $\mathcal{N} \in \Upsilon$. That is, $\mathcal{N}_\Theta \big( G_1 \big) \neq \mathcal{N}_\Theta \big( G_2 \big)$. In view of Theorem~\ref{thm:3IGN_upperbounds_SubgraphNetworks}, there must exists a \IGN{3} instance $\mathcal{M}_\Omega$ such that $\mathcal{M}_\Omega \big( G_1 \big) \neq \mathcal{M}_\Omega \big( G_2 \big)$. We note that the expressive power of \IGNs{k} has been fully characterised by \citet{azizian2020characterizing,geerts2020expressive}. In particular, let us report~\citet[Theorem 2]{geerts2020expressive}:
    \begin{theorem}[Expressive power of \IGNs{k}, Theorem $2$ in \citet{geerts2020expressive}]\label{thm:kIGN_upperbound}
        For any two graphs $G_1$ and $G_2$, if \WL{k} does not distinguish $G_1, G_2$ then any \IGN{k} does not distinguish them either, i.e., it assigns $G_1, G_2$ the same (tensorial) representations.
    \end{theorem}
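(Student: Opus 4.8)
The plan is to prove the stronger structural statement that a \IGN{k} can only produce tensors whose entries are refined by (constant on the colour classes of) the \WL{k} colouring, and then to close with a counting argument on the invariant read-out. Write the \IGN{k} as $m \circ h \circ L^{(T)} \circ \sigma \circ \cdots \circ \sigma \circ L^{(1)}$ acting on $\mathcal{Y} \in \mathbb{R}^{n^k \times d}$, and let $\chi^{(r)}_G \colon [n]^k \to \mathcal{C}$ denote the colour assigned to each $k$-tuple after $r$ rounds of \WL{k} on $G$. I would prove by induction on the layer index $t$ the following invariant: there exist a round number $r(t)$ and a single function $\xi^{(t)}$, shared across all input graphs, such that $\mathcal{Y}^{(t)}_{\mathbf{v}} = \xi^{(t)}\big(\chi^{(r(t))}_G(\mathbf{v})\big)$ for every graph $G$ and every $\mathbf{v}\in[n]^k$. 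The base case holds because the initial tensor encodes exactly the ordered isomorphism type of $\mathbf{v}$ (its equality pattern together with the induced adjacencies), which is precisely the initial colour $\chi^{(0)}_G$.

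The inductive step is where the argument concentrates. By the basis of \citet{maron2018invariant}, a linear $S_n$-equivariant layer $L^{(t)}$ is a finite linear combination of pooling--broadcasting operators, each of which evaluates the output at $\mathbf{v}$ as a sum of input entries $\mathcal{Y}^{(t)}_{\mathbf{u}}$ over all $\mathbf{u}$ satisfying a fixed equality pattern relating the indices of $\mathbf{u}$ and $\mathbf{v}$ (with the remaining indices of $\mathbf{u}$ summed out). Under the inductive hypothesis each such term becomes a pattern-constrained sum $\sum_{\mathbf{u}} \xi^{(t)}\big(\chi^{(r(t))}_G(\mathbf{u})\big)$; the core combinatorial claim is that this sum, seen as a function of the output tuple $\mathbf{v}$, is determined by $\chi^{(r(t)+c)}_G(\mathbf{v})$ for a constant $c$ depending only on the operator. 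This is exactly the equitability (coherence) property of the \WL{k} refinement: tuples sharing a colour see the same multiset of neighbouring colours under single-coordinate substitution, and iterating this substitution a bounded number of times realises every pooling pattern in the basis. Summing over the finitely many basis operators and composing with the pointwise $\sigma$ preserves the invariant, completing the induction. The final invariant layer $h$ is itself a sum over tuples grouped by orbit, hence a function of the multiset of stable colours $\chi^{(r(T))}_G(\mathbf{v})$ over $\mathbf{v}\in[n]^k$; since \WL{k}-equivalent $G_1,G_2$ yield identical colour multisets, $m\circ h$ returns the same value on both, which is the claim.

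The main obstacle is the core combinatorial claim relating the \IGN{k} basis operators to \WL{k} aggregation. The basis operators may contract several free indices simultaneously and may impose diagonal (equality) constraints, whereas the standard \WL{k} update substitutes only one coordinate at a time; reconciling the two requires showing that every constrained multi-index contraction is reproducible by a bounded number of single-coordinate refinement steps, and carefully tracking how many extra rounds $c$ each operator costs so that $r(t)$ remains finite. Handling the equality and inequality constraints that distinguish the on- and off-diagonal orbits (cf.\ Equation~\ref{eq:orbits}), and ensuring that $\xi^{(t)}$ is genuinely graph-independent rather than merely existing per graph, are the delicate bookkeeping points. Once this coherence lemma is in place, the read-out step and the reduction to colour multisets are routine.
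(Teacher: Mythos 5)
First, a framing point: the paper does not prove this statement at all. It is imported verbatim as Theorem~2 of \citet{geerts2020expressive} (with \citet{azizian2020characterizing} cited as an alternative source) and used as a black box in the proof of \Cref{cor:upperbound}, so there is no internal proof to compare against; the relevant comparison is with the cited literature. Measured against that, your strategy is essentially the standard one: a layer-wise induction showing that every entry of the order-$k$ tensor is a graph-independent function of the \WL{k} colour of its index tuple after a bounded number of refinement rounds, using the pooling--broadcasting decomposition of equivariant linear layers due to \citet{maron2018invariant}, and closing via the invariant readout together with the equality of stable-colour multisets for \WL{k}-indistinguishable graphs. \citet{geerts2020expressive} phrases this through an embedding of \IGN{k} computations into a $k$-index tensor language whose expressions cannot separate \WL{k}-equivalent graphs, and \citet{azizian2020characterizing} run a comparable colour-invariance argument, so your outline reconstructs the published route rather than finding a new one.

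That said, as written your text is a proof \emph{outline}, not a proof: the ``core combinatorial claim'' --- that each basis operator's pattern-constrained contraction of colour-determined entries is itself determined by the colour of the output tuple after $c$ further rounds --- is precisely the content of the theorem, and you assume it rather than establish it (you flag this honestly, but it must actually be discharged). Three points need to be made explicit for the induction to close. (i) The claim is true only because the \emph{oblivious} \WL{k} update aggregates over all $n$ nodes: one round encodes, for each coordinate $i$, the multiset of colours of $\mathbf{v}[i \leftarrow w]$ over \emph{all} $w \in V$, and iterating the $k$ single-coordinate substitutions reaches every tuple $\mathbf{u} \in [n]^k$; this is what makes even the fully global pool-then-broadcast operators colour-determined, and it would fail for a neighbour-local aggregation scheme, so the global nature of the update is essential and should be invoked by name. (ii) The diagonal/equality constraints in the basis are handled because the initial colour $\chi^{(0)}$ records the equality type of the tuple; constrained sums over orbits such as those in Equation~\ref{eq:orbits} are then recovered by conditioning on equality types and inclusion--exclusion, with the multiplicities computable since colours determine $n$ after one round. (iii) Graph-independence of $\xi^{(t)}$ needs the convention that colours are canonical histories of the refinement (so that the same colour value means the same thing in $G_1$ and $G_2$) rather than per-graph palettes; with that convention it is immediate, without it the inductive statement is ill-posed. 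With the coherence lemma proved along these lines, the remaining steps (pointwise nonlinearities, the readout as orbit-wise sums refined by colours, equal multisets for indistinguishable graphs) are routine exactly as you say. In short: right approach, matching the cited proof in essence, but the heart of the argument is currently asserted rather than proven.
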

    This theorem equivalently asserts that if there exists a \IGN{k} distinguishing $G_1, G_2$, then these two graphs must be distinguished by the \WL{k} algorithm. Thus, given the existence of \IGN{3} model $\mathcal{M}_\Omega$, the theorem ensures us that \WL{3} distinguishes graphs $G_1, G_2$, against our hypothesis.
\end{proof}

Let us conclude this section by reporting the following
\begin{remark}
    Any Subgraph Network $\mathcal{N} \in \Upsilon$ equipped with policy $\pi_{\mathrm{EGO}(h)}$ (or $\pi_{\mathrm{EGO+}(h)}$) is at most as expressive as \WL{3}, for \emph{any} $h > 0$.
\end{remark}
In other words, given the results proved above, deeper ego-networks may increase the expressive power of a model, but not in a way to exceed that of \WL{3}.

\section{Illustrated comparison of Subgraph GNNs}\label{app:figure}

\begin{figure}[t]
    \centering%\vspace{-1mm}
    \includegraphics[width=\linewidth]{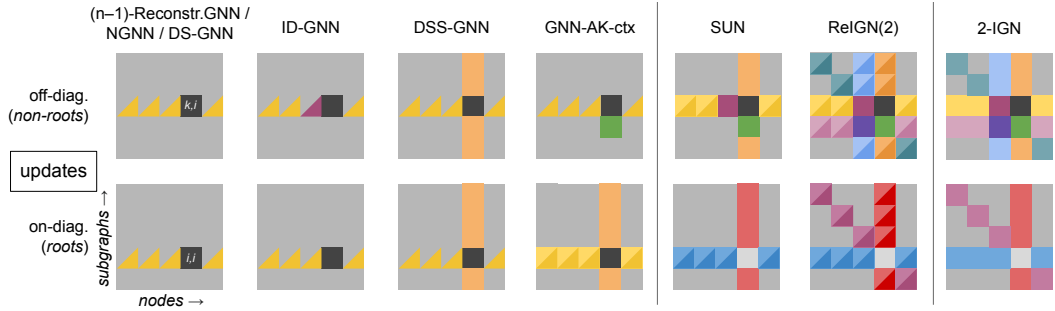}%\vspace{-2mm}
    \caption{A comparison of aggregation and update rules in Subgraph GNNs, illustrated on an $n\times n$ matrix holding $n$ subgraphs with $n$ node features. Top row shows off-diagonal updates, bottom row shows diagonal (root node) updates. Each colour represents a different parameter. Full squares represent global sum pooling; triangles represent local pooling. Two triangles represent both local and global pooling. }\vspace{-4mm}
    \label{fig:ops-app}
\end{figure}

In this section we explain in detail \Cref{fig:ops-app} (corresponding to \Cref{fig:ops} in the main paper) by linking the coloured updates of each Subgraph GNN to its corresponding formulation in \Cref{app:subgraphs}. We consider grids of $n$ subgraphs with $n$ nodes.
The figure shows the aggregation and update rules in Subgraph GNNs for both diagonal $(i,i)$ and off-diagonal $(k,i)$ entries, which correspond respectively to updates of root and non-root nodes. Each color represents a different parameter. We use squares to indicate global pooling and triangles for local pooling. 

\textbf{Reconstruction GNN / NGNN / DS-GNN.} These methods do not distinguish between root and non-root nodes, effectively sharing the parameters between the two (same yellow colour). The representation of a node in a subgraph is obtained via message passing and aggregation within the subgraph, thus, locally.

\textbf{ID-GNN.} ID-GNN performs message passing on each subgraph but distinguishes messages coming from the root (purple instead of yellow), resulting in an additional parameter for non-root updates.

\textbf{DSS-GNN.} DSS allows information sharing between subgraphs. Indeed, it does not only perform message passing within each subgraph (yellow), but also on the aggregated adjacency matrix. The message passing on the aggregated adjacency matrix uses the original connectivity and it is therefore still local (triangle, yellow). However, it uses node representations obtained by aggregating node representations globally across subgraphs (orange).

\textbf{GNN-AK-ctx.} GNN-AK-ctx distinguishes between root and non-root updates. Non-root node are updated by first copying the diagonal representation of the corresponding node (green) and then performing message passing locally (yellow). Root nodes are updated first by performing a local message passing (yellow), and then by also considering the subgraph readout (yellow squares) and the aggregated representation of the node across subgraphs (orange).

\textbf{SUN.} \camera{\SUN{}} distinguishes root and non-root updates. Each non-root node is updated with: (1) the representation of the node at the previous iteration (black), (2) the representation of the root of the subgraph in which the current node is located (purple), (3) the representation of the node in the subgraph where it is root (green), (4) the readout on the subgraph (yellow squares), (5) message passing on the subgraph (yellow triangles), (6) message passing on the aggregated connectivity (yellow triangles and orange squares). For root nodes many of these terms collapse and the node is updated by only considering (1) (bright green), (4) (blue squares), (5) (blue triangles), (6) (blue triangles and red squares).

\textbf{2-IGN and \camera{ReIGN(2)}.}
The node representations are updated according to \Cref{eq:2IGN}. In the \camera{\IGN{2}} case, the operations are all global (squares), \camera{whilst} for \camera{\ReIGN{2}} each aggregation can also be performed locally (triangles), as prescribed by the connectivity of the subgraph or by the original connectivity. 

\section{Proofs for Section \ref{sec:space} -- Subgraph GNNs and ReIGN(2)}\label{app:space}
\subsection{ReIGN(2) expansion of aggregation terms}

We report in Table~\ref{tab:expansion} the expansion rules which allow to derive the \ReIGN{2} equations from the \IGN{2} ones (Equation~\ref{eq:2IGN}), which we copy here below for convenience.
\begin{align*}
    x^{i,(t+1)}_{i} \!\!\!&=\! \upsilon_{\theta_1} \big ( x^{i,(t)}_{i}\!, \agg_{j} x^{j,(t)}_{j}\!, \agg_{j \neq i} x^{i,(t)}_{j}\!, \agg_{h \neq i} x^{h,(t)}_{i}\!, \agg_{h \neq j}
    x^{h,(t)}_{j} \big ) \label{eq:2IGN} \\
    x^{k,(t+1)}_{i} \!\!\!&=\! \upsilon_{\theta_2} \big ( x^{k,(t)}_{i}\!, x^{i,(t)}_{k}\!, \agg_{h \neq j} x^{h,(t)}_{j}\!, \agg_{h \neq i} x^{h,(t)}_{i}\!, \agg_{j \neq k} x^{k,(t)}_{j}\!, \agg_{j \neq i} x^{i,(t)}_{j}\!, \agg_{h \neq k} x^{h,(t)}_{k}\!, x^{k,(t)}_{k}\!, x^{i,(t)}_{i}\!, \agg_j x^{j,(t)}_{j} \big ) \nonumber
\end{align*}

In Table~\ref{tab:expansion} we assign each term an identifier (id.) which will allow us to easily refer to specific terms in the proofs we report below. We additionally provide an interpretation for each of the three expanded terms in the last column. Global and local `vertical' aggregations are dubbed `needle's in analogy with what the authors in~\citet[Definition~5]{bevilacqua2022equivariant} define as `needle' colours in their DSS- Weisfeiler-Leman variant.

\begin{table}[ht]
    \scriptsize
    \centering
    \caption{\ReIGN{2} expansion rules. For each of the \texttt{2-IGN} global aggregation terms in Equation~\ref{eq:2IGN}, \ReIGN{2} additionally considers two more \emph{local} aggregation terms, which sparsify the aggregation to only include factors adjacent according to the subgraph or original graph connectivities.}
    \label{tab:expansion}
    \begin{tabular}{ crccp{0.34\textwidth} }
    \toprule
        target & id. & \IGN{2} term & \ReIGN{2} expansion & interpretation\\
        \midrule
        \midrule
        % on-diagonal
        \multirow{4}{*}{$x^{i}_{i}$} &
        \#1.on &
        $\agg_{j}  x^{j}_{j}$ &
            $\big [ \sum_{j / j \sim_i i / j \sim i} x^{j}_{j} \big ]$ &
            root-readout / root-msg for $i$ (on $i$) \\
        % \midrule
        \cline{2-5} &
        \#2.on &
        $\agg_{j \neq i} x^i_j$ &
            $\big [ \sum_{j \neq i / j \sim_i i / j \sim i} x^i_j \big ]$ &
            $i$-readout / $i$-msg for $i$ (on $i$) \\
        % \midrule
        \cline{2-5} &
        \#3.on &
        $\agg_{h \neq i} x^h_i$ &
            $\big [ \sum_{h \neq i / h \sim_i i / h \sim i} x^h_i \big ]$ &
            $i$-`needle' / local $i$-`needle' for $i$ (on $i$) \\
        % \midrule
        \cline{2-5} &
        \#4.on &
        $\agg_{h \neq j} x^h_j$&
            $\big [ \sum_{h / h \sim_i i / h \sim i} \sum_{j / j \sim_h h / j \sim h} x^h_j \big ]$ &
            non-root-readout / joint local needle for $i$ (on $i$) and msg for $h$ (on $h$) \\
        \midrule
        \midrule
        % off-diagonal
        \multirow{6}{*}{$x^k_i$} &
        \#1.off &
        $\agg_{h \neq j} x^h_j$ &
            $\big [ \sum_{h / h \sim_k i / h \sim i} \sum_{j / j \sim_h i / j \sim i} x^h_j \big ]$  &
            non-root-readout / joint local needle and msg for $i$ (on $k$ and $h$) \\
        % \midrule
        \cline{2-5} &
        \#2.off &
        $\agg_{h \neq i} x^h_i$ &
            $\big [ \sum_{h \neq i / h \sim_k i / h \sim i} x^h_i \big ]$ & 
            $i$-`needle' / local $i$-`needle' for $i$ (on $k$) \\
        % \midrule
        \cline{2-5} &
        \#3.off &
        $\agg_{j \neq k} x^k_j$ &
            $\big [ \sum_{j \neq k / j \sim_k i / j \sim i} x^k_j \big ]$ &
            $k$-readout / $k$-msg for $i$ (on $k$) \\
        % \midrule
        \cline{2-5} &
        \#4.off &
        $\agg_{h \neq i} x^i_h$ &
            $\big [ \sum_{h \neq i / h \sim_k i / h \sim i} x^i_h \big ]$ &
            $i$-readout / $i$-msg for $i$ (on $k$) \\
        % \midrule
        \cline{2-5} &
        \#5.off &
        $\agg_{h \neq k} x^h_k$ &
            $\big [ \sum_{h \neq k / h \sim_k i / h \sim i} x^h_k \big ]$ &
            $k$-`needle' / local $k$-`needle' for $i$ (on $k$) \\
        % \midrule
        \cline{2-5} &
        \#6.off &
        $\agg_j x^j_j$ &
            $\big [ \sum_{j / j \sim_k i / j \sim i} x^j_j \big ]$ &
            root-readout / root-msg for $i$ (on $k$) \\
        \bottomrule
    \end{tabular}
\end{table}

Interestingly, expansions in~\Cref{tab:expansion} could be extended to also include global summations extending only over subgraph nodes as already proposed in~\citet{zhang2021nested,zhao2022from}. For example, term [\#3.off] could also include summation $\sum_{j \in V^{k}} x^{k}_{j}$, where $V^{k}$ is the vertex set of subgraph $k$.
As a last note, we remark how all subgraph-local aggregations updating target $x^{k}_{i}$ (second expansions) consider the connectivity of subgraph $k$. We believe it could be possible to extend \ReIGN{2} to also make use of a different subgraph connectivity, e.g. to define a variant of term [\#2.off] which includes expansion $\sum_{h \sim_i i} x^h_i $. We defer these enquiries to future works.

\subsection{Proofs for Section~\ref{sec:reign}}

\begin{proof}[Proof of Theorem~\ref{thm:ReIGN_implements_SubgraphNetworks}]\label{proof:ReIGN_implements_SubgraphNetworks}

    Any Subgraph GNN in class $\Upsilon$ has the following form:
    \begin{align*}
        \mathcal{N} = \big( \mu \circ \rho \circ \mathcal{S} \circ \pi \big)
    \end{align*}
    \noindent with $\mu$ any MLP, $\rho$ a permutation invariant pooling function, $\pi \in \Pi$ and $\mathcal{S}$ a stacking $\mathcal{S} = L_{T} \circ \mathellipsis L_{1}$. Any \ReIGN{2} model has exactly the same form, with the only (important) difference that layers $L_{t}$'s in the stacking are \ReIGN{2} layers. Therefore, the theorem is proved by showing that, for any $\mathcal{N} \in \Upsilon$, the $\mathcal{N}$-layer equations can be implemented by an appropriate \ReIGN{2} layer stacking. This effort consists in describing (a series of) linear functions $\upsilon_1, \upsilon_2$ as in Equation~\ref{eq:2IGN} implementing the layer equation for model $\mathcal{N} \in \Upsilon$. In practice, this will involve specifying which linear transformation $W$ is applied to each of the terms in Equation~\ref{eq:2IGN} after expanding each summation according to the rules in Table~\ref{tab:expansion}. For convenience, we will directly omit terms assigned a `nullifying' linear transformation $\mathbf{0}$.
    
    [\emph{\textbf{DS-GNN}}] has its layer equations in the form of Equation~\ref{eq:DSGNN_Morris}. These are recovered with one \ReIGN{2} layer by linearly transforming the second expansion for aggregated terms [\#2.on] and [\#3.off], and by sharing parameters between off- and on-diagonal updates:
    \begin{align*}
        x^{i, (t+1)}_i &= \sigma \Big ( W_{1,t} x^{i, (t)}_i + W_{2,t} \sum_{j \sim_i i} x^{i, (t)}_j \Big ) \\
        x^{k, (t+1)}_i &= \sigma \Big ( W_{1,t} x^{k, (t)}_i + W_{2,t} \sum_{j \sim_k i} x^{k, (t)}_j \Big )
    \end{align*}

    [\emph{\textbf{DSS-GNN}}] has its update rule in the form of Equation~\ref{eq:DSSGNN_Morris}. It is needed to stack two linear \ReIGN{2} layers to recover these. The first layer computes message passing on each subgraph as in DS-GNN and cross-bag aggregation of features. We expand the hidden dimension to $2d$ so that the first half stores the result from the former, the second that of the latter. We `utilise' terms [\#2.on] and [\#3.off] in their second expansion for the message-passing operation and terms [\#3.on] and [\#2.off] in their global version for cross-bag aggregation:
    \begin{align*}
        x^{i, (t, 1)}_i &= \glue{W^1_{1,t}}{I_d} x^{i, (t)}_i + \cp{}{}{}{d} W^1_{2,t} \sum_{j \sim_i i} x^{i, (t)}_j + \cp{}{}{d+1}{2d} \sum_{h \neq i} x^{h, (t)}_i \\
        x^{k, (t, 1)}_i &= \cp{}{}{}{d} W^1_{1,t} x^{k, (t)}_i + \cp{}{}{}{d} W^1_{2,t} \sum_{j \sim_k i} x^{k, (t)}_j + \cp{}{}{d+1}{2d} \sum_{h \neq i} x^{h, (t)}_i + \cp{}{}{d+1}{2d} x^{i, (t)}_i
    \end{align*}
    
    The second layer reduces the dimensionality back to $d$ and completes the implementation by performing message-passing of the cross-bag-aggregated representations over the original graph. This is realised by employing terms [\#2.on] and [\#3.off] in their third expansion.
    \begin{align*}
        x^{i, (t+1)}_i &= \sigma \Big ( \sumglue{I_d}{W^2_{1,t}} x^{i, (t,1)}_i + W^2_{2,t} \cp{d+1}{2d}{}{d} \sum_{j \sim i} x^{i, (t,1)}_j \Big ) \\
        x^{k, (t+1)}_i &= \sigma \Big ( \sumglue{I_d}{W^2_{1,t}} x^{k, (t,1)}_i + W^2_{2,t} \cp{d+1}{2d}{}{d} \sum_{j \sim i} x^{k, (t,1)}_j \Big )
    \end{align*}
    
    [\emph{\textbf{GNN-AK-ctx} \& \textbf{GNN-AK}}] Equations~\ref{eq:GNNAK_Morris_S} and~\ref{eq:GNNAK_A} describe the update equations for these models. First, block [S] performs independent message-passing on each subgraph for $L$ steps. The $l$-th update, $l=1,\mathellipsis,(L-1)$ is implemented as for DS-GNN, that is by employing terms [\#2.on] and [\#3.off] as follows:
    \begin{align*}
        x^{i, (t,l+1)}_i &= \sigma \Big ( W_{1,t,l} x^{i, (t,l)}_i + W_{2,t,l} \sum_{j \sim_i i} x^{i, (t,l)}_j \Big ) \\
        x^{k, (t,l+1)}_i &= \sigma \Big ( W_{1,t,l} x^{k, (t,l)}_i + W_{2,t,l} \sum_{j \sim_k i} x^{k, (t,l)}_j \Big )
    \end{align*}
    Block [A] in GNN-AK-ctx is implemented by one \ReIGN{2} layer via aggregated terms [\#2.on], [\#3.on], [\#2.off], [\#4.off] all in their global version:
    \begin{align*}
        x^{i, (t+1)}_i &= 3 \cdot I_d\ x^{i, (t,L)}_i + \sum_{h \neq i} x^{h, (t,L)}_i + \sum_{j \neq i} x^{i, (t,L)}_j \\
        x^{k, (t+1)}_i &= 3 \cdot I_d\ x^{i, (t,L)}_i + \sum_{h \neq i} x^{h, (t,L)}_i + \sum_{j \neq i} x^{i, (t,L)}_j
    \end{align*}
    In the case of GNN-AK, Block [A] is implemented more simply as 
    \begin{align*}
        x^{i, (t+1)}_i &= 2 \cdot I_d\ x^{i, (t,L)}_i + \sum_{j \neq i} x^{i, (t,L)}_j \\
        x^{k, (t+1)}_i &= 2 \cdot I_d\ x^{i, (t,L)}_i + \sum_{j \neq i} x^{i, (t,L)}_j
    \end{align*}
    \noindent where terms [\#3.on], [\#2.off] are nullified.
    
    [\emph{\textbf{ID-GNN}}] Implements the update rule in Equation~\ref{eq:IDGNN_Morris}, which we report here for convenience:
    \begin{equation*}
        x^{k, (t+1)}_i = \sigma \big( W_{1,t} x^{k,(t)}_i + W_{2,t} \sum_{j \sim_k i, j \neq k} x^{k,(t)}_j + \mathds{1}_{[k \sim_k i]} \cdot W_{3,t} x^{k,(t)}_k \big)
    \end{equation*}
    We observe that we can rewrite this equation as follows:
    \begin{equation}\label{eq:IDGNN_v2}
        x^{k, (t+1)}_i = \sigma \big( W_{1,t} x^{k,(t)}_i + \sum_{j \sim_k i} W_{(\mathds{1}_{[j=k]}+2),t} x^{k,(t)}_j \big)
    \end{equation}
    \noindent where $(\mathds{1}_{[j=k]}+2) = 3$ if $j=k$, $2$ otherwise. That is: (i) we do not explicitly exclude the root node from the set of $i$'s neighbours; (ii) messages are computed via $W_{3,t}$ if from root nodes, via $W_{2,t}$ otherwise. This update equation can be implemented by two \ReIGN{2} layers. Similarly as in the DSS-GNN derivation, the first layer expands the hidden dimension to $2d$. The first $d$ channels will store node representations transformed according to $W_{1,t}$, whilst the remaining $d+1$ to $2d$ channels will store node representations transformed according to $W_{3,t}, W_{2,t}$ for, respectively, on- and off-diagonal terms, namely root and non-root nodes:
    \begin{align*}
        x^{i, (t, 1)}_i &= \glue{W_{1,t}}{W_{3,t}} x^{i, (t)}_i \\
        x^{k, (t, 1)}_i &= \glue{W_{1,t}}{W_{2,t}} x^{k, (t)}_i
    \end{align*}
    The second \ReIGN{2} layer effectively completes implementing message-passing and brings back the dimension to $d$. It combines and aggregates the resulting transformations via terms [\#2.on] and [\#3.off] in their second expansion as follows:
    \begin{align*}
        x^{i, (t+1)}_i &= \sigma \Big( \cp{}{d}{}{d} x^{i, (t,1)}_i + \cp{d+1}{2d}{}{d} \sum_{j \sim_i i} x^{i, (t,1)}_j \Big) \\
        x^{k, (t+1)}_i &= \sigma \Big( \cp{}{d}{}{d} x^{k, (t,1)}_i + \cp{d+1}{2d}{}{d} \sum_{j \sim_k i} x^{k, (t,1)}_j \Big)
    \end{align*}
    
     [\emph{\textbf{NGNN}}] Computes independent subgraph-wise message passing as in DS-GNN: the same \ReIGN{2} layer implements this one.
\end{proof}

\begin{proof}[Proof of Proposition~\ref{prop:3IGN_implements_ReIGN}]
    \ReIGN{2} model $\mathcal{R}_{\rho,\Theta,\pi}$ distinguishes $G_1, G_2$ if they are assigned distinct representations, that is: $y_{G_1} = \mathcal{R}_{\rho,\Theta,\pi} \big( A_1, X_1 \big) \neq \mathcal{R}_{\rho,\Theta,\pi} \big( A_2, X_2 \big) = y_{G_2}$. A \IGN{3} instance $\mathcal{M}_{\Omega}$ implementing $\mathcal{R}_{\rho,\Theta,\pi}$ on the same pair of graphs would distinguish the same graphs as well. We will show the existence of such a model instance. 
    
    We seek to find a \IGN{3} model $\mathcal{M}_{\Omega}$ in the form of Equation~\ref{eq:3IGN_model} such that:  $\mathcal{M}_{\Omega} \big( A_1, X_1 \big) = \mathcal{R}_{\rho,\Theta,\pi} \big( A_1, X_1 \big) = y_{G_1}$ and $\mathcal{M}_{\Omega} \big( A_2, X_2 \big) = \mathcal{R}_{\rho,\Theta,\pi} \big( A_2, X_2 \big) = y_{G_2}$, where $\mathcal{R}_{\rho,\Theta,\pi} \big( \cdot \big) = ( \mu \circ \rho \circ \mathcal{S} \circ \pi \big )_{\Theta} ( \cdot )$. From the hypotheses of the theorem, block $\rho$ is \IGN{3}-computable, so there must exist a \IGN{3} layer stacking $\mathcal{M}_{\rho}$ such that $\mathcal{M}_{\rho} \cong \rho$. At the same time, $\pi \in \Pi$ so Lemma~\ref{lemma:3IGN_implements_policies} ensures the existence of a \IGN{3} stacking $\mathcal{M}_{\pi}$ such that $\mathcal{M}_{\pi} \cong \pi$. It is left to show the existence of a \IGN{3} stacking $\mathcal{M_S}$ implementing the \ReIGN{2} stacking $\mathcal{S}$ on the same pair of graphs. Without loss of generality, it is sufficient to show the existence of a \IGN{3} stacking $\mathcal{M}_L$ implementing one single \ReIGN{2} intermediate layer $L$ defined as per Equations~\ref{eq:2IGN} and the aggregated term expansions in Table~\ref{tab:expansion}. We will then show how to construct $\mathcal{M}_L$ for a generic step $t$ in the following. In order to more explicitly reflect the index notation used in Table~\ref{tab:expansion} ($i$ refers to nodes, $k$ to subgraphs) we will refer to the \IGN{3} orbit representations with different subscripts: $\mathcal{Y} \cong X_{iii} \sqcup X_{iij} \sqcup X_{kik} \sqcup X_{kii} \sqcup X_{kij}$, that is we rename $X_{iji}$ as $X_{kik}$, $X_{ijj}$ as $X_{kii}$, $X_{ijk}$ as $X_{kij}$.
    
    We note that the summation of all non-aggregated and globally aggregated terms is recovered by just one \IGN{3} layer, including their linear transformations. In the yet-to-construct stacking, the first layer performs this computation and stores the result into $d$-auxiliary channels. The layer also replicates the current representations $x^{k,(t),i}, x^{i,(t),i}$ in the first $d$ channels. The implementation of local aggregations, i.e. second and third expansions in Table~\ref{tab:expansion}, will require a larger number of layers: this aforementioned input replication allows them to operate on the original representations even after having implemented non-aggregated globally aggregated terms, as required. The result of their computation will then be used to update the intermediate term in channels $d+1$ to $2d$, as we shall see next. Let us start by describing the first \IGN{3} layer, constructed as follows:
    \begin{align*}
        X^{(t,1)}_{iii} &= \glue{I_d}{\theta^{(i,i)}_{1,t}} X^{(t)}_{iii} + \sum_{o_1 \in O_1} \cp{}{}{d+1}{2d} o_1 \\
        X^{(t,1)}_{kii} &= \glue{I_d}{\theta^{(k,i)}_{2,t}} X^{(t)}_{kii} + \sum_{o_2 \in O_2} \cp{}{}{d+1}{2d} o_2
    \end{align*}
    \noindent with set $O_1$ being:
    \begin{align*}
        O_1\!\! &=\!\! \Big \{ 
            \theta^{(j,j)}_{1,t} \broad{}{*i,*i,*i} \pool{}{} X^{(t)}_{iii},
            \theta^{(i,j)}_{1,t} \broad{}{i,i,i} \pool{}{i} X^{(t)}_{ijj},
            \theta^{(h,i)}_{1,t} \broad{}{j,j,j} \pool{}{j} X^{(t)}_{ijj},
            \theta^{(h,j)}_{1,t} \broad{}{*i,*i,*i} \pool{}{} X^{(t)}_{ijj}
        \Big \}
    \end{align*}
    \noindent and set $O_2$ being:
    \begin{align*}
        O_2\!\! &=\!\! \Big \{ 
            \theta^{(i,k)}_{2,t} \broad{}{i,k,k} X^{(t)}_{kii},
            \theta^{(h,j)}_{2,t} \broad{}{*k,*i,*i} \pool{}{} X^{(t)}_{kii},
            \theta^{(h,i)}_{2,t} \broad{}{*,i,i} \pool{}{i} X^{(t)}_{kii},
            \theta^{(k,j)}_{2,t} \broad{}{k,*i,*i} \pool{}{k} X^{(t)}_{kii} \\
    &\qquad \theta^{(i,j)}_{2,t} \broad{}{*,k,k} \pool{}{k} X^{(t)}_{kii},
            \theta^{(h,k)}_{2,t} \broad{}{i,*k,*k} \pool{}{i} X^{(t)}_{kii},
            \theta^{(k,k)}_{2,t} \broad{}{i,*k,*k} X^{(t)}_{iii},
            \theta^{(i,i)}_{2,t} \broad{}{*,i,i} X^{(t)}_{iii}, \\
    &\qquad\quad
            \theta^{(j,j)}_{2,t} \broad{}{*k,*i,*i} \pool{}{} X^{(t)}_{iii}
        \Big \}
    \end{align*}
    
    We now focus on the following layers. The $l$-th local aggregation in Table~\ref{tab:expansion} (second and third expansions) can be obtained by a \IGN{3} layer stacking implementing the steps of: (1) \emph{Message broadcasting}, (2) \emph{Message sparsification}, (3) \emph{Message aggregation}, (4) \emph{Update}, similarly to what already shown for the Proof of Lemma~\ref{lemma:3IGN_implements_SubgraphNetworks}. More in detail, the underlying construction will be such such that: first, messages are placed on the third axis of the cubed tensor on which the \IGN{3} operates (broadcasting, 1); they are then sparsified consistently with the (sub)graph connectivity (sparsification, 2); aggregated via pooling operations on the same axis (aggregation, 3); finally, linearly transformed with their specific linear operator and used to update the intermediate representation(s) in channels $d+1$ to $2d$ (update, 4). We note that for each local aggregation term, these steps essentially differ in the way messages are propagated (1) and in the specific linear transformations applied (4), while the same computation is shared for the sparsification (2) and aggregation (3) steps. Thus, we deem it convenient to first describe these steps and then show how specific choices for (1), (4) recover each desired term.
     
    Here, we assume that $\mathcal{M}_\pi$ writes in $X_{kij}, i \neq j$, the connectivity between nodes $i,j$ in subgraph $k$ (first channel) as well as that prescribed by the original graph connectivity (second channel) --- once more, see discussion [\emph{\textbf{Retaining original connectivity}}] in the Proof of~\Cref{lemma:3IGN_implements_policies}. In order to implement this step on the input graph pair, it is sufficient to have \IGN{3} layers applying an MLP which sparsifies messages according to the aforementioned connectivities and then to aggregate the sparsified messages by global summation, effectively realising \emph{both} the second and third expansion for terms in Table~\ref{tab:expansion}. Step (2) is realised as:
    \begin{align}
        X^{(t,l,\mathrm{sp.})}_{iij} &= \big[ \cp{}{d}{}{d}\ \mlpcp{}{}{d+1}{2d}{\odot^{\sim_i}_{iij}}\  \mlpcp{}{}{2d+1}{}{\odot^{\sim}_{iij}} \big] X^{(t,l,\mathrm{broad.})}_{iij}  \label{eq:sparsification_iij}\\
        X^{(t,l,\mathrm{sp.})}_{kik} &= \big[ \cp{}{d}{}{d}\ \mlpcp{}{}{d+1}{2d}{\odot^{\sim_k}_{kik}}\  \mlpcp{}{}{2d+1}{}{\odot^{\sim}_{kik}} \big] X^{(t,l,\mathrm{broad.})}_{kik}  \label{eq:sparsification_kik}\\
        X^{(t,l,\mathrm{sp.})}_{kij} &= \big[ \cp{}{d}{}{d}\ \mlpcp{}{}{d+1}{2d}{\odot^{\sim_k}_{kij}}\  \mlpcp{}{}{2d+1}{}{\odot^{\sim}_{kij}} \big] X^{(t,l,\mathrm{broad.})}_{kij} \label{eq:sparsification_kij}
    \end{align}
    \noindent where $X^{(t,l,\mathrm{broad.})}_{iij}, X^{(t,l,\mathrm{broad.})}_{kik}, X^{(t,l,\mathrm{broad.})}_{kij}$ are computed by the yet-to-describe step (1) and MLPs $\mlp{\odot^{\sim_i}_{iij}}, \mlp{\odot^{\sim}_{iij}}, \mlp{\odot^{\sim_k}_{kik}}, \mlp{\odot^{\sim}_{kik}}, \mlp{\odot^{\sim_k}_{kij}}, \mlp{\odot^{\sim}_{kij}}$ compute the required sparsifications. We do not describe how to construct such MLPs, but their existence is guaranteed by Proposition~\ref{prop:memorisation_ours}, which we can invoke by constructing the same datasets as shown in the Proof of Lemma~\ref{lemma:3IGN_implements_SubgraphNetworks} for the DSS-GNN derivation. Step (3) aggregates these sparsified messages; concurrently the same layer linearly transforms the result and adds it to the current, intermediate node representations, performing step (4):
    \begin{align*}
        X^{(t,l+1)}_{iii} &= \cp{}{2d}{}{2d} X^{(t,l,\mathrm{sp.})}_{iii} + \cp{}{}{d+1}{2d} [ \mathbf{0}\ ||\ \theta^{\sim_i}_{1,l,t}\ ||\ \theta^{\sim}_{1,l,t} ]\  \broad{}{iii} \pool{}{i} X^{(t,l,\mathrm{sp.})}_{iij} \\
        X^{(t,l+1)}_{kii} &= \cp{}{2d}{}{2d} X^{(t,l,\mathrm{sp.})}_{kii} + \cp{}{}{d+1}{2d} [ \mathbf{0}\ ||\ \theta^{\sim_k}_{2,l,t}\ ||\ \theta^{\sim}_{2,l,t} ]\ \broad{}{kii} X^{(t,l,\mathrm{sp.})}_{kik} + \\
        &\qquad +\cp{}{}{d+1}{2d} [ \mathbf{0}\ ||\ \theta^{\sim_k}_{2,l,t}\ ||\ \theta^{\sim}_{2,l,t} ]\ \broad{}{kii} \pool{}{ki} X^{(t,l,\mathrm{sp.})}_{kij}
    \end{align*}
    \noindent Here, parameters $\theta^{\sim_i}_{1,l,t}, \theta^{\sim}_{1,l,t}, \theta^{\sim_k}_{2,l,t}, \theta^{\sim}_{2,l,t}$ will depend on the specific term being implemented.
    
    As for step (1), one \IGN{3} layer suffices to properly broadcast the current input representations $X^{(t, l)}_{iii}, X^{(t, l)}_{kii}$ over, respectively, $X^{(t, l,\mathrm{broad.})}_{iji}$ and $X^{(t, l,\mathrm{broad.})}_{kik}, X^{(t, l,\mathrm{broad.})}_{kij}$. We will now show the required broadcasting operations in a way that, then, the following layers described above will effectively implement the second and third expansions of the terms in Table~\ref{tab:expansion}:
    \begin{align*}
        \mathrm{[\#1.on]} &\quad X^{(t, l,\mathrm{broad.})}_{iij} = \cp{}{d}{}{d} X^{(t,l)}_{iij} + \cp{}{2d}{d+1}{3d}\glue{\cp{}{d}{}{d}}{\cp{}{d}{}{d}} \broad{}{*,*,i} X^{(t,l)}_{iii}  \\
        \mathrm{[\#2.on]} &\quad X^{(t, l,\mathrm{broad.})}_{iij} = \cp{}{d}{}{d} X^{(t,l)}_{iij} + \cp{}{2d}{d+1}{3d}\glue{\cp{}{d}{}{d}}{\cp{}{d}{}{d}} \broad{}{k,k,i} X^{(t,l)}_{kii}  \\
        \mathrm{[\#3.on]} &\quad X^{(t, l,\mathrm{broad.})}_{iij} = \cp{}{d}{}{d} X^{(t,l)}_{iij} + \cp{}{2d}{d+1}{3d}\glue{\cp{}{d}{}{d}}{\cp{}{d}{}{d}} \broad{}{i,i,k} X^{(t,l)}_{kii}  \\
        \mathrm{[\#2.off]} &\quad X^{(t, l,\mathrm{broad.})}_{kik} = \cp{}{d}{}{d} X^{(t,l)}_{kik} + \cp{}{2d}{d+1}{3d}\glue{\cp{}{d}{}{d}}{\cp{}{d}{}{d}} \broad{}{*,i,*} X^{(t,l)}_{iii} \\
                           &\quad X^{(t, l,\mathrm{broad.})}_{kij} = \cp{}{d}{}{d} X^{(t,l)}_{kij} + \cp{}{2d}{d+1}{3d}\glue{\cp{}{d}{}{d}}{\cp{}{d}{}{d}} \broad{}{*,i,k} X^{(t,l)}_{kii} \\
        \mathrm{[\#3.off]} &\quad X^{(t, l,\mathrm{broad.})}_{kik} = \cp{}{d}{}{d} X^{(t,l)}_{kik} + \cp{}{2d}{d+1}{3d}\glue{\cp{}{d}{}{d}}{\cp{}{d}{}{d}} \broad{}{i,*,i} X^{(t,l)}_{iii} \\
                           &\quad X^{(t, l,\mathrm{broad.})}_{kij} = \cp{}{d}{}{d} X^{(t,l)}_{kij} + \cp{}{2d}{d+1}{3d}\glue{\cp{}{d}{}{d}}{\cp{}{d}{}{d}} \broad{}{k,*,i} X^{(t,l)}_{kii} \\
        \mathrm{[\#4.off]} &\quad X^{(t, l,\mathrm{broad.})}_{kik} = \cp{}{d}{}{d} X^{(t,l)}_{kik} + \cp{}{2d}{d+1}{3d}\glue{\cp{}{d}{}{d}}{\cp{}{d}{}{d}} \broad{}{*,i,*} X^{(t,l)}_{iii} \\
                           &\quad X^{(t, l,\mathrm{broad.})}_{kij} = \cp{}{d}{}{d} X^{(t,l)}_{kij} + \cp{}{2d}{d+1}{3d}\glue{\cp{}{d}{}{d}}{\cp{}{d}{}{d}} \broad{}{*,k,i} X^{(t,l)}_{kii} \\
        \mathrm{[\#5.off]} &\quad X^{(t, l,\mathrm{broad.})}_{kik} = \cp{}{d}{}{d} X^{(t,l)}_{kik} + \cp{}{2d}{d+1}{3d}\glue{\cp{}{d}{}{d}}{\cp{}{d}{}{d}} \broad{}{i,*,i} X^{(t,l)}_{iii} \\
                           &\quad X^{(t, l,\mathrm{broad.})}_{kij} = \cp{}{d}{}{d} X^{(t,l)}_{kij} + \cp{}{2d}{d+1}{3d}\glue{\cp{}{d}{}{d}}{\cp{}{d}{}{d}} \broad{}{i,*,k} X^{(t,l)}_{kii} \\
        \mathrm{[\#6.off]} &\quad X^{(t, l,\mathrm{broad.})}_{kik} = \cp{}{d}{}{d} X^{(t,l)}_{kik} + \cp{}{2d}{d+1}{3d}\glue{\cp{}{d}{}{d}}{\cp{}{d}{}{d}} \broad{}{i,*,i} X^{(t,l)}_{iii} \\
                           &\quad X^{(t, l,\mathrm{broad.})}_{kij} = \cp{}{d}{}{d} X^{(t,l)}_{kij} + \cp{}{2d}{d+1}{3d}\glue{\cp{}{d}{}{d}}{\cp{}{d}{}{d}} \broad{}{*,*,i} X^{(t,l)}_{iii} \\
    \end{align*}
    
    Terms [\#4.on], [\#1.off] involve two sparse summations and thus require a slightly different construction. In particular, they are concurrently implemented by computing steps (1,2,3) twice, in sequence, followed by computation of step (4). The first brodcasting step (1) is realised as follows:
    \begin{align*}
        X^{(t, l, 1^{st} \mathrm{broad.})}_{iij} &= \cp{}{d}{}{d} X^{(t,l)}_{iij} + \cp{}{2d}{d+1}{3d}\glue{\cp{}{d}{}{d}}{\cp{}{d}{}{d}} \broad{}{k,k,i} X^{(t,l)}_{kii} \\
        X^{(t, l, 1^{st} \mathrm{broad.})}_{kik} &= \cp{}{d}{}{d} X^{(t,l)}_{kik} + \cp{}{2d}{d+1}{3d}\glue{\cp{}{d}{}{d}}{\cp{}{d}{}{d}} \broad{}{i,*,i} X^{(t,l)}_{iii} \\
        X^{(t, l, 1^{st} \mathrm{broad.})}_{kij} &= \cp{}{d}{}{d} X^{(t,l)}_{kij} + \cp{}{2d}{d+1}{3d}\glue{\cp{}{d}{}{d}}{\cp{}{d}{}{d}} \broad{}{k,*,i} X^{(t,l)}_{kii} \\
    \end{align*}
    \noindent that is, in the same way as in the implementation of terms [\#2.on], [\#3.off]. Then, the first sparsification step (2) is computed as per~\Cref{eq:sparsification_iij,eq:sparsification_kik,eq:sparsification_kij}, generating $X^{(t, l, 1^{st} \mathrm{sp.})}_{iij}, X^{(t, l, 1^{st} \mathrm{sp.})}_{kik}, X^{(t, l, 1^{st}\mathrm{sp.})}_{kij}$. The first aggregation step (3) is performed jointly with the second broadcasting step (1) as:
    \begin{align*}
        X^{(t, l, 2^{nd} \mathrm{broad.})}_{iij} &= \cp{}{d}{}{d} X^{(t,l,1^{st} \mathrm{sp.})}_{iij} + \cp{d+1}{3d}{d+1}{3d} \broad{}{*,*,i} \pool{}{i} X^{(t,l,1^{st} \mathrm{sp.})}_{iij} \\
        X^{(t, l, 2^{nd} \mathrm{broad.})}_{kik} &= \cp{}{d}{}{d} X^{(t,l,1^{st} \mathrm{sp.})}_{kik} + \cp{d+1}{3d}{d+1}{3d} \broad{}{k,i,k} \pool{}{k,i} X^{(t,l,1^{st} \mathrm{sp.})}_{kij} + \\
        &\qquad + \cp{}{2d}{d+1}{3d}\glue{\cp{}{d}{}{d}}{\cp{}{d}{}{d}} \broad{}{i,*,i} X^{(t,l,1^{st} \mathrm{sp.})}_{iii}
        \\
        X^{(t, l, 2^{nd} \mathrm{broad.})}_{kij} &= \cp{}{d}{}{d} X^{(t,l,1^{st} \mathrm{sp.})}_{kij} + \cp{d+1}{3d}{d+1}{3d} \broad{}{*,i,k} X^{(t,l,1^{st} \mathrm{sp.})}_{kik} + \cp{d+1}{3d}{d+1}{3d} \broad{}{*,i,k} \pool{}{k,i} X^{(t,l,1^{st} \mathrm{sp.})}_{kij}
    \end{align*}
    \noindent where, crucially, the results from pooling are broadcast back into orbit tensors $X_{iij}, X_{kik}, X_{kij}$, given that one more local summation is required. Next, one more sparsification takes place in the form of~\Cref{eq:sparsification_iij,eq:sparsification_kik,eq:sparsification_kij}, generating $X^{(t, l, 2^{nd} \mathrm{sp.})}_{iij}, X^{(t, l, 2^{nd} \mathrm{sp.})}_{kik}, X^{(t, l, 2^{nd} \mathrm{sp.})}_{kij}$. Finally, second aggregation step (3) is performed jointly with the final update step (4), which writes back into orbit tensors $X_{iii}, X_{kii}$:
    \begin{align*}
        X^{(t,l+1)}_{iii} &= \cp{}{2d}{}{2d} X^{(t,l,2^{nd} \mathrm{sp.})}_{iii} + \cp{}{}{d+1}{2d} [ \mathbf{0}\ ||\ \theta^{\sim_i}_{1,l,t}\ ||\ \theta^{\sim}_{1,l,t} ]\ \broad{}{i,i,i} \pool{}{i} X^{(t,l,2^{nd} \mathrm{sp.})}_{iij} \\
        X^{(t,l+1)}_{kii} &= \cp{}{2d}{}{2d} X^{(t,l,2^{nd} \mathrm{sp.})}_{kii} + \cp{}{}{d+1}{2d} [ \mathbf{0}\ ||\ \theta^{\sim_k}_{2,l,t}\ ||\ \theta^{\sim}_{2,l,t} ]\ \broad{}{kii} X^{(t,l,2^{nd} \mathrm{sp.})}_{kik} + \\
        &\qquad +\cp{}{}{d+1}{2d} [ \mathbf{0}\ ||\ \theta^{\sim_k}_{2,l,t}\ ||\ \theta^{\sim}_{2,l,t} ]\ \broad{}{kii} \pool{}{k,i} X^{(t,l,2^{nd} \mathrm{sp.})}_{kij}
    \end{align*}
    
    When all terms are implemented and combined together, it is only left to bring back the dimensionality to $d$, overwriting the previous node representations with the newly computed one:
    \begin{align*}
        X^{(t+1)}_{iii} &= \sigma \Big( \cp{d+1}{}{}{d} X^{(t,L)}_{iii} \Big) \\
        X^{(t+1)}_{kii} &= \sigma \Big( \cp{d+1}{}{}{d} X^{(t,L)}_{kii} \Big) 
    \end{align*}
\end{proof}

\begin{proof}[Proof of Corollary~\ref{cor:ReIGN_upperbound}]
    We proceed  by contradiction as in the Proof of Corollary~\ref{cor:upperbound}. Suppose there exist non-isomorphic but \WL{3}-equivalent graphs $G_1, G_2$ distinguished by instance $\mathcal{R}_{\rho,\Theta,\bar{\pi}}$. That is, $\mathcal{R}_{\rho,\Theta,\bar{\pi}} \big( G_1 \big) \neq \mathcal{R}_{\rho,\Theta,\bar{\pi}} \big( G_2 \big)$. In view of Theorem~\ref{prop:3IGN_implements_ReIGN}, there must exists a \IGN{3} instance $\mathcal{M}_\Omega$ such that $\mathcal{M}_\Omega \big( G_1 \big) \neq \mathcal{M}_\Omega \big( G_2 \big)$. By Theorem~\ref{thm:kIGN_upperbound}, if there exists a \IGN{3} distinguishing $G_1, G_2$, then these two graphs must be distinguished by the \WL{3} algorithm, against our hypothesis.
\end{proof}
\subsection{Proofs for Section~\ref{sec:sun}}

\paragraph{\SUN{} in linear form}

\begin{align}
    x_i^{i,(t+1)} &= \sigma \Big( U^2_{r,t}\cdot x_i^{i,(t)}  + U^3_{r,t}\cdot \sum_{j} x_j^{i,(t)} + \nonumber \\
    &\quad 
    + U^4_{r,t}\cdot \sum_{j \sim_i i} x_j^{i,(t)} + U^5_{r,t}\cdot \sum_h x_i^{h,(t)} + U^6_{r,t}\cdot \sum_{j \sim i} \sum_h x_j^{h,(t)} \Big) \label{eq:linear_SUN}\\
    x_i^{k,(t+1)} &= \sigma \Big( U^0_{t}\cdot x_i^{i,(t)} + U^1_{t}\cdot x_k^{k,(t)} + U^2_{t}\cdot x_i^{k,(t)}  + U^3_{t}\cdot \sum_{j} x_j^{k,(t)} + \nonumber \\
    &\quad 
    + U^4_{t}\cdot \sum_{j \sim_k i} x_j^{k,(t)} + U^5_{t}\cdot \sum_h x_i^{h,(t)} + U^6_{t}\cdot \sum_{j \sim i} \sum_h x_j^{h,(t)} \Big) \label{eq:linear_SUN_2}
\end{align}

\begin{proof}[Proof of Proposition~\ref{prop:SUN_is_in_ReIGN}]
    We construct a stacking of $2$ \ReIGN{2} layers implementing one \SUN{} layer as per Equations~\ref{eq:linear_SUN} and~\ref{eq:linear_SUN_2}. The first layer expands the dimension of the hidden representations to $2d$. The first $d$ channels store the sum of linear transformations operated by $U^2_{r,t}, U^3_{r,t}, U^4_{r,t}$ in Equation~\ref{eq:linear_SUN} and those operated by $U^0_{t}, U^1_{t}, U^2_{t}, U^3_{t}, U^4_{t}$ in Equation~\ref{eq:linear_SUN_2}. Channels $d+1$ to $2d$ will store terms $\sum_h x^{h, (t)}_i$:
    \begin{align*}
        x_i^{i,(t,1)} &=          \glue{(U^2_{r,t} + I_d)}{I_d} x_i^{i,(t)} 
                                + \hemcp{}{}{}{d}{2d} U^3_{r,t}\cdot \sum_{j \neq i} x_j^{i,(t)} + \\ 
                        &\qquad + \hemcp{}{}{}{d}{2d} U^4_{r,t}\cdot \sum_{j \sim_i i} x_j^{i,(t)}
                                + \cp{}{}{d+1}{2d} \sum_{h \neq i} x^{h, (t)}_i \\
        x_i^{k,(t,1)} &=          \glue{ U^0_{t}}{I_d} x_i^{i,(t)} + \hemcp{}{}{}{d}{2d} (U^1_{t} + I_d)\cdot x_k^{k,(t)} + \hemcp{}{}{}{d}{2d} U^2_{t}\cdot x_i^{k,(t)} + \\
                        &\qquad + \hemcp{}{}{}{d}{2d} U^3_{t}\cdot \sum_{j \neq k} x_j^{k,(t)} + \hemcp{}{}{}{d}{2d} U^4_{t}\cdot \sum_{j \sim_k i} x_j^{k,(t)} + \cp{}{}{d+1}{2d} \sum_{h \neq i} x^{h, (t)}_i
    \end{align*}
    \noindent where we have used the following aggregated terms. First equation: [\#2.on] in its global version and second expansion, [\#3.on] in its global version. Second equation: [\#2.off] in its global version, [\#3.off] in its global version and second expansion. Non-appearing \ReIGN{2} terms are nullified. The second \ReIGN{2} layer completes the computation by implementing linear transformations $U^5_{t,r}, U^6_{t,r}, U^5_{t}, U^6_{t}$, and by contracting the dimensionality back to $d$:
    \begin{align*}
        x_i^{i,(t+1)} &= \sigma \Big( \sumglue{I_d}{U^5_{r,t}} x_i^{i,(t,1)} 
                                      + U^6_{r,t}\cdot \cp{d+1}{2d}{}{d} \sum_{j \sim i} x_j^{i,(t,1)} \Big) \\
        x_i^{k,(t+1)} &= \sigma \Big( \sumglue{I_d}{U^5_{t}} x_i^{k,(t,1)} 
                                      + U^6_{t}\cdot \cp{d+1}{2d}{}{d} \sum_{j \sim i} x_j^{k,(t,1)} \Big)
    \end{align*}
    \noindent where we have used aggregated terms [\#2.on] and [\#3.off] in their third expansion.
\end{proof}

\begin{proof}[Proof of Proposition~\ref{prop:SUN_implements_SubgraphNetworks}]
    We describe how a stacking of \SUN{} layers in the form of Equations~\ref{eq:linear_SUN} and~\ref{eq:linear_SUN_2} implements layers of models in $\Upsilon$ with~\citet{morris2019weisfeiler} base-encoders. As usual, we proceed model by model.
    
    [\emph{\textbf{DS-GNN}}] updates root and non-root nodes in the same manner. Thus, we seek to find a choice of linear operators in Equations~\ref{eq:linear_SUN} and~\ref{eq:linear_SUN_2} in a way that the two coincide and exactly correspond to Equation~\ref{eq:DSGNN_Morris}. To this aim, it is sufficient to set:
    \begin{itemize}
        \item $U^2_{r,t} = U^2_{t} = W_{1,t}$
        \item $U^4_{r,t} = U^4_{t} = W_{2,t}$
    \end{itemize}
    \noindent and all other weight matrices $U$ to $\mathbf{0}$.
    
    [\emph{\textbf{DSS-GNN}}] implements Equation~\ref{eq:DSSGNN_Morris}. We proceed similarly as above, setting:
    \begin{itemize}
        \item $U^2_{r,t} = U^2_{t} = W^1_{1,t}$
        \item $U^4_{r,t} = U^4_{t} = W^1_{2,t}$
        \item $U^5_{r,t} = U^5_{t} = W^2_{1,t}$
        \item $U^6_{r,t} = U^6_{t} = W^2_{2,t}$
    \end{itemize}
    \noindent and all other weight matrices $U$ to $\mathbf{0}$.
    
    [\emph{\textbf{GNN-AK-ctx} \& \textbf{GNN-AK}}] We seek to recover Equations~\ref{eq:GNNAK_Morris_S} ([S]) and~\ref{eq:GNNAK_A} ([A]). Each message passing layer in [S] is implemented by one \SUN{} layer similarly as above, that is by setting:
     \begin{itemize}
        \item $U^2_{r,t,l} = U^2_{t,l} = W_{1,t,l}$
        \item $U^4_{r,t,l} = U^4_{t,l} = W_{2,t,l}$
    \end{itemize}
    \noindent and all other weight matrices $U$ to $\mathbf{0}$.
    Then, block [A] is implemented by one \SUN{} layer by setting:
    \begin{itemize}
        \item $U^2_{r,t} = U^3_{r,t} = U^5_{r,t} = I$
        \item $U^0_{t} = U^3_{t} = U^5_{t} = I$
    \end{itemize}
    \noindent and all other weight matrices $U$ to $\mathbf{0}$.
    In the case of GNN-AK, we instead require $U^5_{r,t} = U^5_{t} = \mathbf{0}$. No activation $\sigma$ is applied after this layer.
    
    [\emph{\textbf{ID-GNN}}] Two \SUN{} layers can implement one ID-GNN layer as in Equation~\ref{eq:IDGNN_v2}, similarly as shown for \ReIGN{2} in the Proof of Theorem~\ref{thm:ReIGN_implements_SubgraphNetworks}. The first layer doubles the representation dimension and applies projections $W_{1,t}, W_{2,t}, W_{3,t}$, by setting:
    \begin{itemize}
        \item $U^2_{r,t,1} = \glue{W_{1,t}}{W_{3,t}}$
        \item $U^2_{t,1} = \glue{W_{1,t}}{W_{2,t}}$
    \end{itemize}
    \noindent and all other weight matrices $U$ to $\mathbf{0}$. Here, as usual, $\big[ \cdot \cdot \big]$ indicates vertical concatenation. No activation $\sigma$ is applied after this layer.
    
    The second layer has its weight matrices set to:
    \begin{itemize}
        \item $U^2_{r,t,2} = U^2_{t,2} = \cp{}{d}{}{d}$
        \item $U^4_{r,t,2} = U^4_{t,2} = \cp{d+1}{2d}{}{d}$
    \end{itemize}
    and all other weight matrices $U$ to $\mathbf{0}$.
    
    [\emph{\textbf{NGNN}}] layers perform independent message passing on each subgraph. \SUN{} implements these as shown for DS-GNN.
\end{proof}

\section{Future research directions}\label{app:future}
The following are promising directions for future work:
\begin{enumerate}
    \item \emph{Extension to higher-order node policies.} The prior works of \citet{cotta2021reconstruction,papp2021dropgnn} suggested using more complex policies that depend on tuples of nodes rather than a single node. Since there are exactly $n^k$ distinct $k$-tuples, and each subgraph is defined by a second-order adjacency tensor, we conjecture that Subgraph GNNs applied to such policies are bounded by \WL{(k+2)}. See Appendix \ref{app:higher} for additional details.
    \item \emph{Beyond \WL{3}.} Our results suggest  two directions for breaking the \WL{3} representational limit: (i) Using policies not computable by \IGNs{3} (ii) Using higher-order node-based policies as mentioned above.
    \item \emph{Layers vs. policies.} We make an interesting observation regarding the relationship between layer structure and subgraph selection policies: Having a non-shared set of parameters for root and non-root nodes, \SUN{} may be capable of learning the policies $\pi_{\textrm{NM}}, \pi_{\textrm{ND}}$ by itself. This raises the question of whether we should let the model learn a policy or specify one in advance.
    \item \emph{Lower bound on \SUN{} and \ReIGN{2}.} \revision{In this work we have proved a \WL{3} \emph{upper bound} on the expressive power of \SUN{}, \ReIGN{2} and other node-based Subgraph GNNs by showing their computation on a given graph pair can be simulated by a \IGN{3}. It is natural to ask whether a (tighter) \emph{lower bound} exists as well. In this sense, it is reasonable to believe that node-based Subgraph GNNs are not capable of implementing \IGNs{3}, as they inherently operate on a second-order object. However, this does not necessarily imply these models are less expressive than \IGNs{3}, when considering graph separation. For example, they may still be able to distinguish between the same pairs of graphs distinguished by 3-IGNs, hence attaining \WL{3} expressive power. It is because of this reason that we believe studying the expressivity gap between \ReIGN{2} (or any of the subsumed Subgraph GNNs) and \WL{3} represents an interesting open question that could be addressed in future work}.
    \item \revision{\emph{Expressive power of subgraph selection policies.} Another interesting direction for future work would be to better characterise the impact of subgraph selection policies on the expressive power of Subgraph GNNs. \citet{bevilacqua2022equivariant} already showed that the DS-GNN model can distinguish some Strongly Regular graphs in the same family when equipped with edge-deletion policy, but not with node-deletion or depth-$n$ ego-networks~\citep[Proposition~3]{bevilacqua2022equivariant}. However, edge-deletion is \emph{not} a node-based policy since subgraphs are not in a bijection with nodes in the original graph. It still remains unclear whether a stratification in expressive power exists amongst node-based policies in particular, and under which conditions -- if any -- this last holds.}
\end{enumerate}
We note that, related to 3. and concurrently to the present work,~\citet{Qian2022osan} experiment with directly learning policies by back-propagating through discrete structures via perturbation-based differentiation~\citep{niepert2021implicit}.

\section{Extension to higher-order node policies}\label{app:higher}
Constructing a higher-order subgraph selection policy (\Cref{app:future}, direction 1.), amounts to defining \emph{selection function} $f$ on a graph and a $k$-\emph{tuple} of its nodes: For a graph $G \in \mathcal{G}$, the subgraphs of such a policy are obtained as $G_{(v_1,\dots,v_k)} = f\big(G,(v_1,\dots,v_k)\big)$. The policy contains a subgraph for each possible tuple $(v_1,\dots,v_k)$. We refer to such policies as \emph{$k$-order node policies}. The $k$-node deletion policy suggested by \citet{cotta2021reconstruction} is a natural example as the bag of subgraphs contains all subgraphs that are obtained by removing $k$ distinct nodes from the original graph. Since there are exactly $n^k$ distinct tuples, and each subgraph is defined by a second-order adjacency tensor in $\mathbb{R}^{n^{2}}$, these bags of subgraphs can be arranged into tensors in $\mathbb{R}^{n^{k+2}}$. Noting that the symmetry of these tensors can be naturally defined by the diagonal action of $S_n$ on $\{1, \mathellipsis, n\}^{k+2}$ we raise the following generalisation of \Cref{cor:upperbound}:

\begin{conjecture}
    Subgraph GNNs equipped with $k$-order \emph{node-deletion}, $k$-order \emph{node-marking} or $k$-order \emph{ego-networks} policies are bounded by $(k+2)$-WL.  
\end{conjecture}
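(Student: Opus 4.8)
The plan is to replicate, one level higher, the argument chain that gave Corollary~\ref{cor:upperbound} for the $k=1$ case, lifting every step from third-order to $(k+2)$-order tensors and restricting, as throughout the paper, to node-based architectures with \citet{morris2019weisfeiler} base-encoders. First I would fix the data structure. A bag generated by a $k$-order node policy carries $n^k$ subgraphs, one per tuple $(v_1,\dots,v_k) \in [n]^k$, each a second-order adjacency/feature tensor in $\mathbb{R}^{n^2}$; concatenating these yields $\mathcal{Y} \in \mathbb{R}^{n^{k+2} \times d}$, with the first $k$ axes indexing the tuple and the last two the node-node structure. The crucial observation, as noted just before the conjecture, is that the bijective, node-tied nature of the policy aligns all axes under a single diagonal $S_n$-action on $[n]^{k+2}$, which is exactly the symmetry governing $(k+2)$-IGNs, generalising the $n^3$ picture of Section~\ref{sec:symmetries}. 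This action decomposes $[n]^{k+2}$ into $\text{bell}(k+2)$ orbits, each an equality pattern of indices carrying a subgraph-semantics (which tuple-entries coincide with which node-indices), just as the five orbits of \Cref{eq:orbits} did.

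Next I would generalise Lemma~\ref{lemma:3IGN_implements_policies}: a stacking of $(k+2)$-IGN layers implements each $k$-order policy. As in its proof, one first broadcasts the input adjacency $A \in \mathbb{R}^{n^2}$ along the $k$ tuple axes to populate $\mathcal{Y}$, then edits it. For $k$-order node-marking, a bias term marks every node whose index equals one of the tuple indices, i.e. the orbit representations whose equality pattern ties a tuple-axis to a node-axis. For $k$-order node-deletion, one zeroes the connectivity incident to any such node. For $k$-order ego-networks one again runs the iterative breadth-first reachability computation via the logical-AND/clip/pool primitives of the appendix, now propagating from each of the $k$ seeds rather than one. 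All of these remain pooling-broadcasting operations over the enlarged orbit set, hence expressible through \citet{Albooyeh2019incidence}.

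Then I would generalise Lemma~\ref{lemma:3IGN_implements_SubgraphNetworks}: on any fixed pair $G_1,G_2$, the base MPNN layer and any cross-subgraph sharing rule are implementable by $(k+2)$-IGNs. Intra-subgraph message passing proceeds exactly as in the $k=1$ proof — broadcast node features onto the connectivity-bearing orbits, sparsify with an MLP whose existence is guaranteed by Proposition~\ref{prop:memorisation_ours} (the index set $[n]^{k+2}$ is still finite on two graphs), then aggregate by global pooling restricted to the node-axes. The cross-subgraph $\sum_h$-type terms become pooling over a subset of the $k$ tuple axes, again an orbit operation. Composing the policy stack, the layer stacks, and a $(k+2)$-IGN-computable readout produces a $(k+2)$-IGN implementing the whole Subgraph GNN on $\{G_1,G_2\}$, mirroring Theorem~\ref{thm:3IGN_upperbounds_SubgraphNetworks}. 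A contradiction argument identical to the Proof of Corollary~\ref{cor:upperbound}, invoking the $(k+2)$-order instance of Theorem~\ref{thm:kIGN_upperbound}, then yields the \WL{(k+2)} bound.

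The hard part will be the ego-network policy together with the orbit bookkeeping. With $\text{bell}(k+2)$ orbits the explicit, orbit-by-orbit case analysis that was tractable for five orbits becomes unwieldy, so the genuine work is to argue at the level of generic equality-pattern orbits — identifying which orbits hold node features, which hold subgraph connectivity, and which tuple-to-node coincidences designate the ``root'' nodes to be marked or deleted — rather than enumerating them. Pinning down a clean, axis-symmetric definition of $k$-order ego-networks (for instance the subgraph induced by nodes within $h$ hops of \emph{some} seed $v_\ell$) and verifying that the breadth-first primitive scales to $k$ simultaneous seeds without an uncontrolled growth in channels is where care is truly required; the remaining steps are essentially mechanical lifts of the $k=1$ constructions.
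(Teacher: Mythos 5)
The first thing to note is that the paper does not prove this statement at all: it appears only as a conjecture in \Cref{app:higher}, where the authors explicitly write that a proof ``can be accomplished by following the same steps as our proof, i.e., by showing that \IGN{(k+2)} can implement the bag and the update steps of Subgraph GNNs'' and then leave this to future work. Your proposal reproduces exactly that envisioned route --- arrange the bag as a tensor in $\mathbb{R}^{n^{k+2}\times d}$ under the diagonal $S_n$-action, lift \Cref{lemma:3IGN_implements_policies} and \Cref{lemma:3IGN_implements_SubgraphNetworks} to order $k+2$, reuse \Cref{prop:memorisation_ours} for the sparsification MLPs on a fixed graph pair, and close with the \IGN{k}/\WL{k} separation result of \citet{geerts2020expressive,azizian2020characterizing} via the contradiction argument of \Cref{cor:upperbound} --- so there is no divergence of approach to report.

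However, as a proof your proposal has the same status as the paper's own sketch: it is a roadmap, not an argument, and the steps you defer are precisely the ones whose absence makes the statement a conjecture rather than a theorem. Concretely: (i) the $k=1$ proofs lean on an explicit five-orbit enumeration with hand-specified pooling-broadcasting programmes for each orbit pair; at order $k+2$ one must instead argue uniformly over the $\text{bell}(k+2)$ equality-pattern orbits, and no such generic construction is given --- identifying which orbits carry features, connectivity, and ``root'' semantics is stated as a goal, not carried out; (ii) the policies themselves are not pinned down: the tensor indexes all $n^k$ tuples including those with repeated entries, whereas $k$-node deletion \`a la \citet{cotta2021reconstruction} removes $k$ \emph{distinct} nodes, so the selection function must be defined (and implemented) on the diagonal orbits too, an issue your sketch does not address; and (iii) the claim that the breadth-first reachability primitive scales to $k$ simultaneous seeds with controlled channel growth is asserted as the hard part but never verified. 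You are candid about all of this, which is the right instinct, but it means the proposal should be read as confirming the paper's intended plan rather than as establishing the \WL{(k+2)} bound; the conjecture remains open in both the paper and your write-up.
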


We believe that proving this conjecture can be accomplished by following the same steps as our proof, i.e., by showing that \IGN{(k+2)} can implement the bag and the update steps of Subgraph GNNs. We also note that $\ReIGN{k}$, a higher order analogue of \ReIGN{2}, can be obtained by following the steps in Section \ref{sec:space}. We leave both directions for future work. 

We end this section by noting that the statement in our conjecture is considered in a work concurrent to ours by~\citet{Qian2022osan}.

\section{Experimental details and additional results}\label{app:experiments}
\begin{table}[t!]
    %\vspace{-35pt}
    \centering
    \tiny
    \caption{TUDatasets. The top three are highlighted by \textbf{\textcolor{red}{First}}, \textbf{\textcolor{violet}{Second}}, \textbf{Third}.}
    \label{tab:tud_datasets_full}
    \resizebox{\linewidth}{!}{%
    \begin{tabular}{l | lllll | ll}
        \toprule
        Dataset & 
        \textsc{MUTAG} &
        \textsc{PTC} &
        \textsc{PROTEINS} &
        \textsc{NCI1} &
        \textsc{NCI109} &
        \textsc{IMDB-B} &
        \textsc{IMDB-M}
        \\

        \midrule
         
        \textsc{DCNN}~\citep{DCNN_2016} & 
        N/A &  
        N/A &
        61.3$\pm$1.6 &
        56.6$\pm$1.0 &
        N/A &
        49.1$\pm$1.4 &
        33.5$\pm$1.4
        \\

        \textsc{DGCNN}~\citep{zhang2018end} & 
        85.8$\pm$1.8 & 
        58.6$\pm$2.5 & 
        75.5$\pm$0.9 &
        74.4$\pm$0.5 &
        N/A &
        70.0$\pm$0.9 & 
        47.8$\pm$0.9
        \\
        
        \textsc{IGN}~\citep{maron2018invariant} &
        83.9$\pm$13.0 &
        58.5$\pm$6.9 &
        76.6$\pm$5.5 &
        74.3$\pm$2.7 &
        72.8$\pm$1.5 &
        72.0$\pm$5.5 & 
        48.7$\pm$3.4 
        \\

        \textsc{PPGNs}~\citep{maron2019provably} &
        90.6$\pm$8.7 &
        66.2$\pm$6.6 &
        \first{77.2}$\pm$4.7 &
        83.2$\pm$1.1 &
        \third{82.2}$\pm$1.4 &
        73.0$\pm$5.8 & 
        50.5$\pm$3.6
         \\

        \textsc{Natural GN}~\citep{de2020natural} &
        89.4$\pm$1.6 &
        66.8$\pm$1.7 &
        71.7$\pm$1.0 &
        82.4$\pm$1.3 &
        N/A &
        73.5$\pm$2.0 &
        51.3$\pm$1.5
        \\

        \textsc{GSN}~\citep{bouritsas2022improving}&
        \second{92.2}$\pm$7.5 &
        \second{68.2}$\pm$7.2 & 
        76.6$\pm$5.0 &
        83.5$\pm$2.0 &
        N/A &
        \first{77.8}$\pm$3.3 & 
        \first{54.3}$\pm$3.3
        \\
        
        \textsc{SIN}~\citep{bodnar2021weisfeilerA} & 
        N/A  &
        N/A & 
        76.4$\pm$3.3 &
        82.7$\pm$2.1 &
        N/A &
        75.6$\pm$3.2 & 
        52.4$\pm$2.9
        \\

        \textsc{CIN}~\citep{bodnar2021weisfeilerB} & 
        \first{92.7}$\pm$6.1 &
        \second{68.2}$\pm$5.6 &
        \third{77.0}$\pm$4.3 &
        83.6$\pm$1.4 &
        \first{84.0}$\pm$1.6 &
        75.6$\pm$3.7 & 
        52.7$\pm$3.1
        \\

        \midrule

        \textsc{GIN}~\citep{xu2019how} & 
        89.4$\pm$5.6 & 
        64.6$\pm$7.0 & 
        76.2$\pm$2.8 &
        82.7$\pm$1.7 &
        \third{82.2}$\pm$1.6 &
        75.1$\pm$5.1 &
        52.3$\pm$2.8
        \\
        
        \midrule
        \midrule
        \textsc{ID-GNN (GIN)}~\citep{you2021identity} & 
        90.4$\pm$5.4 & 
        67.2$\pm$4.3 & 
        75.4$\pm$2.7 &
        82.6$\pm$1.6 &
        82.1$\pm$1.5 &
        76.0$\pm$2.7 &
        52.7$\pm$4.2
        
        \\

        \midrule
        
        \textsc{DropEdge}~\citep{rong2019dropedge}& 
        91.0$\pm$5.7 & 
        64.5$\pm$2.6 & 
        73.5$\pm$4.5 &
        82.0$\pm$2.6 &
        \third{82.2}$\pm$1.4 &
        76.5$\pm$ 3.3 &
        52.8$\pm$ 2.8 
        
        \\

        \midrule

        {\textsc{DS-GNN (GIN) (ND)}}~\citep{bevilacqua2022equivariant}   & 
         89.4$\pm$4.8 &
         66.3$\pm$7.0 &
         \second{77.1}$\pm$4.6 &
         \third{83.8}$\pm$2.4 &
         82.4$\pm$1.3 &
         75.4$\pm$2.9 &
         52.7$\pm$2.0
        \\

        {\textsc{DS-GNN (GIN) (EGO)}}~\citep{bevilacqua2022equivariant}   & 
         89.9$\pm$6.5 &
         68.6$\pm$5.8 &
         76.7$\pm$5.8 &
         81.4$\pm$0.7 &
         79.5$\pm$1.0 &
         76.1$\pm$2.8 &
         52.6$\pm$2.8
        \\
        
        {\textsc{DS-GNN (GIN) (EGO+)}}~\citep{bevilacqua2022equivariant}   & 
         91.0$\pm$4.8 &
         68.7$\pm$7.0 &
         76.7$\pm$4.4 &
         82.0$\pm$1.4 &
         80.3$\pm$0.9 &
         \second{77.1}$\pm$2.6 &
         53.2$\pm$2.8
        \\
        
        \midrule

        {\textsc{DSS-GNN (GIN) (ND)}}~\citep{bevilacqua2022equivariant}   & 
         91.0$\pm$3.5 &
         66.3$\pm$5.9 &
         76.1$\pm$3.4 &
         83.6$\pm$1.5 &
         \second{83.1}$\pm$0.8 &
         76.1$\pm$2.9 &
         \second{53.3}$\pm$1.9
        \\

        {\textsc{DSS-GNN (GIN) (EGO)}}~\citep{bevilacqua2022equivariant}   & 
         91.0$\pm$4.7 &
         68.2$\pm$5.8 &
         76.7$\pm$4.1 &
         83.6$\pm$1.8 &
         82.5$\pm$1.6 &
         76.5$\pm$2.8 &
         \second{53.3}$\pm$3.1
        \\
        
        {\textsc{DSS-GNN (GIN) (EGO+)}}~\citep{bevilacqua2022equivariant}   & 
        91.1$\pm$7.0 &
        \first{69.2}$\pm$6.5 &
        75.9$\pm$4.3 &
        83.7$\pm$1.8 &
        82.8$\pm$1.2 &
        \second{77.1}$\pm$3.0&
        53.2$\pm$2.4
        \\
        
        \midrule
        {\textsc{GIN-AK+}}~\citep{zhao2022from} &
        \third{91.3}$\pm$7.0 	&
        \third{67.8}$\pm$8.8	&
        \second{77.1}$\pm$5.7	&
        \first{85.0}$\pm$2.0	&
        N/A	&
        75.0$\pm$4.2 &
        N/A
        \\

        \midrule

        {\bf \textsc{SUN (GIN) (NULL)}}   & 
         91.6$\pm$4.8&
         67.5$\pm$6.8&
         76.8$\pm$4.4&
         84.1$\pm$2.0&
         83.0$\pm$0.9&
         76.2$\pm$1.9&
         52.6$\pm$3.2
        \\
        {\bf \textsc{SUN (GIN) (NM)}}   & 
         91.0$\pm$4.7&
         67.0$\pm$4.8&
         75.7$\pm$3.4&
         84.2$\pm$1.5&
         \second{83.1}$\pm$1.5&
         76.1$\pm$2.9&
         \third{53.1}$\pm$2.5
        \\
        
        {\bf \textsc{SUN (GIN) (EGO)}}   & 
         \first{92.7}$\pm$5.8&
         67.2$\pm$5.9&
         76.8$\pm$5.0&
         83.7$\pm$1.3&
         83.0$\pm$1.0&
         \third{76.6}$\pm$3.4&
         52.7$\pm$2.3
        \\
        
        {\bf \textsc{SUN (GIN) (EGO+)}}   & 
         92.1$\pm$5.8&
         67.6$\pm$5.5&
         76.1$\pm$5.1&
         \second{84.2}$\pm$1.5&
         \second{83.1}$\pm$1.0&
         76.3$\pm$1.9&
        52.9$\pm$2.8
        \\

        \bottomrule

    %\end{adjustwidth}
    \end{tabular}
    }
\end{table}

\begin{table}[ht]
    \scriptsize
    \caption{Test mean metric on the Graph Properties dataset. All Subgraph GNNs employ a GIN base-encoder.
     }\label{tab:simulation2}
    \centering
    \begin{tabular}{l  ccc
    }
    \toprule
     \multirow{2}{*}{Method} &
     \multicolumn{3}{c}{Graph Properties ($\log_{10}$(MSE))}
     \\
     \cmidrule(l{2pt}r{2pt}){2-4} 
                                   &  IsConnected & Diameter & Radius 
    \\ 
    \midrule  
     \textsc{GCN}~\citep{kipf2016semi}     & -1.7057 & -2.4705 & -3.9316
     \\
     \textsc{GIN}~\citep{xu2019how}       & -1.9239 & -3.3079 & -4.7584
     \\
     \textsc{PNA}~\citep{corso2020pna}       &-1.9395 &-3.4382 &-4.9470
     \\
     \textsc{PPGN}~\citep{maron2019provably}        & -1.9804 & -3.6147 & -5.0878
     \\
     \midrule
    \textsc{GNN-AK}~\citep{zhao2022from}   & -1.9934 & -3.7573 & -5.0100
    \\
    \textsc{GNN-AK-ctx}~\citep{zhao2022from} & -2.0541 & -3.7585 & -5.1044
    \\
    \textsc{GNN-AK$+$}~\citep{zhao2022from}  & -2.7513 & -3.9687 & -5.1846
    \\
    \midrule
    \textsc{SUN (EGO)} & -2.0001 & -3.6671 & -5.5720
    \\
    \textsc{SUN (EGO+)} & -2.0651 & -3.6743	& -5.6356
    \\
    \bottomrule         
    \end{tabular}
\end{table}

\begin{table}[ht]
    \scriptsize
    \caption{Test mean and std for the corresponding metric on the synthetic tasks. A comparison with other methods can be found in \Cref{tab:count-zinc,tab:simulation2}.
     }\label{tab:synth-std}
    \centering
    \begin{minipage}{\textwidth}
    \centering
    \begin{tabular}{l cccc }
    \toprule
     \multirow{2}{*}{Method} & \multicolumn{4}{c}{Counting Substructures (MAE)} 
     \\
     \cmidrule(l{2pt}r{2pt}){2-5}
                                   & Triangle & Tailed Tri. & Star & 4-Cycle
    \\ 
    \midrule 
    \textsc{SUN (EGO)} &  0.0092$\pm$0.0002 & 0.0105$\pm$0.0010 & 0.0064$\pm$0.0006 & 0.0140$\pm$0.0014 \\
    \textsc{SUN (EGO+)} & 0.0079$\pm$0.0003 & 0.0080$\pm$0.0005 & 0.0064$\pm$0.0003  & 0.0105$\pm$0.0006 \\
    \bottomrule         
    \end{tabular}
    \vspace{15pt}
    \end{minipage}
    \begin{minipage}{\textwidth}
    \centering
    \begin{tabular}{l  ccc
    }
    \toprule
     \multirow{2}{*}{Method} & \multicolumn{3}{c}{Graph Properties ($\log_{10}$(MSE))}
     \\
     \cmidrule(l{2pt}r{2pt}){2-4}  &  IsConnected & Diameter & Radius 
    \\ 
    \midrule  
        \textsc{SUN (EGO)} & -2.0001$\pm$0.0211 & -3.6671$\pm$0.0078 & -5.5720$\pm$0.0423
    \\
    \textsc{SUN (EGO+)} & -2.0651$\pm$0.0533 & -3.6743$\pm$0.0178	& -5.6356$\pm$0.0200 \\
    \bottomrule
    \end{tabular}
    \end{minipage}
\end{table}

\begin{figure}
    \centering
    \includegraphics[width=0.5\textwidth]{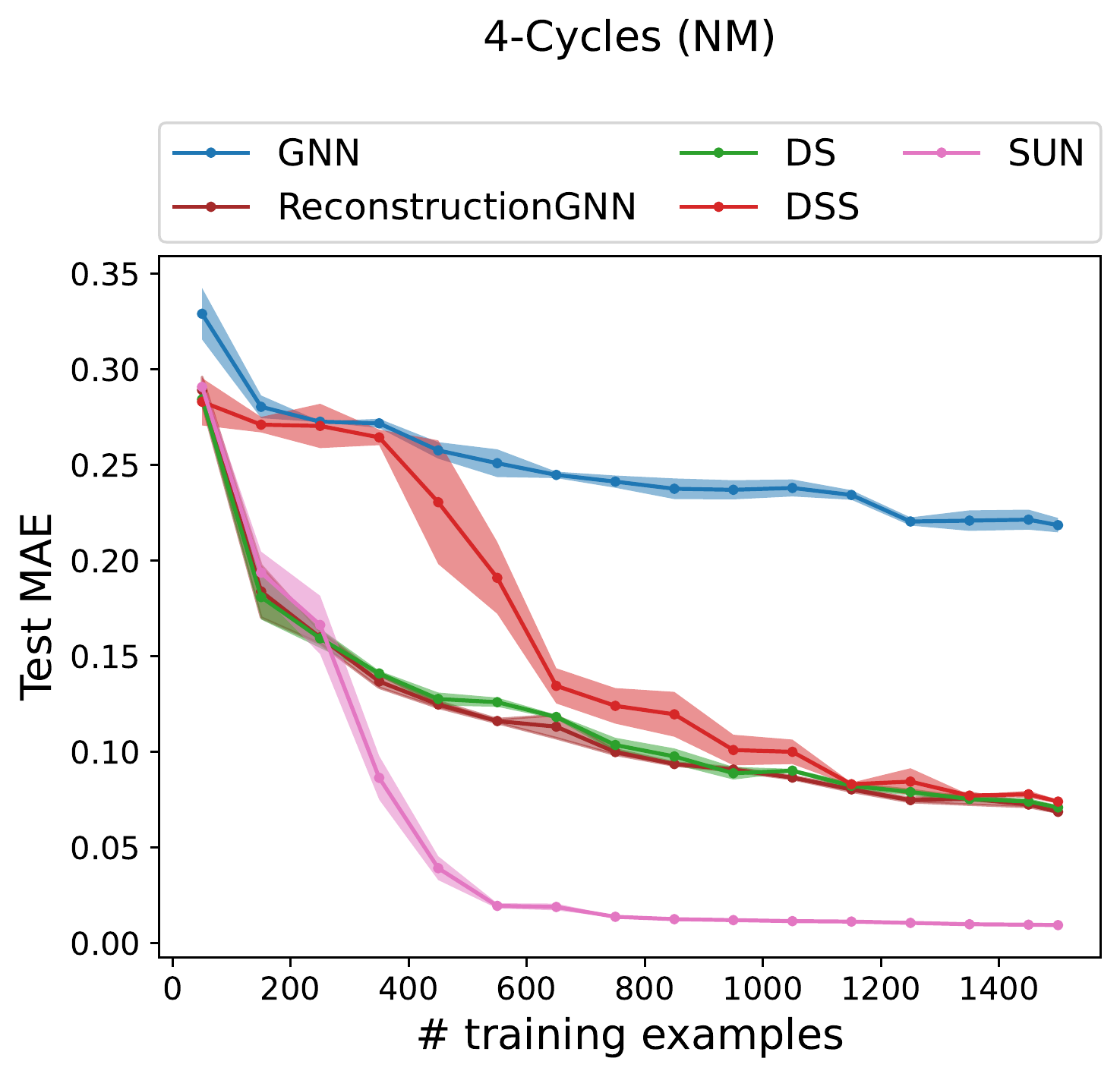}
    \caption{Generalisation capabilities of Subgraph GNNs in the counting prediction task with the node-marking (NM) selection policy.}
    \label{fig:node_marked}
\end{figure}

\begin{table}[ht]
    \caption{Performances for the 4-Cycles counting task. The Trivial Predictor  always outputs the mean training target.}
    \centering
    \begin{tabular}{l llll}
    \toprule
        &  Best Train & Train & Val & Test  \\
    \midrule
    Trivial Predictor & 0.9097 & 0.9097 & 0.9193 & 0.9275 \\
    GIN~\citep{xu2019how} & 0.0283$\pm$0.0032 & 0.1432$\pm$0.0526 &  0.2148$\pm$0.0051 &  0.2185$\pm$0.0061\\
    SUN (EGO+) & 0.0072$\pm$0.0002 & 0.0072$\pm$0.0001 & 0.0097$\pm$0.0005 & 0.0105$\pm$0.0002\\ 
    \bottomrule
    \end{tabular}
    \label{tab:train-perf}
\end{table}

\subsection{Additional experiments}

\textbf{TUDatasets.} We experimented the performances of \SUN{} on the widely used datasets from the TUD repository \citep{morris2020tudataset}, and include a comparison of different subgraph selection policies. Marking a first, preliminary, step in the future research direction (3) in~\Cref{app:future}, we also experiment with a `NULL' policy, i.e. by constructing bags by simply replicating the original graph $n$ times, without any marking or connectivity alteration. Results are reported in~\Cref{tab:tud_datasets_full}. Notably, the EGO policies obtain the best results in 6 out of 7 datasets, while the NULL policy does not seem an advantageous strategy on this benchmarking suite. 
On average, \SUN{} compares well with best performing approaches across domains, while featuring smaller result variations w.r.t.\ to GNN-AK+~\citep{zhao2022from}.

\textbf{Synthetic -- Graph property prediction.} \Cref{tab:simulation2} reports mean test $\log_{10}(\text{MSE})$ on the Graph Properties dataset. \SUN{} achieves state-of-the art results on the ``Radius'' task, where each target is defined as the largest (in absolute value)  eigenvalue of the graph's adjacency matrix. \Cref{tab:synth-std} gathers the standard deviation on the results for these benchmarks as well as ``Counting Substructures'' ones over 3 seeds as in \citet{zhao2022from}.

\textbf{Generalisation from limited data -- Node Marking.} \Cref{fig:node_marked} tests the generalisation abilities of Subgraph GNNs on the 4-Cycles task using the node-marking selection policy (NM). Similarly to \Cref{fig:4cycles-ego,fig:4cycles-ego-plus}, \SUN{} outperforms all other Subgraph GNNs by a large margin and, except for a short initial phase where all Subgraph architectures perform similarly, \SUN{} generalises better.

\textbf{Analysis of generalisation.} The GNN's poor performance on this set of experiments may be due to different reasons, e.g. underfitting vs. overfitting behaviours. In this sense, here we deepen our understanding on the $4$-Cycles counting task by additionally inspecting characteristics of the train samples and performance thereon. \Cref{tab:train-perf} reports evaluation results on training and validation sets at the epoch of best validation performance, as well as the best overall training performance. We include the GNN and \SUN{} models, along with a trivial predictor which always outputs the mean training target. First, we observe that the GNN exhibits a relatively large gap between the two reported training MAEs if compared to \SUN{}. This rules out scenarios of complete underfitting, especially considering the trivial predictor performs much worse than the GNN. This led us to evaluate the expressiveness class required to disambiguate all training and test samples: out of all possible graph pairs, we only found one not distinguished by a \WL{1} test running $6$ colour refinement rounds --- same as the number of message passing layers in our GNN. As a consequence, the GNN baseline can effectively assign unique representations to almost all graphs, this justifying its superior performance w.r.t.\ the trivial predictor. Yet, \SUN{} achieves much better results on all sets, while displaying a smaller train-test gap. This is an indication that, although the hypothesis class of the GNN is sufficiently large to avoid underfitting, it renders the overall learning procedure difficult, leading to suboptimal solutions and partial memorisation phenomena. These results are in line with the observations in~\citet[Appendix G.1]{cotta2021reconstruction}, where the authors have performed a similar analysis on real-world benchmarks.

\revision{
\textbf{Ablation study.}
\begin{table}[t!]
\centering
    \caption{\revision{Test results on ZINC dataset (GIN base-encoder). Each row reports a particular ablation applied on top of the ones in the upper rows.}}
    \label{tab:ablation}
    \small
    \begin{tabular}{l c c}
        \toprule
        \multirow{2}{*}{Method} & \multicolumn{2}{c}{\textsc{ZINC (MAE $\downarrow$)}}
     \\
          \cmidrule(l{2pt}r{2pt}){2-3}
                                   & EGO & EGO+ \\
        \midrule
        \textsc{\bf SUN} & 0.083$\pm$0.003 &  0.084$\pm$0.002\\
        \midrule
        w/o $x^{i,(t)}_{i}, x^{k,(t)}_{k}$ & 0.089$\pm$0.004 & 0.089$\pm$0.002 \\
        $\theta_1 = \theta_2$ & 0.093$\pm$0.003 &  0.093$\pm$0.004 \\
        w/o $\sum_{j} x^{k,(t)}_{j}$ & 0.093$\pm$0.004 & 0.090$\pm$0.004 \\
        w/o $\sum_{h} x^{h,(t)}_{i}, \sum_{j \sim i} \sum_{h} x^{h,(t)}_{j}$ & 0.111$\pm$0.005 & 0.101$\pm$0.007 \\
        \bottomrule
    \end{tabular}
\end{table}
To assess the impact of the terms in the SUN layer, we perform an ablation study by making sequential changes to \Cref{eq:sun_layer_on,eq:sun_layer_off}
until recovering an architecture similar to NGNN, DS-GNN. 
We considered the ZINC-12k molecular dataset, using GIN as base graph encoder. \Cref{tab:ablation} reports the performances for the EGO and EGO+ policies.
As it can be seen, each ablation generally produces some performance degradation, with the removal of $\sum_{j} x^{k,(t)}_{j}$ having no significant impact (EGO policy) or even being beneficial when the other changes are made (EGO+ policy).
Interestingly, in the EGO+ policy case, although root nodes are explicitly marked, the architecture seems to still benefit from not sharing parameters between root and non-root updates. Indeed, imposing the weight sharing $\theta_1 = \theta_2$ deteriorates the overall performance, which gets similar to the one obtained for the EGO policy. These results indicates that, in the SUN layer, most of the terms concur to the strong empirical performance of the architecture, including the choice of not sharing parameters between root and non-root updates.
}

\subsection{Experimental details}
We implemented our model using Pytorch~\citep{pytorch} and Pytorch Geometric~\citep{fey2019fast} (available respectively under the BSD and MIT license).
We ran our experiments on NVIDIA DGX V100, GeForce 2080, and TITAN V GPUs. We performed hyperparameter tuning using the Weight and Biases framework~\citep{wandb}. The time spent on each run depends on the dataset under consideration, with the largest being ogbg-molhiv which takes around 8 hours for 200 epochs and asam optimizer. The time for a single ZINC run is 1 hour and 10 minutes for 400 epochs.
\SUN{} uses the mean aggregator for the feature matrix and directly employs the adjacency matrix of the original graph as the aggregated adjacency (\Cref{eq:sun_layer_on,eq:sun_layer_off}). We used the sum aggregator for all the other terms. Unless otherwise specified, \SUN{} uses the following update equations:
\begin{align}
    x_v^{v, (t+1)} =& \enspace \sigma \Big(
                  \mu^{2}_{t, r} \big ( x_v^{v,(t)} \big ) +
                  \mu^{3}_{t, r} \big ( \sum_{w} x_w^{v, (t)} \big ) + \nonumber \\
        & \quad  + \gamma^{0}_{t, r} \big (
                                            x_v^{v, (t)},
                                            \sum_{w \sim_v v} x_w^{v, (t)} \big ) +
                  \gamma^{1}_{t, r} \big (
                                            \sum_{h} x_v^{h, (t)},
                                            \sum_{w \sim v} \sum_{h} x_w^{h, (t)} \big ) \Big) \label{eq:sun_exp_on} \\
    x_v^{k, (t+1)} =& \enspace \sigma \Big(
                  \mu^{0}_t \big ( x_v^{v,(t)} \big ) +
                  \mu^{1}_t \big ( x_k^{k,(t)} \big ) +
                  \mu^{2}_t \big ( x_v^{k,(t)} \big ) +
                  \mu^{3}_t \big ( \sum_{w} x_w^{k, (t)} \big ) + \nonumber \\
        & \quad  + \gamma^{0}_t \big( x_v^{k, (t)}, \sum_{w \sim_k v} x_w^{k, (t)} \big ) +
                  \gamma^{1}_t \big (
                                \sum_{h} x_v^{h, (t)},
                                \sum_{w \sim v} \sum_{h} x_w^{h, (t)} \big ) \Big) \label{eq:sun_exp_off}
\end{align}
\noindent where $\mu$'s are two-layer MLPs and each $\gamma$ consists of one GIN~\citep{xu2019how} convolutional layer whose internal MLP matches the dimensionality of $\mu$'s, e.g.,
\begin{align*}
    \gamma^{0}_t \big( x_v^{k, (t)}, \sum_{w \sim_k v} x_w^{k, (t)} \big ) = \hat{\mu}^{0}_t \Big ((1+\epsilon) x_v^{k, (t)} + \sum_{w \sim_k v} x_w^{k, (t)}\Big)
\end{align*}
where $\hat{\mu}^{0}_t$ is an MLP.
Details on the hyperparameter grid and architectural choices specific for each dataset are reported in the following subsections.

\subsubsection{Synthetic datasets}\label{sec:synt}
We used the dataset splits and evaluation procedure of \citet{zhao2022from}. We considered a batch size of 128 and used Adam optimiser with a learning rate of $0.001$ which is decayed by 0.5 every 50 epochs. Training is stopped after 250 epochs. We used GIN as base encoder, and tuned the number of layers in $\{5, 6\}$, and the embedding dimension in $\{64, 96, 110\}$. The depth of the ego-networks is set 2 and 3 in, respectively, the Counting and Graph Property tasks, in accordance with~\citet{zhao2022from}. Results of existing baselines reported in \Cref{tab:count-zinc,tab:simulation2} are taken from~\citet{zhao2022from}.

\subsubsection{ZINC-12k}\label{sec:zinc}
We used the same dataset splits of \citet{dwivedi2021graph}, and followed the evaluation procedure prescribed therein.
We used Mean Absolute Error as training loss and evaluation metric.
We considered batch size of 128, and Adam optimizer with initial learning rate of 0.001 which is decayed by 0.5 after the validation metric does not improve for a patience that we set of 40 epochs. Training is stopped after the learning rate reaches the value of $0.00001$, at which time we compute the test metric.
We re-trained all Subgraph GNNs to comply with the 500k parameter budget, and also to the above standard procedure in the case of GNN-AK and GNN-AK-ctx. For GNN-AK+, we reported the result $0.086 \pm ???$ specified by the authors in the rebuttal phase on Openreview, where the question marks indicate that the standard deviation was not provided. We also re-ran GNN-AK+ with the aforementioned standard procedure (learning rate decay and test at the time of early stopping) and obtained $0.091 \pm 0.011$. All Subgraph GNNs use 6 layers and ego-networks of depth 3. We use GIN as the base encoder and we set the embedding dimension to 128 for NGNN, DS- and DSS-GNN, to 100 for GNN-AK variants and to 64 for \SUN{}.
DS-GNN employs invariant deep sets layers~\citep{zaheer2017deep} of the form $\rho(\frac{1}{n} \sum_{i=1}^n \phi(x_i))$ where $x_i$ denotes the representation of subgraph $i$. We tuned $\phi$ and $\rho$ to be either a 2-layers MLP or a single layer with dimensions in $\{64, 128\}$. All other parameters are left as in the original implementation of the corresponding method.
We repeat the experiments with 10 different initialisation seeds, and report mean and standard deviation. 

\subsubsection{OGBG-molhiv dataset}
We used the evaluation procedure proposed in \citet{hu2020open}, which prescribes running each experiment with 10 different seeds and reporting the results at the epoch achieving the best validation metric. Following \citet{zhao2022from}, we disabled the subgraph aggregation components $\mu^{3}_{t, r} \big ( \sum_{w} x_w^{v, (t)} \big )$ and $\mu^{3}_t \big ( \sum_{w} x_w^{k, (t)} \big )$ in
\Cref{eq:sun_exp_on,eq:sun_exp_off}. We used the same architectural choices of \citet{zhao2022from}, namely depth-3 ego-networks, 2 GIN layers, residual connections and dropout of 0.3. We set the embedding dimension of the GNNs to be 64. Early experimentation with the common Adam optimiser revealed large fluctuations in the validation metric, which we found to considerably oscillate across optimisation steps even for small learning rate values. Thus, given the non-uniform strategy adopted to generate train, validation and test splits, we considered employing the ASAM optimiser~\citep{kwon2021asam}. ASAM considers the sharpness of the training loss in each gradient descent step, effectively driving the optimisation towards flatter minima. We left its $\rho$ parameter to its default value of $0.5$. Additionally, to further prevent overfitting, we adopted linear layers in place of MLPs, as shown in~\Cref{eq:sun_exp_hiv_on,eq:sun_exp_hiv_off}. These choices showed to greatly reduce the aforementioned fluctuations. Finally, we tuned the learning rate in $\{0.01, 0.005\}$ and the batch size in $\{32, 64\}$. The result in~\Cref{tab:ogbg-hiv-baselines} corresponds to the configuration attaining best overall validation performance (ROC AUC $85.19 \pm 0.82$), with a batch size of $32$ and a learning rate of $0.01$. We note that other configurations performed comparably well. Amongst others, the configuration with a batch size of $64$ and a learning rate of $0.005$ attained a Test ROC AUC of $80.41 \pm 0.76$ with a Validation ROC AUC of $84.87 \pm 0.55$. We remark how these \SUN{} configurations perform comparably well when contrasted with state-of-the-art GNN approaches which explicitly model (molecular) rings, crucially, \emph{both} on test and validation sets, despite the non-uniform splitting procedure. As an example, CIN~\citep{bodnar2021weisfeilerB} reports a Test ROC AUC of $80.94 \pm 0.57$ with a Validation ROC AUC of $82.77 \pm 0.99$.

\begin{align}
    x_v^{v, (t+1)} =& \enspace \sigma \Big(
                  U^{2}_{t, r} \cdot x_v^{v,(t)} +
        \gamma^{0}_{t, r} \big (
                                            x_v^{v, (t)},
                                            \sum_{w \sim_v v} x_w^{v, (t)} \big ) +
                  \gamma^{1}_{t, r} \big (
                                            \sum_{h} x_v^{h, (t)},
                                            \sum_{w \sim v} \sum_{h} x_w^{h, (t)} \big ) \Big) \label{eq:sun_exp_hiv_on} \\
    x_v^{k, (t+1)} =& \enspace \sigma \Big(
                  U^{0}_t \cdot x_v^{v,(t)} +
                  U^{1}_t \cdot x_k^{k,(t)} +
                  U^{2}_t \cdot x_v^{k,(t)} +
                  \nonumber \\
        & \quad  + \gamma^{0}_t \big( x_v^{k, (t)}, \sum_{w \sim_k v} x_w^{k, (t)} \big ) +
                  \gamma^{1}_t \big (
                                \sum_{h} x_v^{h, (t)},
                                \sum_{w \sim v} \sum_{h} x_w^{h, (t)} \big ) \Big) \label{eq:sun_exp_hiv_off}
\end{align}

\subsubsection{TUDatasets}
We followed the evaluation procedure described in \citet{xu2019how}. We conducted 10-fold cross validation and reported the performances at the epoch achieving the best averaged validation accuracy across the folds. We used the same hyperparameter grid of \citet{bevilacqua2022equivariant}. We used GIN as base encoder, setting the number of layers to 4 and tuning its embedding dimension in $\{16, 32\}$. We used Adam optimizer with batch size in $\{32,128\}$, and initial learning rate in $\{0.01, 0.001\}$, which is decayed by 0.5 every 50 epochs. Training is stopped after 350 epochs. All ego-networks are of depth 2.

\subsubsection{Generalisation from limited data}
We select each architecture by tuning the hyperparameters with the entire training and validation
sets, and choosing the configuration achieving the best validation performances.
The hyperparameter grids for \SUN{} are the ones in \Cref{sec:synt,sec:zinc}. In the 4-Cycles task, for NGNN, DS- and DSS-GNN we used the same grid but we tuned the embedding dimension in $\{64, 128, 256\}$ to allow them to have a similar number of parameters as \SUN{}. For GNN-AK variants we used the best performing parameters as provided in \citet{zhao2022from}.

\revision{
\subsubsection{Ablation study}
For every ablation we tuned the embedding dimension in $\{64,96,110,128\}$ and chose the model obtaining the lowest validation MAE while still being complaint with the 500K parameter budget. The evaluation procedure and all the other hyperparameters are as specified in \Cref{sec:zinc}.
}

\newpage

%\bibliographystyleA{plainnat}
%\bibliographyA{app_references}

\end{document}